\documentclass[10pt,twocolumn,letterpaper]{article}
\usepackage[letterpaper,textwidth=6.875in,textheight=8.875in,columnsep=0.3125in,top=1in,headsep=0in]{geometry}
\usepackage{times}
\usepackage[T1]{fontenc}
\usepackage{graphicx,booktabs,amsmath,amssymb,amsthm,mathtools}
\usepackage{capt-of}
\usepackage[font=small,labelsep=period]{caption}
\usepackage{algorithm,algpseudocode}
\usepackage{placeins}
\usepackage[numbers,sort&compress]{natbib}
\usepackage{xcolor}
\graphicspath{{figures/}}
\definecolor{linkblue}{rgb}{0.21,0.49,0.74}
\usepackage[pagebackref,breaklinks,colorlinks,allcolors=linkblue]{hyperref}
\newcommand{\certex}{\mbox{CertEx}}
\newtheorem{proposition}{Proposition}

\theoremstyle{definition}
\newtheorem{definition}{Definition}
\DeclareMathOperator*{\argmin}{arg\,min}
\newcommand{\lift}{\ell}
\newcommand{\R}{\mathbb{R}}
\date{}

\title{\certex: Exact Visual Retrieval at Scale by Certified Exclusion}
\author{Dongfang Zhao (dzhao@uw.edu)\\
University of Washington, Tacoma School of Engineering and Technology\\
{\small GitHub: \url{https://github.com/hpdic/libcertex}}\\
{\small PyPI: \url{https://pypi.org/project/certex/}}}

\begin{document}
\maketitle

\begin{abstract}
Exact $k$-nearest-neighbor (kNN) search over image descriptors and embeddings
underlies visual retrieval, near-duplicate detection, the evaluation of learned
representations, and increasingly the memory of LLM agents. However, exhaustive search computes every distance at
full precision, whereas approximate indexes return recall below one. We observe
that exact search only needs a proof that each excluded vector cannot be a
neighbor. Because such a proof only has to be conservative, it can be checked in
low precision, much as the retina examines the whole scene at low resolution. We
call this principle certified exclusion and realize it in an exact kNN method for
GPUs, which we call CertEx. Specifically, CertEx stores a lower-bounding
projection of each vector in half precision and folds a rounding certificate into
the same tensor-core dot product, so that only the few vectors that the test keeps
need exact distances. We prove that CertEx returns the exact kNN
of every query. On sixteen datasets with up to 100M vectors, CertEx is exact for every query at 7.1$\times$ to
18.0$\times$ the throughput of the fastest cuVS search on one H100 (8.8$\times$ to 11.8$\times$ on an A100) and outperforms quantized
and graph indexes whenever every query must be exact.
\end{abstract}

\begin{figure*}[t]
\centering
\includegraphics[width=\linewidth]{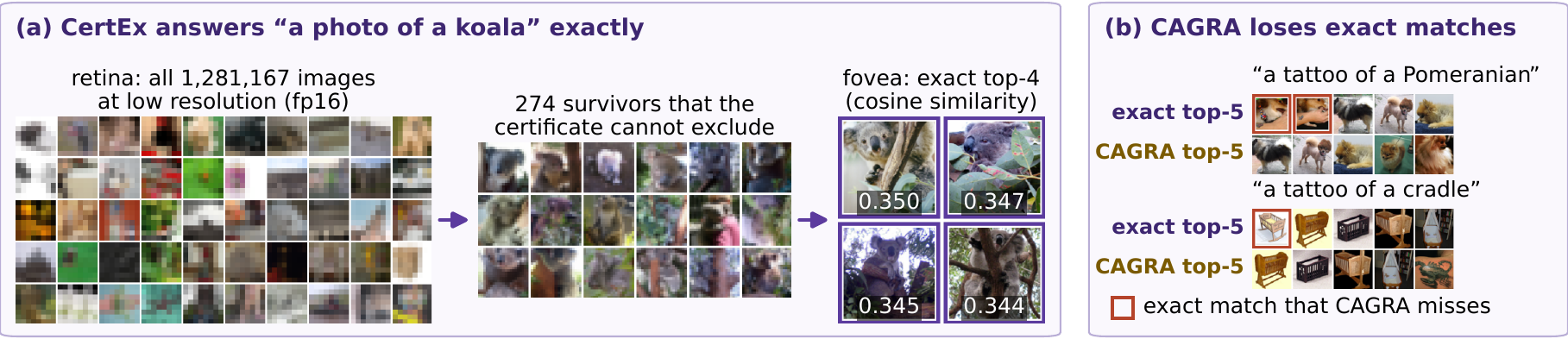}
\caption{Text-to-image retrieval over 1.28M ImageNet images: \certex\ (a); exact and CAGRA top-5, ranked from left to right (b).}
\label{fig:teaser}
\end{figure*}

\section{Introduction}
\label{sec:intro}

Exact kNN search over large datasets of visual features is a basic operation
whose errors propagate into the tasks built on top of it. For instance, image
retrieval ranks database descriptors such as SIFT~\cite{lowe2004sift} by their
distance to a query descriptor, so that a missed neighbor removes a relevant
image from the ranked list. Similarly, near-duplicate detection in web-scale
training corpora such as LAION~\cite{schuhmann2022laion} compares every image
embedding with its nearest neighbors~\cite{pizzi2022sscd}, so that a missed
neighbor keeps a duplicate pair in the training data. Likewise, the kNN
classifier that evaluates self-supervised
representations~\cite{wu2018instance,caron2021dino,oquab2024dinov2} reports the
search error together with the quality of the representation whenever its
neighbors are approximate. Moreover, LLMs and agents increasingly read long contexts and
memories by nearest-neighbor
search~\cite{khandelwal2020knnlm,wu2022memorizing,wang2023longmem,xu2024retrievallong,lewis2020rag,borgeaud2022retro,asai2024selfrag,gutierrez2024hipporag},
so that a missed neighbor drops a relevant memory. Finally, every benchmark of approximate search
measures recall against exact neighbors and therefore inherits any error of the
exhaustive search that computes them.

Existing systems respond to the cost of exact search in two ways, both of which
leave a gap. First, exhaustive search on
GPUs~\cite{johnson2021faiss,douze2024faisslib} computes every query-to-vector
distance at full precision. The throughput of exhaustive search is therefore
bounded by one full-precision distance per vector, and its capacity is bounded by
device memory, which must hold every vector at full precision. Second, approximate
indexes based on quantization~\cite{jegou2011pq,guo2020scann,gao2024rabitq} or
proximity graphs~\cite{malkov2020hnsw,subramanya2019diskann,ootomo2024cagra} are
much faster. However, these indexes trade recall for speed by design and rely on
an index that must be built before the first query and repaired when the
dataset changes~\cite{zhao2026sivf}. Practitioners therefore choose between exact results at high
cost and fast results whose errors are not controlled per query.

To close this gap, we follow the division of labor in human vision: the
periphery examines the whole visual field in parallel at low resolution, whereas
high resolution is spent only on the few locations that attention
selects~\cite{curcio1990photoreceptor,treisman1980feature}. Transferred to kNN
search, this principle examines every database vector at low precision in one
parallel pass and spends full precision only on the few vectors that the pass
cannot exclude. However, a low-precision distance is not exact. Consequently,
unless every low-precision decision is certified to be conservative, the pass
loses neighbors in the same way as an approximate index.

We address this difficulty with a principle that we call \emph{certified
exclusion}: a test may run in any arithmetic as long as it never excludes a vector
within the threshold, so that verifying the kept vectors yields the exact
neighbors. We realize this principle in an exact search method that we call
\certex. Specifically, \certex\ stores a lower-bounding projection of every
vector in half precision, together with four coordinates that fold the rounding and accumulation
errors~\cite{higham2002accuracy,blanchard2020tensorcore} of the tensor-core dot
product into the product itself. As a result, one dot product and one comparison
per vector decide whether the vector can still be a neighbor, without a top-$k$
selection and without a false dismissal. By analogy with human vision, we call
this half-precision copy of the database the \emph{retina} and the exact
verification of the vectors that it keeps the \emph{fovea}. Unlike
quantization~\cite{jegou2011pq,gao2024rabitq} and
graph~\cite{malkov2020hnsw,ootomo2024cagra} indexes, which decide where a query
should look, the retina examines every vector and backs each exclusion by a
proof.

For instance, Figure~\ref{fig:teaser}(a) follows one text query through \certex:
the certified test keeps 274 of the 1,281,167 ImageNet~\cite{deng2009imagenet}
training images, from which the fovea returns the exact best matches. By
comparison, CAGRA~\cite{ootomo2024cagra} loses the best matching image for the two
prompts of Figure~\ref{fig:teaser}(b), even at its most accurate configuration.
Beyond these examples, \certex\ returns exact results on one H100 GPU with
7.1$\times$ to 18.0$\times$ the throughput of the fastest
cuVS~\cite{nvidia2025cuvs} fp32 configuration on sixteen datasets, with a speedup
that grows with the database size. \certex\ also outperforms the GPU indexes
IVF-Flat~\cite{sivic2003videogoogle}, IVF-PQ~\cite{jegou2011pq},
IVF-RaBitQ~\cite{gao2024rabitq}, and CAGRA~\cite{ootomo2024cagra} whenever every
query must be exact.

In summary, we make three contributions.
\begin{itemize}
\item We propose certified exclusion, a new principle for exact search that
replaces the ranking of all $N$ vectors by one keep-or-exclude decision per
vector. Because the decision never excludes a vector within the threshold, even
when it runs in half precision, verifying only the kept vectors returns the exact
neighbors.
\item We design \certex\ as an exact kNN method that realizes certified exclusion
on GPU tensor cores~\cite{blanchard2020tensorcore}. We call its data structure
the retina, which is a single half-precision matrix with one row per database
vector. Each row holds a lower-bounding projection of one vector and four
coordinates of a rounding certificate. We prove that its one-pass
tensor-core test never discards a true neighbor, answers range queries and
similarity self-joins (all pairs within a radius) exactly, and keeps survivors between two balls whose gap shrinks as the
principal coordinates capture more of the data.
\item We evaluate \certex\ on sixteen datasets with up to 100M vectors, including
the embeddings of seven vision encoders such as CLIP~\cite{radford2021clip},
DINOv2~\cite{oquab2024dinov2}, and DINOv3~\cite{simeoni2025dinov3}. \certex\
answers every query exactly at 7.1$\times$ to 18.0$\times$ the throughput of the
fastest exhaustive search of NVIDIA cuVS~\cite{nvidia2025cuvs} and is faster than
IVF-Flat~\cite{sivic2003videogoogle}, IVF-PQ~\cite{jegou2011pq},
IVF-RaBitQ~\cite{gao2024rabitq}, and CAGRA~\cite{ootomo2024cagra} whenever every
query must be exact. On the 12.8M web images of DataComp~\cite{gadre2023datacomp}, \certex\ is
exact for every query and finds all 31.4 billion duplicate pairs, which makes
exact deduplication~\cite{abbas2023semdedup} practical at web scale.
\end{itemize}

\section{Related Work}
\label{sec:related}

\begin{figure*}[t]
\centering
\includegraphics[width=\linewidth]{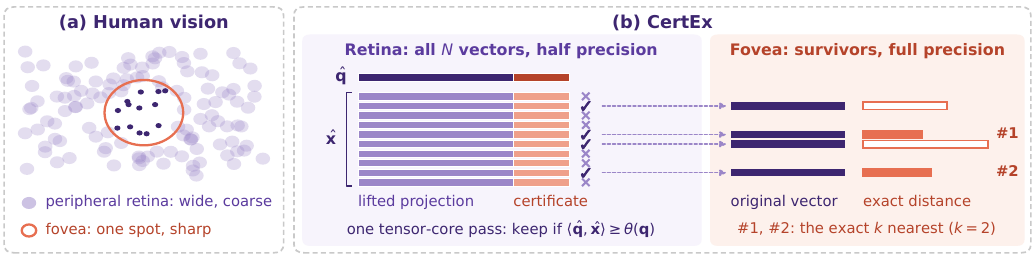}
\caption{Overview of \certex\ and the visual system that motivates it.}
\label{fig:overview}
\end{figure*}

\paragraph{Exact search with lower bounds.}
Exact methods avoid computing every distance by excluding the vectors whose
lower bound exceeds the current neighbor distance. For instance, the k-d
tree~\cite{bentley1975kdtree} and the cover tree~\cite{beygelzimer2006cover}
bound whole regions, whereas AESA~\cite{vidal1986aesa} bounds single points by
the triangle inequality with precomputed pivots. However, the pruning power of such bounds decays with the
dimension~\cite{weber1998vafile,beyer1999meaningful}. The GEMINI
framework~\cite{faloutsos1994gemini} therefore moves the bound into a
contractive feature space with a proof of no false dismissal, which the
VA-file~\cite{weber1998vafile} turns into a sequential scan of compact
approximations. In contrast, random projections preserve distances only up to a
distortion factor~\cite{johnson1984jl} and support only approximate
hashing~\cite{indyk1998lsh,datar2004pstable}, whereas exhaustive GPU
search~\cite{garcia2008gpuknn,johnson2021faiss} avoids bounds altogether. In our
terms, GEMINI and the VA-file are certified exclusion tests in full precision whose
survivors are still ranked against a running candidate set; in contrast, \certex\
certifies a half-precision tensor-core decision inside the same product and
replaces the ranking by one comparison per vector.

\paragraph{Approximate indexes.}
Approximate indexes trade exactness for fewer examined vectors. Specifically,
product quantization~\cite{jegou2011pq,ge2013opq,babenko2012imi,guo2020scann}
compresses vectors and probes a few partitions, to which
RaBitQ~\cite{gao2024rabitq,gao2025extrabitq} adds a probabilistic error bound. In
addition, FLANN~\cite{muja2014flann} automates the choice among tree and k-means
partitions.
In contrast, graph indexes such as HNSW~\cite{malkov2020hnsw},
DiskANN~\cite{subramanya2019diskann}, SONG~\cite{zhao2020song},
GGNN~\cite{groh2023ggnn}, and CAGRA~\cite{ootomo2024cagra} walk a proximity graph
toward the query. Benchmarks~\cite{aumuller2020annbench,simhadri2022bigann}
accordingly rank these methods by recall at a given throughput, whereas \certex\
examines every vector and returns exact results.

\paragraph{Nearest neighbors in visual recognition.}
Visual recognition uses nearest neighbors as a component whose errors enter the
task metric. For instance, object
retrieval~\cite{sivic2003videogoogle,philbin2007oxford,jegou2008hamming,arandjelovic2016netvlad}
is evaluated against exhaustive ground truth~\cite{radenovic2018roxford}.
Similarly, copy detection~\cite{pizzi2022sscd,douze2021isc} and dataset
curation~\cite{abbas2023semdedup,gadre2023datacomp} remove near and semantic
duplicates from web-scale corpora. In representation learning, weighted kNN classifiers evaluate
self-supervised representations~\cite{caron2021dino,oquab2024dinov2,simeoni2025dinov3}.
Retrieval-augmented classification~\cite{long2022rac} even feeds the retrieved
neighbors into the model. In practice, these pipelines obtain their neighbors from
approximate indexes, cluster-restricted comparisons, or kNN graphs of fixed
degree. In contrast, \certex\ returns exact neighbors faster than exhaustive
search and answers deduplication with an exact similarity self-join
(Section~\ref{sec:join}).

\paragraph{Foveation and certified low-precision arithmetic.}
Vision models allocate resolution unevenly, for instance through recurrent
glimpses~\cite{mnih2014ram}, foveal-peripheral segmentation~\cite{abid2026gazemo},
or foveated rendering~\cite{guenter2012foveated,itti1998saliency}. Numerical
computing allocates precision in the same way, for instance in mixed-precision
training~\cite{micikevicius2018mixed}. Computational geometry goes further: its
floating-point filters evaluate a predicate in fast arithmetic together with an
error bound~\cite{higham2002accuracy,blanchard2020tensorcore} and fall back to
exact arithmetic only near zero~\cite{fortune1993exact,shewchuk1997adaptive}.
Following this idea, \certex\ folds the error bound into the same half-precision
product, so that every low-precision decision of \certex\ carries a proof of no
false dismissal, unlike approximate foveation.

\section{Methodology}
\label{sec:method}

\subsection{Overview}
\label{sec:overview}

Exact kNN search must examine all $N$ database vectors, although only the few
vectors near the query need full precision. Human vision solves the same problem
by dividing its work between two parts of the
retina~\cite{curcio1990photoreceptor} (Figure~\ref{fig:overview}(a)): the
periphery covers the whole scene at low resolution, whereas the fovea sees only
one spot at full resolution. Our exact kNN method divides its work in the same
way (Figure~\ref{fig:overview}(b)). We call this method \certex, which stands for
certified exclusion.

The left half of Figure~\ref{fig:overview}(b) shows the retina of \certex, which
stores one half-precision vector $\hat{\mathbf{x}}$ for each of the $N$ database
vectors $\mathbf{x}$. We call $\hat{\mathbf{x}}$ the \emph{retina vector} of
$\mathbf{x}$. The first part of $\hat{\mathbf{x}}$ keeps a few principal coordinates of
$\mathbf{x}$ and summarizes the rest by one norm. We call this part the
\emph{lifted projection} of $\mathbf{x}$ (Section~\ref{sec:lift}). The inner
product of two lifted projections determines a lower bound on the distance
between their vectors.
Although half precision is the fastest arithmetic of GPU tensor
cores~\cite{blanchard2020tensorcore}, rounding to half precision can
overestimate the distance of a true neighbor. To guard against this error, the
second part of $\hat{\mathbf{x}}$ records how large the rounding error of
$\hat{\mathbf{x}}$ can be.
We call this part the \emph{certificate} of $\hat{\mathbf{x}}$. With the certificate, the test excludes a vector
only if the vector would be excluded even under the worst possible rounding.
Moreover, the tensor cores evaluate this worst case within the same product
(Section~\ref{sec:retina}). Concretely, a tensor-core pass over the retina keeps a
vector $\mathbf{x}$
only if $\langle\hat{\mathbf{q}},\hat{\mathbf{x}}\rangle\ge\theta(\mathbf{q})$,
where the retina vector $\hat{\mathbf{q}}$ encodes the query $\mathbf{q}$ in
the same form as $\hat{\mathbf{x}}$ and the threshold $\theta(\mathbf{q})$ follows from an upper
bound $U$ on the distance of the $k$-th nearest neighbor (Section~\ref{sec:fovea}).
Because $U$ is needed before the pass, \certex\ obtains it from a few sampled
vectors. The test of each
vector therefore depends on no other database vector, so that one pass over the
retina decides all $N$ vectors without ranking them. We call this pass the
\emph{retina pass}. As Figure~\ref{fig:overview}(b) shows, the pass excludes most
vectors (crosses) but never a true neighbor; we call the kept vectors (checks)
\emph{survivors}.

The right half of Figure~\ref{fig:overview}(b) shows the fovea, which takes over
after the retina pass. For each survivor, the fovea reads the original vector in
full precision and computes the exact distance between that vector and the query.
As a result, the $k$ survivors with the smallest exact distances are the exact
$k$ nearest neighbors of $\mathbf{q}$, marked \#1 and \#2 for $k=2$ in the
figure.

\subsection{Certified Exclusion}
\label{sec:setup}

Consider a database $\mathcal{X}\coloneqq\{\mathbf{x}_1,\dots,\mathbf{x}_N\}\subset\R^D$
and a query $\mathbf{q}\in\R^D$. Exact kNN search returns the index set
$\mathcal{N}_k(\mathbf{q})\coloneqq\argmin_{\mathcal{S}\subseteq[N],|\mathcal{S}|=k}
\sum_{i\in\mathcal{S}}\|\mathbf{q}-\mathbf{x}_i\|$, whose farthest member lies at
the distance $r_k(\mathbf{q})$ from $\mathbf{q}$. However,
computing $\mathcal{N}_k(\mathbf{q})$ requires exact distances only for the vectors
that may be neighbors. Definition~\ref{def:exclusion} therefore formalizes what a
test that excludes the other vectors must guarantee, whatever arithmetic it
uses.

\begin{definition}[Certified exclusion]
\label{def:exclusion}
A test $T(\mathbf{q},\mathbf{x},U)\in\{\textup{keep},\textup{exclude}\}$ is a
\emph{certified exclusion test} if its computed output keeps every vector $\mathbf{x}$
with $\|\mathbf{q}-\mathbf{x}\|\le U$, for every query $\mathbf{q}$, vector
$\mathbf{x}$, and threshold $U\ge0$.
\end{definition}

Definition~\ref{def:exclusion} constrains only the direction of the errors of a
test. Specifically, a certified exclusion test may keep a vector farther than
$U$ at the cost of one exact distance, but it never excludes a vector within
$U$.

\begin{proposition}[Exact search by certified exclusion]
\label{prop:framework}
For a certified exclusion test $T$ and a threshold $U\ge r_k(\mathbf{q})$, the
$k$ vectors with the smallest exact distances among the vectors that $T$ keeps
form $\mathcal{N}_k(\mathbf{q})$.
\end{proposition}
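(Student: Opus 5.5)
The plan is to show that the kept set $\mathcal{K}\coloneqq\{i\in[N]: T(\mathbf{q},\mathbf{x}_i,U)=\textup{keep}\}$ contains a true $k$-nearest set. Once that holds, the minimization that defines $\mathcal{N}_k(\mathbf{q})$ over all $k$-subsets of $[N]$ has a minimizer inside $\mathcal{K}$. Restricting the search to $\mathcal{K}$ therefore cannot change the answer.

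The first step is containment. Every $i\in\mathcal{N}_k(\mathbf{q})$ satisfies $\|\mathbf{q}-\mathbf{x}_i\|\le r_k(\mathbf{q})\le U$, since $r_k(\mathbf{q})$ is the distance of the farthest member. By Definition~\ref{def:exclusion}, applied with this $U\ge0$, $T$ keeps every such $\mathbf{x}_i$. Hence $\mathcal{N}_k(\mathbf{q})\subseteq\mathcal{K}$, and in particular $|\mathcal{K}|\ge k$, so a set of $k$ smallest survivors exists. More strongly, $\mathcal{K}$ contains the whole closed ball $\{i:\|\mathbf{q}-\mathbf{x}_i\|\le r_k(\mathbf{q})\}$. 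This includes all vectors tied at distance $r_k(\mathbf{q})$.

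The second step compares the two minimizations. Any $k$-subset $\mathcal{S}\subseteq\mathcal{K}$ is also a $k$-subset of $[N]$, so the restricted minimum of $\sum_{i\in\mathcal{S}}\|\mathbf{q}-\mathbf{x}_i\|$ is at least the global minimum. The global minimizer $\mathcal{N}_k(\mathbf{q})$ is feasible for the restricted problem, so the two minima coincide. The $k$ survivors with the smallest exact distances minimize exactly this restricted sum, because the sum of $k$ values is minimized by taking the $k$ smallest. They are therefore a global minimizer, that is, they form $\mathcal{N}_k(\mathbf{q})$.

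The only delicate point is ties at distance $r_k(\mathbf{q})$, where the $\argmin$ is not unique. I would handle this by reading $\mathcal{N}_k(\mathbf{q})$ as any minimizer under a fixed tie-breaking rule, for example by index. Because $\mathcal{K}$ contains every vector at distance at most $r_k(\mathbf{q})$, the same tie-breaking applied inside $\mathcal{K}$ selects the same set as it would over all of $[N]$. Apart from this, the proof uses only Definition~\ref{def:exclusion} and needs no property of the arithmetic inside $T$.
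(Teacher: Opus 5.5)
Your proof is correct and follows the paper's argument: Definition~\ref{def:exclusion} with $U\ge r_k(\mathbf{q})$ places $\mathcal{N}_k(\mathbf{q})$ inside the kept set, and the $k$ smallest exact distances over any superset of $\mathcal{N}_k(\mathbf{q})$ are the $k$ smallest over $\mathcal{X}$. Your comparison of the restricted and global minimizations, together with your tie-breaking remark, only makes explicit what the paper's one-line second step leaves implicit.
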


\begin{proof}
Every $\mathbf{x}\in\mathcal{N}_k(\mathbf{q})$ satisfies
$\|\mathbf{q}-\mathbf{x}\|\le r_k(\mathbf{q})\le U$ and is therefore kept by
Definition~\ref{def:exclusion}. The $k$ smallest exact distances among a superset
of $\mathcal{N}_k(\mathbf{q})$ are the $k$ smallest over $\mathcal{X}$.
\end{proof}

\subsection{Lifted Lower Bound}
\label{sec:lift}

Proposition~\ref{prop:framework} leaves open which certified exclusion test to
use. \certex\ needs a test that reads few bytes, because the retina pass reads
every database vector and its cost grows with the number of bytes per vector.
\certex\ therefore keeps a few principal coordinates of each vector and one more
coordinate that summarizes the rest, which together still give a lower bound on
the distance. Specifically, consider a center
$\mathbf{c}\in\R^D$ and a matrix $\mathbf{V}\in\R^{m\times D}$ whose orthonormal
rows are the $m$ leading principal directions of a sample of $\mathcal{X}$. We
call $m$ the \emph{width} of the retina. The \emph{lifted projection} of a vector
keeps its $m$ principal coordinates and summarizes
the remaining $D-m$ coordinates by their norm:
\begin{equation}
\lift(\mathbf{x}) \coloneqq \begin{bmatrix} \mathbf{V}(\mathbf{x}-\mathbf{c}) \\
\|(\mathbf{I}-\mathbf{V}^{\top}\mathbf{V})(\mathbf{x}-\mathbf{c})\| \end{bmatrix}
\in\R^{m+1}.
\label{eq:lift}
\end{equation}

\begin{proposition}[Lower bound]
\label{prop:lb}
For all $\mathbf{q},\mathbf{x}\in\R^D$,
$\|\lift(\mathbf{q})-\lift(\mathbf{x})\|\le\|\mathbf{q}-\mathbf{x}\|$.
\end{proposition}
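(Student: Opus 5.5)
The plan is to reduce Proposition~\ref{prop:lb} to the Pythagorean decomposition of $\mathbb{R}^D$ into the row space of $\mathbf{V}$ and its orthogonal complement, combined with the reverse triangle inequality for the last coordinate. Write $\mathbf{P}\coloneqq\mathbf{V}^{\top}\mathbf{V}$. Because the rows of $\mathbf{V}$ are orthonormal, $\mathbf{V}\mathbf{V}^{\top}=\mathbf{I}_m$, so $\mathbf{P}$ is the orthogonal projector onto the row space and $\mathbf{I}-\mathbf{P}$ is the orthogonal projector onto its complement. Set $\mathbf{a}\coloneqq\mathbf{q}-\mathbf{c}$ and $\mathbf{b}\coloneqq\mathbf{x}-\mathbf{c}$, so that $\mathbf{q}-\mathbf{x}=\mathbf{a}-\mathbf{b}$. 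The center therefore cancels and plays no further role.

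First, I would split the squared distance as
\[
\|\mathbf{a}-\mathbf{b}\|^2=\|\mathbf{P}(\mathbf{a}-\mathbf{b})\|^2+\|(\mathbf{I}-\mathbf{P})(\mathbf{a}-\mathbf{b})\|^2 .
\]
This holds because the two components are orthogonal.

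Second, I would identify the first term with the first $m$ coordinates of the lift. Since $\|\mathbf{P}\mathbf{y}\|^2=\mathbf{y}^{\top}\mathbf{V}^{\top}\mathbf{V}\mathbf{V}^{\top}\mathbf{V}\mathbf{y}=\|\mathbf{V}\mathbf{y}\|^2$, we get $\|\mathbf{P}(\mathbf{a}-\mathbf{b})\|=\|\mathbf{V}\mathbf{a}-\mathbf{V}\mathbf{b}\|$. The right-hand side is exactly the norm of the difference of the first $m$ entries of $\lift(\mathbf{q})$ and $\lift(\mathbf{x})$.

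Third, I would bound the last coordinate. By linearity and the reverse triangle inequality,
\[
\bigl|\,\|(\mathbf{I}-\mathbf{P})\mathbf{a}\|-\|(\mathbf{I}-\mathbf{P})\mathbf{b}\|\,\bigr|\le\|(\mathbf{I}-\mathbf{P})(\mathbf{a}-\mathbf{b})\| .
\]
The left-hand side is the absolute difference of the $(m+1)$-th entries of the two lifts.

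Squaring and adding the two pieces then gives $\|\lift(\mathbf{q})-\lift(\mathbf{x})\|^2\le\|\mathbf{q}-\mathbf{x}\|^2$, and taking square roots completes the proof.

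There is no substantial obstacle. The only point needing care is the identity $\|\mathbf{P}\mathbf{y}\|=\|\mathbf{V}\mathbf{y}\|$, which relies on the orthonormality of the rows of $\mathbf{V}$ (that is, $\mathbf{V}\mathbf{V}^{\top}=\mathbf{I}_m$). It does not require $\mathbf{V}$ to hold exact principal directions, so the bound holds for any orthonormal $\mathbf{V}$ and any center $\mathbf{c}$.
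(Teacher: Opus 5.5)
Your proposal is correct and takes essentially the same route as the paper. Both arguments split $\mathbf{q}-\mathbf{x}$ orthogonally along the row space of $\mathbf{V}$, use $\mathbf{V}\mathbf{V}^{\top}=\mathbf{I}_m$ to get $\|\mathbf{V}^{\top}\mathbf{V}\mathbf{y}\|=\|\mathbf{V}\mathbf{y}\|$, and apply the reverse triangle inequality to the residual coordinate. The only difference is presentation: the paper carries out the same steps on the separately projected components $\mathbf{u}_1,\mathbf{u}_2,\mathbf{v}_1,\mathbf{v}_2$, whereas you project the difference directly.
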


\begin{proof}
The entry $(i,j)$ of $\mathbf{V}\mathbf{V}^{\top}$ is the inner product of rows $i$
and $j$ of $\mathbf{V}$, which is 1 for $i=j$ and 0 otherwise because the rows are
orthonormal. Hence $\mathbf{V}\mathbf{V}^{\top}=\mathbf{I}_m$. We write
$\mathbf{\Pi}\coloneqq\mathbf{V}^{\top}\mathbf{V}$ and prove the claim in five steps.

Step 1 splits the centered vectors. We define
$\mathbf{u}\coloneqq\mathbf{q}-\mathbf{c}$, $\mathbf{v}\coloneqq\mathbf{x}-\mathbf{c}$,
$\mathbf{u}_1\coloneqq\mathbf{\Pi}\mathbf{u}$, $\mathbf{u}_2\coloneqq\mathbf{u}-\mathbf{u}_1$,
$\mathbf{v}_1\coloneqq\mathbf{\Pi}\mathbf{v}$, and
$\mathbf{v}_2\coloneqq\mathbf{v}-\mathbf{v}_1$. Two facts follow. First, with
$\mathbf{a}\coloneqq\mathbf{V}(\mathbf{u}-\mathbf{v})\in\R^m$,
\begin{equation}
\mathbf{u}_1-\mathbf{v}_1=\mathbf{V}^{\top}\mathbf{V}\mathbf{u}-\mathbf{V}^{\top}\mathbf{V}\mathbf{v}
=\mathbf{V}^{\top}\mathbf{V}(\mathbf{u}-\mathbf{v})=\mathbf{V}^{\top}\mathbf{a}.
\label{eq:lb-a}
\end{equation}
Second, since $\mathbf{V}\mathbf{V}^{\top}=\mathbf{I}_m$,
\begin{equation}
\mathbf{V}\mathbf{u}_2=\mathbf{V}\mathbf{u}-\mathbf{V}\mathbf{V}^{\top}\mathbf{V}\mathbf{u}
=\mathbf{V}\mathbf{u}-\mathbf{V}\mathbf{u}=\mathbf{0}
\label{eq:lb-b}
\end{equation}
and likewise $\mathbf{V}\mathbf{v}_2=\mathbf{0}$.

Step 2 writes the lifted projections in these parts. Eq.~\eqref{eq:lift} applied
to $\mathbf{q}$ reads
$\lift(\mathbf{q})=[\mathbf{V}\mathbf{u};\|(\mathbf{I}-\mathbf{V}^{\top}\mathbf{V})\mathbf{u}\|]$,
because $\mathbf{q}-\mathbf{c}=\mathbf{u}$. Its first block is
$\mathbf{V}\mathbf{u}=\mathbf{V}(\mathbf{u}_1+\mathbf{u}_2)=\mathbf{V}\mathbf{u}_1+\mathbf{V}\mathbf{u}_2=\mathbf{V}\mathbf{u}_1$
by Eq.~\eqref{eq:lb-b}. Its last coordinate is $\|\mathbf{u}_2\|$, because
$(\mathbf{I}-\mathbf{V}^{\top}\mathbf{V})\mathbf{u}=\mathbf{u}-\mathbf{\Pi}\mathbf{u}=\mathbf{u}-\mathbf{u}_1=\mathbf{u}_2$.
The same computation for $\mathbf{x}$ gives
\begin{equation*}
\lift(\mathbf{q})=\begin{bmatrix}\mathbf{V}\mathbf{u}_1\\\|\mathbf{u}_2\|\end{bmatrix},
\qquad
\lift(\mathbf{x})=\begin{bmatrix}\mathbf{V}\mathbf{v}_1\\\|\mathbf{v}_2\|\end{bmatrix},
\end{equation*}
and subtracting the two gives
\begin{equation*}
\lift(\mathbf{q})-\lift(\mathbf{x})=\begin{bmatrix}\mathbf{V}(\mathbf{u}_1-\mathbf{v}_1)\\\|\mathbf{u}_2\|-\|\mathbf{v}_2\|\end{bmatrix}.
\end{equation*}
The squared norm of a stacked vector is the sum of the squared norms of its
blocks, so that
\begin{equation}
\|\lift(\mathbf{q})-\lift(\mathbf{x})\|^2
=\|\mathbf{V}(\mathbf{u}_1-\mathbf{v}_1)\|^2+(\|\mathbf{u}_2\|-\|\mathbf{v}_2\|)^2.
\label{eq:lb-i}
\end{equation}

Step 3 shows that $\mathbf{V}$ keeps the norm of $\mathbf{u}_1-\mathbf{v}_1$. By Eq.~\eqref{eq:lb-a},
$\mathbf{V}(\mathbf{u}_1-\mathbf{v}_1)=\mathbf{V}\mathbf{V}^{\top}\mathbf{a}=\mathbf{a}$ and
$\|\mathbf{u}_1-\mathbf{v}_1\|^2=\mathbf{a}^{\top}\mathbf{V}\mathbf{V}^{\top}\mathbf{a}=\|\mathbf{a}\|^2$.
Hence
\begin{equation}
\|\mathbf{u}_1-\mathbf{v}_1\|^2=\|\mathbf{V}(\mathbf{u}_1-\mathbf{v}_1)\|^2.
\label{eq:lb-ii}
\end{equation}

Step 4 applies the Pythagorean theorem. The center cancels in the difference,
$\mathbf{q}-\mathbf{x}=(\mathbf{q}-\mathbf{c})-(\mathbf{x}-\mathbf{c})=\mathbf{u}-\mathbf{v}$,
which splits as $\mathbf{y}+\mathbf{z}$ with
$\mathbf{y}\coloneqq\mathbf{u}_1-\mathbf{v}_1$ and
$\mathbf{z}\coloneqq\mathbf{u}_2-\mathbf{v}_2$. By Eqs.~\eqref{eq:lb-a} and~\eqref{eq:lb-b}, the two parts are
orthogonal:
\begin{equation*}
\langle\mathbf{y},\mathbf{z}\rangle=\langle\mathbf{V}^{\top}\mathbf{a},\mathbf{z}\rangle
=\mathbf{a}^{\top}\mathbf{V}\mathbf{z}
=\mathbf{a}^{\top}(\mathbf{V}\mathbf{u}_2-\mathbf{V}\mathbf{v}_2)=0.
\end{equation*}
Expanding $\|\mathbf{y}+\mathbf{z}\|^2=\|\mathbf{y}\|^2+2\langle\mathbf{y},\mathbf{z}\rangle+\|\mathbf{z}\|^2$
therefore gives
\begin{equation}
\|\mathbf{q}-\mathbf{x}\|^2=\|\mathbf{u}_1-\mathbf{v}_1\|^2+\|\mathbf{u}_2-\mathbf{v}_2\|^2.
\label{eq:lb-iii}
\end{equation}

Step 5 applies the reverse triangle inequality. The triangle inequality gives
$\|\mathbf{u}_2\|\le\|\mathbf{u}_2-\mathbf{v}_2\|+\|\mathbf{v}_2\|$ and
$\|\mathbf{v}_2\|\le\|\mathbf{u}_2-\mathbf{v}_2\|+\|\mathbf{u}_2\|$, so that
$\|\mathbf{u}_2-\mathbf{v}_2\|\ge|\|\mathbf{u}_2\|-\|\mathbf{v}_2\||$. Squaring both
nonnegative sides gives
\begin{equation}
\|\mathbf{u}_2-\mathbf{v}_2\|^2\ge(\|\mathbf{u}_2\|-\|\mathbf{v}_2\|)^2.
\label{eq:lb-iv}
\end{equation}

Combining the four relations,
\begin{equation*}
\begin{split}
\|\mathbf{q}-\mathbf{x}\|^2
&\overset{\eqref{eq:lb-iii}}{=}\|\mathbf{u}_1-\mathbf{v}_1\|^2+\|\mathbf{u}_2-\mathbf{v}_2\|^2\\
&\overset{\eqref{eq:lb-iv}}{\ge}\|\mathbf{u}_1-\mathbf{v}_1\|^2+(\|\mathbf{u}_2\|-\|\mathbf{v}_2\|)^2\\
&\overset{\eqref{eq:lb-ii}}{=}\|\mathbf{V}(\mathbf{u}_1-\mathbf{v}_1)\|^2+(\|\mathbf{u}_2\|-\|\mathbf{v}_2\|)^2\\
&\overset{\eqref{eq:lb-i}}{=}\|\lift(\mathbf{q})-\lift(\mathbf{x})\|^2.
\end{split}
\end{equation*}
Taking the square root of both nonnegative sides proves the claim.
\end{proof}

Proposition~\ref{prop:lb} turns the lifted projection into an exclusion test.
Specifically, if $\|\lift(\mathbf{q})-\lift(\mathbf{x})\|>U$, then $\|\mathbf{q}-\mathbf{x}\|>U$ as
well, so that excluding $\mathbf{x}$ never removes a vector within $U$. However,
the test must take a form that tensor cores evaluate quickly. This form is an
inner product, because tensor cores multiply matrices and thus compute the inner
products of many pairs of vectors at once. We therefore rewrite the test in terms
of one inner product per pair. To this end, expanding the squared norm of a
difference with $a(\mathbf{x})\coloneqq\|\lift(\mathbf{x})\|^2$ gives
\begin{equation}
\|\lift(\mathbf{q})-\lift(\mathbf{x})\|^2 = a(\mathbf{q})+a(\mathbf{x})
-2\langle\lift(\mathbf{q}),\lift(\mathbf{x})\rangle .
\label{eq:inner}
\end{equation}
Since both sides of the test $\|\lift(\mathbf{q})-\lift(\mathbf{x})\|\le U$ are
nonnegative, the test holds exactly when the squared test holds. By
Eq.~\eqref{eq:inner}, the squared test reads
\begin{equation*}
a(\mathbf{q})+a(\mathbf{x})-2\langle\lift(\mathbf{q}),\lift(\mathbf{x})\rangle\le U^2.
\end{equation*}
Moving the inner product to the right-hand side and $U^2$ to the left-hand side
gives $a(\mathbf{q})+a(\mathbf{x})-U^2\le2\langle\lift(\mathbf{q}),\lift(\mathbf{x})\rangle$.
Dividing by 2 and moving $a(\mathbf{x})/2$ to the right-hand side finally gives
the equivalent test
\begin{equation}
\langle\lift(\mathbf{q}),\lift(\mathbf{x})\rangle-\frac{a(\mathbf{x})}{2}
\ge\frac{a(\mathbf{q})-U^2}{2}.
\label{eq:lbtest}
\end{equation}
In this form, the right-hand side depends only on the query, so that \certex\
computes it once per query. Similarly, on the left-hand side, $a(\mathbf{x})$ depends only on the database
vector and is computed once when the retina is built. The test of one pair
therefore costs one inner product and one comparison, so that one matrix product
on the tensor cores tests all pairs.

\subsection{Certified Half-Precision Retina}
\label{sec:retina}

To evaluate Eq.~\eqref{eq:lbtest} for all vectors in one tensor-core pass,
\certex\ stores each lifted projection in half precision inside its retina vector.
However, rounding to half precision may underestimate the inner product, which
overestimates the bound and can discard a true neighbor. To bound this effect,
denote by $\tilde{\lift}(\cdot)$ the rounding of $\lift(\cdot)$ to the nearest
half-precision numbers and by
$\delta(\cdot)\coloneqq\|\lift(\cdot)-\tilde{\lift}(\cdot)\|$ its rounding error.
Splitting the inner product into its rounded part and two error terms and
applying the Cauchy--Schwarz inequality to the error terms gives
\begin{equation}
\begin{split}
\langle\lift(\mathbf{q}),\lift(\mathbf{x})\rangle \le{}&
\langle\tilde{\lift}(\mathbf{q}),\tilde{\lift}(\mathbf{x})\rangle\\
&+\delta(\mathbf{q})\|\lift(\mathbf{x})\|+\|\tilde{\lift}(\mathbf{q})\|\delta(\mathbf{x}).
\end{split}
\label{eq:cert}
\end{equation}
Eq.~\eqref{eq:cert} follows from the exact split
\begin{equation*}
\begin{split}
\langle\lift(\mathbf{q}),\lift(\mathbf{x})\rangle
={}&\langle\tilde{\lift}(\mathbf{q}),\tilde{\lift}(\mathbf{x})\rangle
+\langle\lift(\mathbf{q})-\tilde{\lift}(\mathbf{q}),\lift(\mathbf{x})\rangle\\
&+\langle\tilde{\lift}(\mathbf{q}),\lift(\mathbf{x})-\tilde{\lift}(\mathbf{x})\rangle,
\end{split}
\end{equation*}
since the Cauchy--Schwarz inequality bounds the last two terms by
$\delta(\mathbf{q})\|\lift(\mathbf{x})\|$ and
$\|\tilde{\lift}(\mathbf{q})\|\delta(\mathbf{x})$.
A second error arises in the tensor cores, which multiply half-precision
operands exactly but accumulate the products in fp32. This accumulation adds an
error of at most $\gamma$ times the sum of the absolute products, where $\gamma$
depends only on the padded length $L$ of the retina
vectors~\cite{higham2002accuracy,blanchard2020tensorcore}.

\certex\ folds both error terms into the same dot product, in the manner of
floating-point filters~\cite{shewchuk1997adaptive}. Specifically, each database
retina vector $\hat{\mathbf{x}}$ appends four coordinates to
$\tilde{\lift}(\mathbf{x})$, which form its certificate: upper bounds
$\overline{n}_{\mathbf{x}}$ on $\|\lift(\mathbf{x})\|$ and $\overline{e}_{\mathbf{x}}$ on
the rounding error $\delta(\mathbf{x})$ plus an accumulation term, and a high part
$h_{\mathbf{x}}$ and a low part $l_{\mathbf{x}}$ of $a(\mathbf{x})/2$ (the bars mark
upper bounds). Correspondingly, the retina vector $\hat{\mathbf{q}}$ of a query
appends an upper bound $\overline{e}_{\mathbf{q}}$ on $\delta(\mathbf{q})$, an upper
bound $\overline{n}_{\mathbf{q}}$ on $\|\tilde{\lift}(\mathbf{q})\|$, a coefficient
$1-\varepsilon$ with $\varepsilon$ slightly larger than $\gamma$, and a one.
The two retina vectors thus read
\begin{equation*}
\setlength{\arraycolsep}{5pt}
\begin{array}{rccccc}
\hat{\mathbf{x}}^{\top}={}&[\,\tilde{\lift}(\mathbf{x})^{\top},&\overline{n}_{\mathbf{x}},&\overline{e}_{\mathbf{x}},&-h_{\mathbf{x}},&-l_{\mathbf{x}}\,],\\[3pt]
\hat{\mathbf{q}}^{\top}={}&[\,\tilde{\lift}(\mathbf{q})^{\top},&\overline{e}_{\mathbf{q}},&\overline{n}_{\mathbf{q}},&1-\varepsilon,&1\,],
\end{array}
\end{equation*}
where the columns show which coordinates multiply each other. Denote by $P$ the
exact inner product of $\hat{\mathbf{q}}$ and $\hat{\mathbf{x}}$. By construction,
\begin{equation}
P=\langle\tilde{\lift}(\mathbf{q}),\tilde{\lift}(\mathbf{x})\rangle
+\overline{e}_{\mathbf{q}}\overline{n}_{\mathbf{x}}
+\overline{n}_{\mathbf{q}}\overline{e}_{\mathbf{x}}-(1-\varepsilon)h_{\mathbf{x}}-l_{\mathbf{x}}.
\label{eq:P}
\end{equation}
In Eq.~\eqref{eq:P}, the products $\overline{e}_{\mathbf{q}}\overline{n}_{\mathbf{x}}$
and $\overline{n}_{\mathbf{q}}\overline{e}_{\mathbf{x}}$ bound the two error terms of
Eq.~\eqref{eq:cert} from above, whereas $h_{\mathbf{x}}+l_{\mathbf{x}}$ represents
$a(\mathbf{x})/2$. Instead of $P$, the tensor cores return the value
$\widehat{P}$, which results from exact products of the half-precision coordinates
and an accumulation of these products in fp32. The test of \certex\ compares this
value with a threshold: \certex\ keeps a vector if and only if $\widehat{P}$
satisfies
\begin{equation}
\widehat{P} \ge \theta(\mathbf{q}) \coloneqq \frac{a(\mathbf{q})-U^2}{2}-\mu(\mathbf{q}),
\label{eq:test}
\end{equation}
where the \emph{margin}
$\mu(\mathbf{q})\coloneqq\gamma(\overline{e}_{\mathbf{q}}M_n+\overline{n}_{\mathbf{q}}M_e+M_l)$
uses the database maxima $M_n$, $M_e$, and $M_l$ of $\overline{n}_{\mathbf{x}}$,
$\overline{e}_{\mathbf{x}}$, and $|l_{\mathbf{x}}|$.

The proof of Proposition~\ref{prop:cert} uses every coordinate of the retina vectors, which we now define. We assume that the data are scaled once so that every lifted coordinate and every $a(\mathbf{x})/2$ lies in the normal range of half precision. This assumption loses no generality, because a common scale multiplies all distances by the same factor and therefore leaves the neighbors unchanged.

The coordinates that carry the error terms are themselves stored in half
precision and must still bound norms and errors from above. We therefore write
$\operatorname{fl}(v)$ for the rounding of $v$ to the nearest half-precision number
and $\operatorname{up}(v)\coloneqq\operatorname{fl}((1+2^{-10})v)$ for rounding up.
Since rounding to nearest changes a value in the normal range by at most the
relative $2^{-11}$, $\operatorname{up}(v)\ge(1+2^{-10})(1-2^{-11})v>v$ for every
$v>0$. Although the norms inside $\operatorname{up}$ are computed in fp32, their
relative error stays far below this margin. With these two operators, \certex\
computes for each database vector $\mathbf{x}$
\begin{align*}
\nu_{\mathbf{x}}&\coloneqq(1+2^{-10})\max(\|\lift(\mathbf{x})\|,\|\tilde{\lift}(\mathbf{x})\|),\\
\overline{n}_{\mathbf{x}}&\coloneqq\operatorname{up}(\|\lift(\mathbf{x})\|),\\
\overline{e}_{\mathbf{x}}&\coloneqq\operatorname{up}(\delta(\mathbf{x})+\gamma\nu_{\mathbf{x}}),\\
h_{\mathbf{x}}&\coloneqq\operatorname{fl}(a(\mathbf{x})/2),\\
l_{\mathbf{x}}&\coloneqq\operatorname{fl}(a(\mathbf{x})/2-h_{\mathbf{x}}).
\end{align*}
Here, the auxiliary value $\nu_{\mathbf{x}}$ bounds both $\|\lift(\mathbf{x})\|$ and
$\|\tilde{\lift}(\mathbf{x})\|$ from above. By the property of $\operatorname{up}$,
$\overline{n}_{\mathbf{x}}\ge\|\lift(\mathbf{x})\|$ bounds the norm, and
$\overline{e}_{\mathbf{x}}\ge\delta(\mathbf{x})+\gamma\nu_{\mathbf{x}}$ bounds the
rounding error $\delta(\mathbf{x})$ together with the term $\gamma\nu_{\mathbf{x}}$.
This term absorbs the accumulation error of the products of the first $m+1$
coordinates (Step 3 of the proof below). Finally, the pair
$h_{\mathbf{x}},l_{\mathbf{x}}$ stores $a(\mathbf{x})/2$ as a high and a low
half-precision part. Correspondingly, for a query $\mathbf{q}$, \certex\ computes
$\overline{e}_{\mathbf{q}}\coloneqq\operatorname{up}(\delta(\mathbf{q}))$ and
$\overline{n}_{\mathbf{q}}\coloneqq\operatorname{up}(\|\tilde{\lift}(\mathbf{q})\|)$.
The coefficient $1-\varepsilon$ of $\hat{\mathbf{q}}$ uses the smallest multiple
$\varepsilon$ of $2^{-11}$ that exceeds $\gamma$, so that $1-\varepsilon$ is exact
in half precision.

\begin{proposition}[No false dismissal]
\label{prop:cert}
If $\|\lift(\mathbf{q})-\lift(\mathbf{x})\|\le U$, the computed product satisfies
Eq.~\eqref{eq:test}, so that \certex\ keeps $\mathbf{x}$.
\end{proposition}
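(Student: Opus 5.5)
Starting from the hypothesis $\|\lift(\mathbf{q})-\lift(\mathbf{x})\|\le U$, I will push it through four inequalities: the exact test, the half-precision rounding, the stored certificate, and the fp32 accumulation. Each step moves the left-hand side of Eq.~\eqref{eq:lbtest} down to a quantity we can compute, while keeping the inequality in the safe direction.

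\emph{Step 1: exact test.} By the equivalence derived before Eq.~\eqref{eq:lbtest}, the hypothesis gives
\[
\langle\lift(\mathbf{q}),\lift(\mathbf{x})\rangle-a(\mathbf{x})/2\ge(a(\mathbf{q})-U^2)/2 .
\]

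\emph{Step 2: rounding of the lifted projections.} Eq.~\eqref{eq:cert} replaces the exact inner product by $\langle\tilde{\lift}(\mathbf{q}),\tilde{\lift}(\mathbf{x})\rangle$ plus the two error terms. I then bound these terms by the stored coordinates. For the first, $\delta(\mathbf{q})\|\lift(\mathbf{x})\|\le\overline{e}_{\mathbf{q}}\overline{n}_{\mathbf{x}}$, using the property $\operatorname{up}(v)>v$. For the second, $\|\tilde{\lift}(\mathbf{q})\|\delta(\mathbf{x})\le\overline{n}_{\mathbf{q}}(\overline{e}_{\mathbf{x}}-\gamma\nu_{\mathbf{x}})$. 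This bound keeps a spare slack of $\overline{n}_{\mathbf{q}}\gamma\nu_{\mathbf{x}}$, which I reserve for the accumulation error. Finally, $a(\mathbf{x})/2=h_{\mathbf{x}}+l_{\mathbf{x}}+\rho$, where $\rho$ is the rounding error of $l_{\mathbf{x}}$ and is tiny. Putting these together gives
\[
P-\varepsilon h_{\mathbf{x}}\ \ge\ \tfrac{a(\mathbf{q})-U^2}{2}+\overline{n}_{\mathbf{q}}\gamma\nu_{\mathbf{x}}-\rho ,
\]
with $P$ as in Eq.~\eqref{eq:P}. More precisely, $P$ exceeds the exact-test left side plus the slack, up to the $(1-\varepsilon)$ factor on $h_{\mathbf{x}}$, which contributes an extra $+\varepsilon h_{\mathbf{x}}$ to $P$.

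\emph{Step 3: accumulation error.} I bound $|\widehat{P}-P|\le\gamma\sum_i|\hat q_i\hat x_i|$ and split the sum into the first $m+1$ coordinates and the four certificate coordinates.

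For the first block, Cauchy--Schwarz gives $\sum_{i\le m+1}|\tilde{\lift}_i(\mathbf{q})\tilde{\lift}_i(\mathbf{x})|\le\|\tilde{\lift}(\mathbf{q})\|\,\|\tilde{\lift}(\mathbf{x})\|\le\overline{n}_{\mathbf{q}}\nu_{\mathbf{x}}$. This is exactly absorbed by the slack reserved in Step 2.

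For the certificate block, the products are:
\begin{itemize}
\item $\overline{e}_{\mathbf{q}}\overline{n}_{\mathbf{x}}\le\overline{e}_{\mathbf{q}}M_n$;
\item $\overline{n}_{\mathbf{q}}\overline{e}_{\mathbf{x}}\le\overline{n}_{\mathbf{q}}M_e$;
\item $(1-\varepsilon)h_{\mathbf{x}}$;
\item $|l_{\mathbf{x}}|\le M_l$.
\end{itemize}
Their $\gamma$-multiples are covered as follows. The margin $\mu(\mathbf{q})$ covers the $M_n$, $M_e$, and $M_l$ terms. The remaining term $\gamma(1-\varepsilon)h_{\mathbf{x}}$ is covered by the surplus $\varepsilon h_{\mathbf{x}}$ from Step 2, since $\varepsilon>\gamma$ and $h_{\mathbf{x}}\ge0$ because $a(\mathbf{x})\ge0$.

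Chaining the three steps then gives $\widehat{P}\ge\theta(\mathbf{q})$, and Eq.~\eqref{eq:test} holds.

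\emph{Main obstacle.} The hard part is the small residuals that escape the clean bookkeeping, since every term must be covered strictly. These residuals are:
\begin{itemize}
\item $\rho=a(\mathbf{x})/2-h_{\mathbf{x}}-l_{\mathbf{x}}$, at most about $2^{-22}a(\mathbf{x})/2$;
\item the fp32 errors in computing $\delta$ and the norms.
\end{itemize}
I would handle them by showing that the gap $(\varepsilon-\gamma)h_{\mathbf{x}}$, together with the $(1+2^{-10})$ inflation inside $\operatorname{up}$ and $\nu_{\mathbf{x}}$, strictly dominates them. If that comparison is tight, I would fall back on the stated assumption that all quantities lie in the half-precision normal range, which lets every rounding be treated with relative error at most $2^{-11}$.
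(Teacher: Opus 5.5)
Your decomposition matches the paper's proof. You reserve $\gamma\overline{n}_{\mathbf{q}}\nu_{\mathbf{x}}$ inside $\overline{e}_{\mathbf{x}}$ for the first $m+1$ products, let $\mu(\mathbf{q})$ absorb the three database maxima, and pay for $\gamma(1-\varepsilon)h_{\mathbf{x}}$ out of $\varepsilon h_{\mathbf{x}}$. However, the one step that needs an idea is left as a promise. Once everything else is paid for, what remains is the split error $\sigma_{\mathbf{x}}\coloneqq|a(\mathbf{x})/2-h_{\mathbf{x}}-l_{\mathbf{x}}|$, and you must show $(\varepsilon-\gamma)h_{\mathbf{x}}\ge\sigma_{\mathbf{x}}$.

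This does not follow from $\varepsilon>\gamma$. Defining $\varepsilon$ as the smallest multiple of $2^{-11}$ exceeding $\gamma$ only gives $\varepsilon-\gamma>0$. For a generic $\gamma$, that difference could be far below the roughly $2^{-22}$ relative size of $\sigma_{\mathbf{x}}$. The missing observation is arithmetic. $L$ is a multiple of $64$, so $\gamma=16Lu=(L/64)\,2^{-14}$ is an integer multiple of $2^{-14}$, and so is $\varepsilon$. Their positive difference is therefore at least $2^{-14}$. Now use $h_{\mathbf{x}}\ge(1-2^{-11})a(\mathbf{x})/2$ to get $\sigma_{\mathbf{x}}\le2^{-11}|a(\mathbf{x})/2-h_{\mathbf{x}}|\le2^{-22}a(\mathbf{x})/2\le2^{-21}h_{\mathbf{x}}$. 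Together these give $(\varepsilon-\gamma)h_{\mathbf{x}}\ge\sigma_{\mathbf{x}}$ and close the proof.

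Your fallbacks cannot replace this step. The $(1+2^{-10})$ inflation inside $\operatorname{up}$ and $\nu_{\mathbf{x}}$ only produces slack proportional to $\overline{e}_{\mathbf{q}}$ or $\overline{n}_{\mathbf{q}}$. That slack vanishes for a query at $\mathbf{c}$, where $\overline{e}_{\mathbf{q}}=\overline{n}_{\mathbf{q}}=0$. For such a query with $a(\mathbf{x})=U^2$, the claim in the worst case comes down to $(\varepsilon-\gamma(1-\varepsilon))h_{\mathbf{x}}\ge\sigma_{\mathbf{x}}$. Your extra $\gamma\varepsilon h_{\mathbf{x}}$ does not cover it, since it is only $2^{-25}h_{\mathbf{x}}$ at $L=64$. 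The normal-range assumption only bounds $\sigma_{\mathbf{x}}$, which you already had.

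The fp32 errors in the norms and in $\delta$ need no treatment inside the proof. The paper absorbs them into the definitions of $\overline{n}$ and $\overline{e}$ as upper bounds, and the proof takes those bounds as given.

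Finally, fix a sign. With $a(\mathbf{x})/2=h_{\mathbf{x}}+l_{\mathbf{x}}+\rho$, your Step 2 inequality carries $+\rho$, not $-\rho$. Since $\rho$ may have either sign, write $-|\rho|$.
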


\begin{proof}
We bound the exact product $P$ from below, then the accumulation error, and
combine the two bounds in four steps.

Step 1 bounds $P$ from below. The coordinates satisfy
$\overline{e}_{\mathbf{q}}\ge\delta(\mathbf{q})$,
$\overline{n}_{\mathbf{x}}\ge\|\lift(\mathbf{x})\|$,
$\overline{n}_{\mathbf{q}}\ge\|\tilde{\lift}(\mathbf{q})\|$, and
$\overline{e}_{\mathbf{x}}\ge\delta(\mathbf{x})+\gamma\nu_{\mathbf{x}}$, so that
\begin{equation*}
\overline{e}_{\mathbf{q}}\overline{n}_{\mathbf{x}}+\overline{n}_{\mathbf{q}}\overline{e}_{\mathbf{x}}
\ge\delta(\mathbf{q})\|\lift(\mathbf{x})\|+\|\tilde{\lift}(\mathbf{q})\|\delta(\mathbf{x})
+\gamma\overline{n}_{\mathbf{q}}\nu_{\mathbf{x}}.
\end{equation*}
Adding $\langle\tilde{\lift}(\mathbf{q}),\tilde{\lift}(\mathbf{x})\rangle$ to both sides
and applying Eq.~\eqref{eq:cert} bounds the first three terms of Eq.~\eqref{eq:P}
from below by $\langle\lift(\mathbf{q}),\lift(\mathbf{x})\rangle+\gamma\overline{n}_{\mathbf{q}}\nu_{\mathbf{x}}$.
The last two terms of Eq.~\eqref{eq:P} equal
$-(h_{\mathbf{x}}+l_{\mathbf{x}})+\varepsilon h_{\mathbf{x}}$, which is at least
$-a(\mathbf{x})/2-\sigma_{\mathbf{x}}+\varepsilon h_{\mathbf{x}}$, where
$\sigma_{\mathbf{x}}\coloneqq|a(\mathbf{x})/2-h_{\mathbf{x}}-l_{\mathbf{x}}|$ is the error
of the split. Hence
\begin{equation}
P\ge\langle\lift(\mathbf{q}),\lift(\mathbf{x})\rangle-\frac{a(\mathbf{x})}{2}
+\gamma\overline{n}_{\mathbf{q}}\nu_{\mathbf{x}}+\varepsilon h_{\mathbf{x}}-\sigma_{\mathbf{x}}.
\label{eq:ce-i}
\end{equation}

Step 2 bounds the split error. Rounding to nearest gives
$|a(\mathbf{x})/2-h_{\mathbf{x}}|\le2^{-11}a(\mathbf{x})/2$ and then
$\sigma_{\mathbf{x}}\le2^{-11}|a(\mathbf{x})/2-h_{\mathbf{x}}|\le2^{-22}a(\mathbf{x})/2$.
Since $h_{\mathbf{x}}\ge(1-2^{-11})a(\mathbf{x})/2$, this gives
$\sigma_{\mathbf{x}}\le2^{-21}h_{\mathbf{x}}$. The padded length $L$ is a multiple
of 64, so that $\gamma=16Lu$ with $u=2^{-24}$ is a multiple of $2^{-14}$, and so is
$\varepsilon$. Their difference is positive and therefore at least $2^{-14}$, which
gives
\begin{equation}
(\varepsilon-\gamma)h_{\mathbf{x}}\ge\sigma_{\mathbf{x}}.
\label{eq:ce-ii}
\end{equation}

Step 3 bounds the accumulation error. Since the tensor cores multiply exactly and
accumulate in fp32,
\begin{equation*}
|\widehat{P}-P|\le\gamma\sum_i|\hat{q}_i\hat{x}_i|,
\end{equation*}
where $\hat{q}_i$ and $\hat{x}_i$ denote the $i$-th coordinates of
$\hat{\mathbf{q}}$ and $\hat{\mathbf{x}}$. The
Cauchy--Schwarz inequality bounds the sum over the first $m+1$ coordinates by
$\|\tilde{\lift}(\mathbf{q})\|\|\tilde{\lift}(\mathbf{x})\|\le\overline{n}_{\mathbf{q}}\nu_{\mathbf{x}}$.
The remaining four products are $\overline{e}_{\mathbf{q}}\overline{n}_{\mathbf{x}}$,
$\overline{n}_{\mathbf{q}}\overline{e}_{\mathbf{x}}$,
$(1-\varepsilon)h_{\mathbf{x}}\le h_{\mathbf{x}}$, and $|l_{\mathbf{x}}|$. Replacing
$\overline{n}_{\mathbf{x}}$, $\overline{e}_{\mathbf{x}}$, and $|l_{\mathbf{x}}|$ by their
database maxima gives
\begin{equation}
|\widehat{P}-P|\le\gamma\overline{n}_{\mathbf{q}}\nu_{\mathbf{x}}+\gamma h_{\mathbf{x}}+\mu(\mathbf{q}).
\label{eq:ce-iii}
\end{equation}

Step 4 combines the bounds. Subtracting Eq.~\eqref{eq:ce-iii} from Eq.~\eqref{eq:ce-i} and applying
Eq.~\eqref{eq:ce-ii},
\begin{equation*}
\begin{split}
\widehat{P}&\ge\langle\lift(\mathbf{q}),\lift(\mathbf{x})\rangle-\frac{a(\mathbf{x})}{2}
+(\varepsilon-\gamma)h_{\mathbf{x}}-\sigma_{\mathbf{x}}-\mu(\mathbf{q})\\
&\ge\langle\lift(\mathbf{q}),\lift(\mathbf{x})\rangle-\frac{a(\mathbf{x})}{2}-\mu(\mathbf{q}).
\end{split}
\end{equation*}
By the premise and Eq.~\eqref{eq:lbtest},
$\langle\lift(\mathbf{q}),\lift(\mathbf{x})\rangle-a(\mathbf{x})/2\ge(a(\mathbf{q})-U^2)/2$.
Hence $\widehat{P}\ge\theta(\mathbf{q})$.
\end{proof}

In the implementation, \certex\ sets $\gamma\coloneqq16Lu$, where $u=2^{-24}$ is
the unit roundoff of fp32. This constant exceeds the worst-case constant that
Blanchard et al.~\cite{blanchard2020tensorcore} derive for the accumulation of
tensor cores, and it also covers an accumulator that truncates instead of
rounding. In addition, \certex\ inflates $U^2$ by the relative factor
$1+10^{-4}$. The threshold $\theta(\mathbf{q})$ is evaluated in fp64 and rounded
down to fp32, so that neither rounding shrinks the kept set. Finally, the retina
pass reads the database retina vectors once per batch of queries and writes out
only the pairs that satisfy Eq.~\eqref{eq:test} (Appendix~\ref{A-sec:impl}). The same certificate
extends to the 8-bit integer tensor cores, whose retina halves the device memory
per vector at the same width while every answer stays exact (Section~\ref{sec:int8}).

\subsection{Foveal Verification and Exactness}
\label{sec:fovea}

The fovea needs the threshold $U$ before the retina pass and the exact distances
of the survivors after it. Before the pass, \certex\ samples every $s$-th retina
vector, for a stride $s$, and scores the sampled vectors by the same computed
product $\widehat{P}$ as the retina pass. Among them, the sampled vectors with the
largest products become the \emph{anchors} $\mathcal{A}(\mathbf{q})$. The threshold
$U$ is then the $k$-th
smallest exact distance among the anchors. Because the anchors are at least $k$
distinct vectors of $\mathcal{X}$, $U\ge r_k(\mathbf{q})$ holds for every query.
After the pass, the fovea computes the exact distance of each survivor from the
full-precision difference $\mathbf{q}-\mathbf{x}$ and returns the $k$ smallest
per query.

\begin{proposition}[Exactness]
\label{prop:exact}
\certex\ returns $\mathcal{N}_k(\mathbf{q})$ for every query $\mathbf{q}$.
\end{proposition}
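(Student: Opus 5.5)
The plan is to reduce Proposition~\ref{prop:exact} to Proposition~\ref{prop:framework}. This requires two facts: the retina pass with threshold $\theta(\mathbf{q})$ is a certified exclusion test in the sense of Definition~\ref{def:exclusion}, and the threshold $U$ that \certex\ uses satisfies $U\ge r_k(\mathbf{q})$. Once both hold, Proposition~\ref{prop:framework} says that the $k$ survivors with the smallest exact distances form $\mathcal{N}_k(\mathbf{q})$. The fovea computes exactly these distances and selects the $k$ smallest, so it returns $\mathcal{N}_k(\mathbf{q})$.

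For the first fact, I fix a query $\mathbf{q}$, a vector $\mathbf{x}$ and a threshold $U\ge0$ with $\|\mathbf{q}-\mathbf{x}\|\le U$. Proposition~\ref{prop:lb} gives $\|\lift(\mathbf{q})-\lift(\mathbf{x})\|\le\|\mathbf{q}-\mathbf{x}\|\le U$. This is exactly the premise of Proposition~\ref{prop:cert}, so the computed product satisfies Eq.~\eqref{eq:test} and $\mathbf{x}$ is kept. I must also check that the implementation details do not break this chain:
\begin{itemize}
\item inflating $U^2$ by $1+10^{-4}$ can only enlarge $U$;
\item rounding $\theta(\mathbf{q})$ down to fp32 can only lower the threshold;
\item writing out every pair that satisfies Eq.~\eqref{eq:test} means the kept set is precisely the set of vectors passing the test.
\end{itemize}
Each of these only enlarges the kept set, so certified exclusion is preserved. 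Since the argument holds for every $\mathbf{q}$, $\mathbf{x}$ and $U$, the retina pass is a certified exclusion test.

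For the second fact, the anchors $\mathcal{A}(\mathbf{q})$ are at least $k$ distinct members of $\mathcal{X}$, and $U$ is the $k$-th smallest exact distance among them. I pick $k$ anchors whose distances are all at most $U$. This gives a $k$-subset of $\mathcal{X}$ whose farthest member lies within $U$. By the definition of $r_k(\mathbf{q})$ as the largest distance in an optimal $k$-set, any $k$ vectors contain one at distance at least $r_k(\mathbf{q})$. Hence $U\ge r_k(\mathbf{q})$. The anchors are selected by the approximate product $\widehat{P}$, but this is irrelevant here: only their exact distances enter $U$, so the bound holds however badly they are chosen.

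I expect the main obstacle to be the subtle points rather than the chain itself. One is ties: $\mathcal{N}_k(\mathbf{q})$ is defined as an argmin and may be non-unique. I would handle this by interpreting ``the $k$ smallest'' with the same tie-breaking in both Proposition~\ref{prop:framework} and the fovea, noting that every vector at distance at most $r_k(\mathbf{q})$ survives. Another is the claim that the $k$-th order statistic bounds $r_k$, which I would state explicitly using the sum-minimizing definition: the optimal set consists of $k$ smallest distances, so its maximum is the $k$-th smallest distance over $\mathcal{X}$, which is at most the $k$-th smallest over the anchor subset. A third is to confirm that the scaling assumption behind Proposition~\ref{prop:cert} holds for the query too. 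I would argue that the common scale leaves neighbors unchanged, so exactness transfers back to the original data.
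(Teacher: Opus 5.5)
Your proposal is correct and matches the paper's proof, which likewise combines Proposition~\ref{prop:lb} and Proposition~\ref{prop:cert} to show that the retina is a certified exclusion test, uses the fact that the anchors are at least $k$ distinct database vectors to get $U\ge r_k(\mathbf{q})$, and concludes with Proposition~\ref{prop:framework}. Your extra checks (the inflated and rounded threshold, the order-statistic argument for $U\ge r_k(\mathbf{q})$, and ties) only make explicit what the paper states in the surrounding text or leaves implicit.
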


\begin{proof}
Every $\mathbf{x}$ with $\|\mathbf{q}-\mathbf{x}\|\le U$ satisfies
$\|\lift(\mathbf{q})-\lift(\mathbf{x})\|\le U$ by Proposition~\ref{prop:lb} and
survives by Proposition~\ref{prop:cert}. The retina is therefore a certified
exclusion test. Since $U\ge r_k(\mathbf{q})$, Proposition~\ref{prop:framework}
yields the claim.
\end{proof}

The anchors, the rotation $\mathbf{V}$, and the width $m$ therefore influence
only the cost of \certex. For instance, a loose threshold admits more survivors
and more exact distances but cannot remove a true neighbor. To quantify this
cost, Proposition~\ref{prop:surv} bounds the number of survivors. For this
purpose, we write
$\rho(\mathbf{x})\coloneqq\|(\mathbf{I}-\mathbf{V}^{\top}\mathbf{V})(\mathbf{x}-\mathbf{c})\|$
for the last coordinate of $\lift(\mathbf{x})$, which is the norm of the component
of $\mathbf{x}-\mathbf{c}$ that is orthogonal to the rows of $\mathbf{V}$.
Similarly, we denote by $\eta(\mathbf{q})$ the \emph{slack} that $\mu(\mathbf{q})$
and the rounded-up coordinates add to the squared threshold, which is of the order
of the unit roundoff times the squared norms.

\begin{proposition}[Survivor ball]
\label{prop:surv}
Every vector that the retina keeps for the threshold $U$ satisfies
$\|\mathbf{q}-\mathbf{x}\|^2\le U^2+\eta(\mathbf{q})+4\rho(\mathbf{q})\rho(\mathbf{x})$.
\end{proposition}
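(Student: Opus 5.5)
The proof splits into two halves: first show that a kept vector lies in the lifted ball of radius about $U$, then translate that lifted ball back into the original space.

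\emph{Upper bound on the lifted distance.} If $\mathbf{x}$ is kept, then $\widehat{P}\ge\theta(\mathbf{q})$. Every bound used in the proof of Proposition~\ref{prop:cert} has a matching bound in the other direction. Eq.~\eqref{eq:cert} holds with the opposite sign by the same exact split and Cauchy--Schwarz: $\langle\lift(\mathbf{q}),\lift(\mathbf{x})\rangle\ge\langle\tilde{\lift}(\mathbf{q}),\tilde{\lift}(\mathbf{x})\rangle-\delta(\mathbf{q})\|\lift(\mathbf{x})\|-\|\tilde{\lift}(\mathbf{q})\|\delta(\mathbf{x})$. The certificate products $\overline{e}_{\mathbf{q}}\overline{n}_{\mathbf{x}}$ and $\overline{n}_{\mathbf{q}}\overline{e}_{\mathbf{x}}$ are bounded above by their exact counterparts plus relative $O(2^{-10})$ slack. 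The split error $\sigma_{\mathbf{x}}$ and the term $\varepsilon h_{\mathbf{x}}$ are $O(\gamma\,a(\mathbf{x}))$. The accumulation error obeys Eq.~\eqref{eq:ce-iii} in absolute value. Combining these, $\widehat{P}\ge\theta(\mathbf{q})$ implies
\begin{equation*}
\langle\lift(\mathbf{q}),\lift(\mathbf{x})\rangle-\frac{a(\mathbf{x})}{2}\ge\frac{a(\mathbf{q})-U^2}{2}-\frac{\eta(\mathbf{q})}{2}.
\end{equation*}
I would define $\eta(\mathbf{q})$ as exactly twice the sum of all these error terms: the two Cauchy--Schwarz terms doubled, $\mu(\mathbf{q})$, $\gamma\overline{n}_{\mathbf{q}}\nu_{\mathbf{x}}$, $\varepsilon h_{\mathbf{x}}$, $\sigma_{\mathbf{x}}$, and the inflation of $U^2$ by $10^{-4}$. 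Bounding each by its database maximum makes $\eta$ depend only on $\mathbf{q}$. Reversing the algebra that led to Eq.~\eqref{eq:lbtest} via Eq.~\eqref{eq:inner} then gives $\|\lift(\mathbf{q})-\lift(\mathbf{x})\|^2\le U^2+\eta(\mathbf{q})$. This is the main obstacle, but only in bookkeeping: the precise definition of $\eta$ has to be chosen so that every one-sided error introduced in Proposition~\ref{prop:cert} is absorbed with the right sign. Since the statement leaves $\eta$ informal, I would fix it as this sum.

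\emph{From the lifted distance back to the true distance.} Using Eqs.~\eqref{eq:lb-i}--\eqref{eq:lb-iii} from the proof of Proposition~\ref{prop:lb}, with $\mathbf{u}_2,\mathbf{v}_2$ the orthogonal components,
\begin{equation*}
\|\mathbf{q}-\mathbf{x}\|^2-\|\lift(\mathbf{q})-\lift(\mathbf{x})\|^2=\|\mathbf{u}_2-\mathbf{v}_2\|^2-(\|\mathbf{u}_2\|-\|\mathbf{v}_2\|)^2.
\end{equation*}
Expanding both squares, the right-hand side equals $2\|\mathbf{u}_2\|\|\mathbf{v}_2\|-2\langle\mathbf{u}_2,\mathbf{v}_2\rangle$. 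By Cauchy--Schwarz this is at most $4\|\mathbf{u}_2\|\|\mathbf{v}_2\|=4\rho(\mathbf{q})\rho(\mathbf{x})$.

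Adding this to the lifted bound yields the claim $\|\mathbf{q}-\mathbf{x}\|^2\le U^2+\eta(\mathbf{q})+4\rho(\mathbf{q})\rho(\mathbf{x})$. The resulting gap $4\rho\rho$ shrinks as the residuals $\rho$ vanish, which is the "two balls" interpretation.
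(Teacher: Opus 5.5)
Your proof is correct and follows the paper's. The paper also first places every kept vector in the lifted ball $\|\lift(\mathbf{q})-\lift(\mathbf{x})\|^2\le U^2+\eta(\mathbf{q})$; it simply takes this as the definition of $\eta$, where you derive it by reversing the bounds of Proposition~\ref{prop:cert}. It then writes the excess as $2(\|\mathbf{u}_2\|\|\mathbf{v}_2\|-\langle\mathbf{u}_2,\mathbf{v}_2\rangle)$ and bounds it by $4\rho(\mathbf{q})\rho(\mathbf{x})$ via Cauchy--Schwarz, exactly as you do.

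One bookkeeping slip in your explicit list for $\eta$: in the reverse direction $\mu(\mathbf{q})$ enters twice, once from $\theta(\mathbf{q})$ and once from Eq.~\eqref{eq:ce-iii}. Eq.~\eqref{eq:ce-iii} also contributes $\gamma h_{\mathbf{x}}$ and its own copy of $\gamma\overline{n}_{\mathbf{q}}\nu_{\mathbf{x}}$, on top of the one inside $\overline{n}_{\mathbf{q}}\overline{e}_{\mathbf{x}}$. Your enumerated sum is therefore slightly too small, but the missing terms are of the same order, so adding them leaves the argument intact.
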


\begin{proof}
The proof reuses the notation of the proof of Proposition~\ref{prop:lb}. There, $\mathbf{u}=\mathbf{q}-\mathbf{c}$ and
$\mathbf{v}=\mathbf{x}-\mathbf{c}$ split as $\mathbf{u}=\mathbf{u}_1+\mathbf{u}_2$ and
$\mathbf{v}=\mathbf{v}_1+\mathbf{v}_2$, where $\mathbf{u}_1,\mathbf{v}_1$ lie in the row
space of $\mathbf{V}$ and $\mathbf{u}_2,\mathbf{v}_2$ are orthogonal to it. The norms
of $\mathbf{u}_2$ and $\mathbf{v}_2$ are the last coordinates $\rho(\mathbf{q})$ and
$\rho(\mathbf{x})$ of the lifted projections. The proof proceeds in three steps.

First, the retina keeps $\mathbf{x}$ only if its lifted distance is small. By the
definition of the slack $\eta(\mathbf{q})$, every kept vector satisfies
\begin{equation}
\|\lift(\mathbf{q})-\lift(\mathbf{x})\|^2\le U^2+\eta(\mathbf{q}).
\label{eq:kept}
\end{equation}

Second, the squared distance exceeds the squared lifted distance by a term that
involves only $\mathbf{u}_2$ and $\mathbf{v}_2$. Eqs.~\eqref{eq:lb-iii}, \eqref{eq:lb-ii}, and~\eqref{eq:lb-i}
give
$\|\mathbf{q}-\mathbf{x}\|^2=\|\mathbf{u}_1-\mathbf{v}_1\|^2+\|\mathbf{u}_2-\mathbf{v}_2\|^2$
and
$\|\lift(\mathbf{q})-\lift(\mathbf{x})\|^2=\|\mathbf{u}_1-\mathbf{v}_1\|^2+(\|\mathbf{u}_2\|-\|\mathbf{v}_2\|)^2$.
Subtracting the second equation from the first and expanding both squares,
\begin{equation}
\begin{split}
&\|\mathbf{q}-\mathbf{x}\|^2-\|\lift(\mathbf{q})-\lift(\mathbf{x})\|^2\\
&\quad=\|\mathbf{u}_2-\mathbf{v}_2\|^2-(\|\mathbf{u}_2\|-\|\mathbf{v}_2\|)^2\\
&\quad=\bigl(\|\mathbf{u}_2\|^2-2\langle\mathbf{u}_2,\mathbf{v}_2\rangle+\|\mathbf{v}_2\|^2\bigr)\\
&\qquad-\bigl(\|\mathbf{u}_2\|^2-2\|\mathbf{u}_2\|\|\mathbf{v}_2\|+\|\mathbf{v}_2\|^2\bigr)\\
&\quad=2\bigl(\|\mathbf{u}_2\|\|\mathbf{v}_2\|-\langle\mathbf{u}_2,\mathbf{v}_2\rangle\bigr).
\end{split}
\label{eq:excess}
\end{equation}

Third, the Cauchy--Schwarz inequality bounds this excess. It gives
$-\langle\mathbf{u}_2,\mathbf{v}_2\rangle\le\|\mathbf{u}_2\|\|\mathbf{v}_2\|$, so that the
right-hand side of Eq.~\eqref{eq:excess} is at most
$4\|\mathbf{u}_2\|\|\mathbf{v}_2\|=4\rho(\mathbf{q})\rho(\mathbf{x})$. Adding this bound
to Eq.~\eqref{eq:kept} yields
$\|\mathbf{q}-\mathbf{x}\|^2\le U^2+\eta(\mathbf{q})+4\rho(\mathbf{q})\rho(\mathbf{x})$,
which is the claim.
\end{proof}

The survivors therefore lie between two balls.
Specifically, the inner ball has radius $r_k(\mathbf{q})$, and its vectors always
survive. In contrast, the outer ball is slightly larger, by an excess that grows with $\rho(\mathbf{q})$ and
$\rho(\mathbf{x})$. Because the
anchors are the best of every $s$-th vector, $U$ approximates the distance to the
$(ks)$-th nearest neighbor. The number of survivors thus approaches $ks$ when the
width captures most of the variance of the data. A batch of $B$ queries then costs
$BN(m+5)$ half-precision multiply-adds plus about $ks$ exact distances per query,
whereas exhaustive search costs $BND$ full-precision multiply-adds and a
selection among all $N$ candidates.

\begin{table*}[t]
\centering
\caption{Exact search ($k=10$, one H100): QPS of \certex, inexact queries of cuVS fp32, and speedups of \certex\ over both baselines.}
\label{tab:main}
\footnotesize
\setlength{\tabcolsep}{2pt}
\begin{tabular*}{\linewidth}{@{\extracolsep{\fill}}lrrrrrrr|lrrrrrrr}
\toprule
& & & & \certex & \multicolumn{2}{c}{cuVS fp32} & fp16 + rerank & & & & & \certex & \multicolumn{2}{c}{cuVS fp32} & fp16 + rerank \\ \cmidrule(lr){6-7}\cmidrule(lr){14-15}
Data & $N$ & $D$ & $m$ & QPS & inexact & speedup & speedup & ImageNet & $N$ & $D$ & $m$ & QPS & inexact & speedup & speedup \\
\midrule
SIFT1M & 1M & 128 & 123 & 719.6k & 0 & 11.0$\times$ & 8.0$\times$ & CLIP-B & 1.28M & 512 & 507 & 195.1k & 1 & 7.5$\times$ & 3.4$\times$ \\
GIST1M & 1M & 960 & 507 & 157.6k & 0 & 7.7$\times$ & 2.4$\times$ & CLIP-L & 1.28M & 768 & 507 & 179.6k & 2 & 9.1$\times$ & 3.5$\times$ \\
GloVe & 1.18M & 100 & 99 & 642.8k & 1 & 11.0$\times$ & 8.7$\times$ & SigLIP & 1.28M & 1152 & 635 & 157.7k & 4 & 10.9$\times$ & 3.6$\times$ \\
Deep-image & 9.99M & 96 & 95 & 104.1k & 3 & 14.3$\times$ & 11.2$\times$ & DINOv2-B & 1.28M & 768 & 635 & 152.2k & 0 & 7.7$\times$ & 3.0$\times$ \\
BIGANN-10M & 10M & 128 & 123 & 103.3k & 0 & 15.5$\times$ & 11.1$\times$ & DINOv2-L & 1.28M & 1024 & 891 & 121.0k & 2 & 7.7$\times$ & 2.6$\times$ \\
BIGANN-100M & 100M & 128 & 59 & 12.0k & 0 & 18.0$\times$ & 12.9$\times$ & DINOv3-L & 1.28M & 1024 & 763 & 131.3k & 1 & 8.4$\times$ & 2.9$\times$ \\
DataComp img. & 12.79M & 768 & 507 & 14.1k & 108 & 7.1$\times$ & 2.3$\times$ & DINOv3-7B & 1.28M & 4096 & 3579 & 36.2k & 2 & 7.9$\times$ & 1.8$\times$ \\
DataComp txt. & 12.79M & 768 & 635 & 18.1k & 1 & 9.1$\times$ & 2.9$\times$ & CLIP text & 1.28M & 512 & 507 & 192.5k & 0 & 7.4$\times$ & 3.4$\times$ \\
\bottomrule
\end{tabular*}
\end{table*}

\subsection{Certified Exclusion at Any Threshold}
\label{sec:join}

Certified exclusion holds for every threshold, so that \certex\ answers any query
that fixes its threshold in advance exactly and without the anchors.
Specifically, Definition~\ref{def:exclusion} and Proposition~\ref{prop:cert} hold
for every $U\ge0$, so that the test of the retina is certified whatever threshold
a query uses. The anchors of Section~\ref{sec:fovea} have a different role: they
choose the threshold $U$ when a query does not fix it. For instance, kNN search
does not fix its threshold, because Proposition~\ref{prop:framework} requires
$U\ge r_k(\mathbf{q})$ and the distance $r_k(\mathbf{q})$ becomes known only after
the search. In contrast, a \emph{range query} with radius $r$ asks for all
database vectors within $r$ of the query, so that its threshold is $r$ from the
start. \certex\ thus answers a range query by setting $U\coloneqq r$, without any
anchors. Since the retina may also keep vectors farther than $r$, the fovea
computes their exact distances and removes them. \certex\ therefore returns
exactly the vectors within $r$, by Propositions~\ref{prop:lb}
and~\ref{prop:cert}.

In particular, near-duplicate detection fixes its threshold in advance as well,
because it asks for all pairs of database vectors whose distance is at most a
given radius $r$. We call the set of these pairs the \emph{similarity self-join}.
The self-join therefore consists of the range queries of all database vectors
against the database, which \certex\ answers exactly as above. Because each pair
appears in two of these range queries, the retina pass evaluates only the pairs
$(\mathbf{x}_i,\mathbf{x}_j)$ with $i<j$, so that each pair is tested once. In
practice, the radius $r$ follows from the criterion of deduplication pipelines
such as SemDeDup~\cite{abbas2023semdedup}, which keep the pairs whose cosine
similarity is at least $\tau$. For unit vectors, such as normalized image
embeddings, this criterion corresponds to the radius $r=\sqrt{2-2\tau}$. Because
the self-join is defined by a radius rather than by a number of neighbors, it
places no limit on the number of duplicates per vector. In contrast, a kNN graph keeps at most $k$ neighbors of each vector and therefore
misses every duplicate beyond the $k$-th neighbor.

\section{Evaluation}
\label{sec:eval}

\begin{figure*}[t]
\centering
\includegraphics[width=\linewidth]{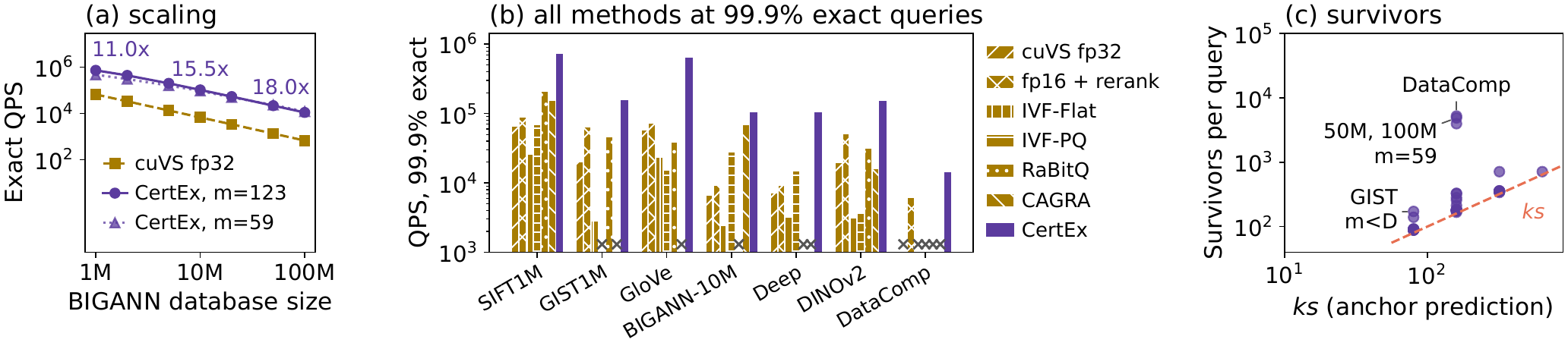}
\caption{Scaling with $N$, all methods at 99.9\% exact queries ($\times$: never reached), and survivors against $ks$ (dashed).}
\label{fig:eval}
\end{figure*}

\begin{figure*}[t]
\centering
\includegraphics[width=\linewidth]{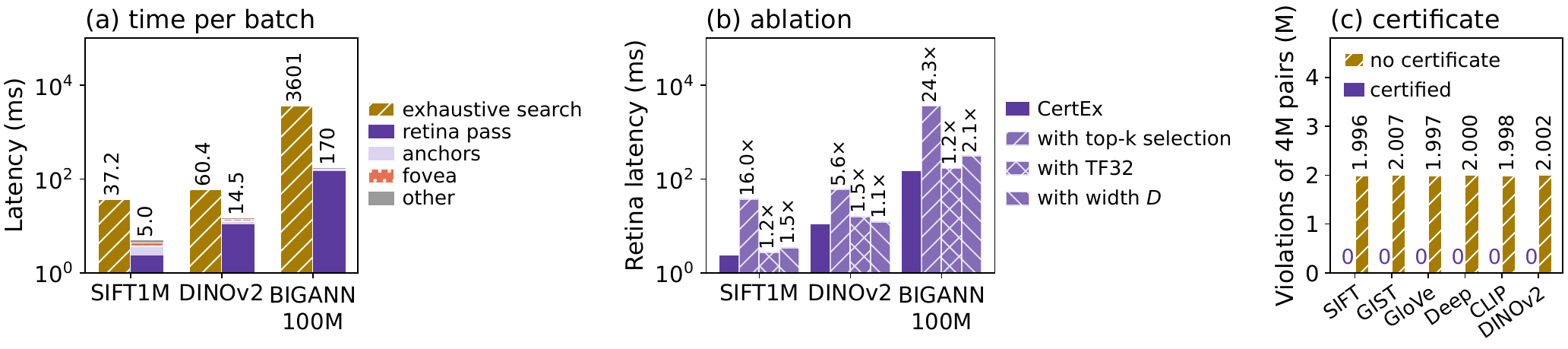}
\caption{Time per batch of 2048 queries, ablation of the retina pass, and violations without the certificate.}
\label{fig:breakdown}
\end{figure*}

\subsection{Experimental Setup}
\label{sec:exp-setup}

\paragraph{Datasets and encoders.}
The evaluation covers sixteen descriptor, word, and image-embedding datasets that
span dimension and scale (Table~\ref{tab:main}). Specifically, SIFT1M, GIST1M, and
the BIGANN prefixes up to 100M vectors~\cite{jegou2011pq,jegou2011bigann} contain
SIFT~\cite{lowe2004sift} and GIST~\cite{oliva2001gist} descriptors, whereas
Deep-image~\cite{babenko2016deep1b} and GloVe~\cite{pennington2014glove} contain
9.99M CNN descriptors and 1.18M word embeddings. Moreover, the
ImageNet-1K datasets~\cite{deng2009imagenet,russakovsky2015ilsvrc} use 1.28M
training images as the database and 10{,}000 validation images as queries, both
embedded by seven vision encoders, which are vision
transformers~\cite{dosovitskiy2021vit} of the
CLIP~\cite{radford2021clip}, SigLIP~\cite{zhai2023siglip},
DINOv2~\cite{oquab2024dinov2}, and DINOv3~\cite{simeoni2025dinov3} families
(Table~\ref{A-tab:imagenet}). In addition, a cross-modal workload queries the CLIP image
embeddings with 10{,}000 CLIP text prompts of the 1{,}000 class names
(Appendix~\ref{A-sec:baselines}). At web
scale, DataComp~\cite{gadre2023datacomp} contributes the CLIP ViT-L/14
embeddings of 12.79M web image-text pairs with 10{,}000 image and caption
queries.

\paragraph{Platform, metrics, and baselines.}
Unless stated otherwise, every experiment runs on one NVIDIA H100 GPU with the kernels of \certex\ written
in Triton~\cite{tillet2019triton} on top of PyTorch~\cite{paszke2019pytorch}. Every measurement in this paper uses this Triton implementation, whereas the open-source library\footnote{\url{https://github.com/hpdic/libcertex}} implements \certex\ with CUDA kernels. The library is also distributed on PyPI as \texttt{certex}\footnote{\url{https://pypi.org/project/certex/}}, and its interfaces replace \texttt{torch.topk}, the flat indexes of Faiss~\cite{johnson2021faiss}, and the brute-force search of cuVS~\cite{nvidia2025cuvs}, so that moving existing code to exact search changes little more than an import. Section~\ref{sec:a100} repeats four
datasets on an A100, at 8.8$\times$ to 11.8$\times$ cuVS. Every experiment keeps
the database in device memory next to the retina, which adds $2(m+5)$ bytes per
vector (Appendix~\ref{A-sec:impl}). Because the fovea reads only the survivors, the
database could also reside in host memory. Throughput counts the queries per second over
all queries with $k=10$ after a warm-up pass. For exactness, we recompute the
squared distances of the returned neighbors by direct differences in fp64 and
call a query exact when these sorted distances equal those of the true neighbors
to a relative $10^{-12}$, which accepts ties among equidistant vectors and no
other deviation. The true neighbors come from an fp64 search that is reranked by
direct differences in fp64. We compare \certex\ with six GPU baselines: the exhaustive search of
NVIDIA cuVS~\cite{nvidia2025cuvs} in fp32 at its fastest selection size
(Table~\ref{A-tab:s-fp32}), the same search in half precision with exact reranking, and
the cuVS indexes IVF-Flat~\cite{sivic2003videogoogle}, IVF-PQ~\cite{jegou2011pq},
IVF-RaBitQ~\cite{gao2024rabitq}, and CAGRA~\cite{ootomo2024cagra}. For a fair
comparison, each baseline uses the best configuration of a grid, with exact
reranking for the quantized indexes (Appendix~\ref{A-sec:baselines}), as does \certex\ with
a few widths and anchor strides per dataset (Table~\ref{A-tab:s-config}).

\subsection{Exact Search Throughput}
\label{sec:throughput}

We next compare the exact throughput of \certex\ with exhaustive cuVS search in
fp32 and in fp16 with exact reranking (Table~\ref{tab:main}). On all sixteen datasets, \certex\ is exact and
7.1$\times$ to 18.0$\times$ faster than the fastest cuVS~\cite{nvidia2025cuvs}
fp32 configuration. In contrast, exhaustive fp32 search evaluates the expanded
distance in fp32, which leaves 1 to 108 of the 10{,}000 queries inexact on ten of
the datasets (Appendix~\ref{A-sec:tools}).
Half-precision exhaustive search with exact reranking remains 1.8$\times$ to
12.9$\times$ slower than \certex, so that half precision explains only part of
the speed. Another part comes from the width, because the ImageNet embeddings
concentrate their variance in fewer directions than their dimension and
therefore leave only 139 to 327 survivors per query (Table~\ref{A-tab:s-config}).
Conversely, isotropic data enlarge the term $4\rho(\mathbf{q})\rho(\mathbf{x})$ of Proposition~\ref{prop:surv} and thus the survivors.

Figure~\ref{fig:eval} then varies the database size and compares all methods. On
seven BIGANN prefixes~\cite{jegou2011bigann} from 1M to 100M vectors
(Figure~\ref{fig:eval}(a)), the speedup over the fastest cuVS fp32 configuration
rises monotonically from 11.0$\times$ to 18.0$\times$, because only the retina
pass scales with $N$ whereas the anchors and the fovea cost nearly the same at
every size (Table~\ref{A-tab:s-scaling}). At 99.9\% exact
queries (Figure~\ref{fig:eval}(b)), \certex\ is 27.2$\times$ to 84.9$\times$
faster than IVF-Flat~\cite{sivic2003videogoogle}, 3.7$\times$ to 124$\times$
faster than IVF-PQ~\cite{jegou2011pq}, and 2.3$\times$ to 10.7$\times$ faster
than IVF-RaBitQ~\cite{gao2024rabitq}, which is the strongest index on the
ImageNet embeddings. Moreover, \certex\ is exact for all queries on every
dataset, whereas CAGRA~\cite{ootomo2024cagra} reaches this level only on SIFT1M among
the thirteen datasets on which it can be built (Table~\ref{A-tab:s-cagra}). In addition,
the retina is built in 0.03 to 2.47 seconds, against 2.2 to 29.7 seconds for each
CAGRA graph (Table~\ref{A-tab:s-config} and Table~\ref{A-tab:s-cagra}).
On the
12.79M DataComp images~\cite{gadre2023datacomp}, \certex\ is the only
method that is exact for every image query. Finally, Figure~\ref{fig:eval}(c)
compares the survivors of all datasets and BIGANN strides with the estimate
$ks$. Specifically, survivors lie at
most 1.15$\times$ above $ks$ wherever the width nears the dimension and further above it for narrow widths
(Proposition~\ref{prop:surv}).

\subsection{Ablation of the Design Elements}
\label{sec:ablation}
\label{sec:valid}

We then ask where the speed of \certex\ comes from and what its certificate costs,
with every element isolated on the same kernel and threshold
(Figure~\ref{fig:breakdown}). For 2048 queries, a complete \certex\ search takes
5.0 ms against 37.2 ms for exhaustive search on SIFT1M and 170 ms against 3.6 s
on BIGANN-100M, of which the anchors and the fovea take only 2.2 ms and 22 ms
(Figure~\ref{fig:breakdown}(a)). Most of the remaining time therefore belongs to
the retina pass, which costs less than a plain fp16 matrix multiplication (GEMM)
over the same vectors (2.3 ms against 4.1 ms on SIFT1M and 148 ms against 289 ms
on BIGANN-100M, Table~\ref{A-tab:s-breakdown}) because it compares every product with
its threshold inside one kernel instead of writing the matrix of all products. In
contrast, exhaustive search must select the top 100 candidates among all $N$
products, which costs 3.3$\times$ to 12.5$\times$ the GEMM alone. Moreover, the
ablation in Figure~\ref{fig:breakdown}(b) shows that certified exclusion provides
most of the speed: the top-$k$ selection instead slows the pass by 5.6$\times$ to 24.3$\times$, whereas TF32 instead of fp16 and
the width $D$ instead of $m$ slow it by only 1.2$\times$ to 1.5$\times$ and
1.1$\times$ to 2.1$\times$.

In addition, the certificate makes the half-precision decision exact
(Figure~\ref{fig:breakdown}(c)). The hardest test places every pair exactly on the
decision boundary: for four million random pairs per dataset and width, we set
the threshold of each pair to its own lower bound. In this test, the retina keeps
every pair, whereas the same retina without the error terms of Eq.~\eqref{eq:cert} excludes 1.996M to 2.007M of the 4M pairs, because half-precision rounding errs in either direction with equal
frequency. Moreover, the largest accumulation error over up to $10^8$
pairs per dataset and every kernel configuration is 0.42\% to 0.76\% of the
certified bound (Table~\ref{A-tab:acc}), so that the certificate holds with a margin of
more than two orders of magnitude. The certificate costs only 0.4\% to 2.7\% of
throughput, yet it decides real boundary pairs: without it, the exact self-join of
the DINOv2~\cite{oquab2024dinov2} embeddings misses 299, 50, and 1 duplicate pairs
at $\tau=0.90$, 0.95, and 0.98. The remaining parameters affect only the cost: \certex\ stays faster than exhaustive search for $k$ from 1 to 100
and batches of 1 to 4096 queries (Figure~\ref{A-fig:ablation}).

\subsection{Vision Encoders}
\label{sec:encoders}

\begin{figure}[t]
\centering
\includegraphics[width=\linewidth]{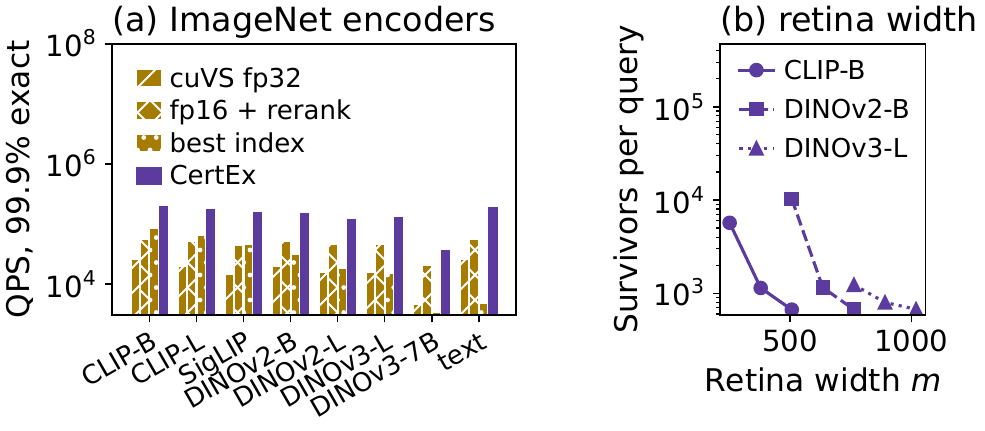}
\caption{Seven ImageNet encoders and the retina width.}
\label{fig:enc}
\end{figure}

We next ask whether the retina stays selective on modern vision encoders (Figure~\ref{fig:enc}). On all seven ImageNet encoders, the retina keeps
few survivors, so that \certex\ is exact for every query and faster than every
baseline at 99.9\% exact queries (Figure~\ref{fig:enc}(a)). Moreover, a larger
width tightens the bound, so that the survivors per query on
DINOv2~\cite{oquab2024dinov2} fall from 10.3k at $m=507$ to 671 at $m=763$
(Figure~\ref{fig:enc}(b)).

Cross-modal queries further test exactness under a shift of the query distribution, because CLIP~\cite{radford2021clip} text embeddings occupy a different region of the joint embedding space than the image embeddings~\cite{liang2022mindgap}. Consequently,
IVF-Flat~\cite{sivic2003videogoogle} is the only index that reaches 99.9\% exact
text queries, whereas \certex\ is exact for every text query at 41$\times$ its
throughput (Figure~\ref{fig:enc}(a), right), because certified
exclusion does not depend on the query distribution.

\subsection{Downstream Tasks}
\label{sec:downstream}

\begin{figure}[t]
\centering
\includegraphics[width=\linewidth]{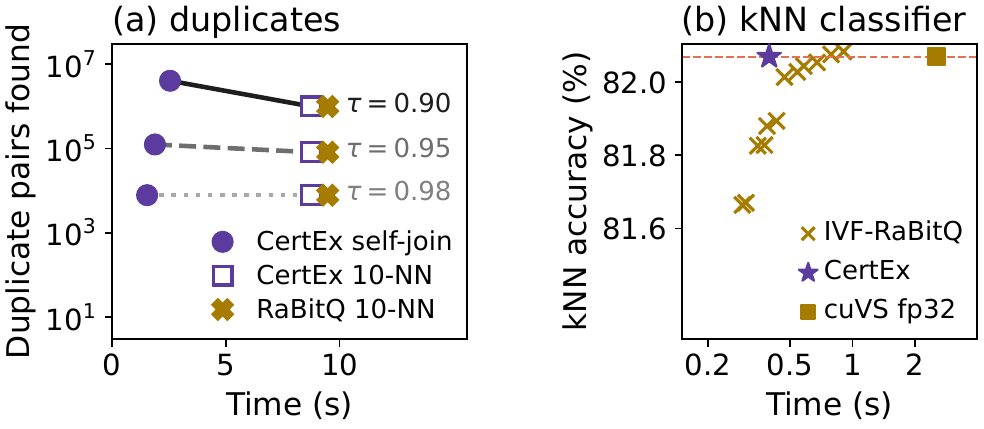}
\caption{Duplicates and kNN classification on DINOv2.}
\label{fig:down}
\end{figure}

\begin{figure}[t]
\centering
\includegraphics[width=\linewidth]{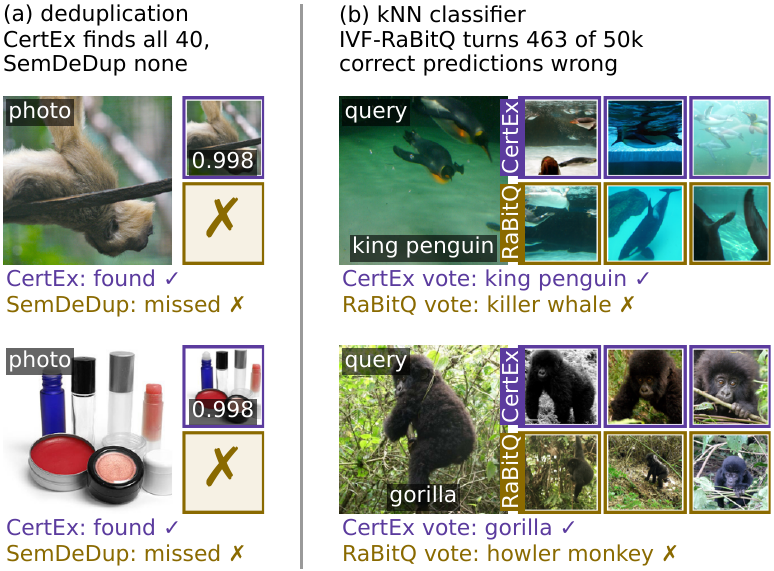}
\caption{Real ImageNet examples of both downstream tasks.}
\label{fig:qual}
\end{figure}

We finally ask what exact neighbors change in deduplication and kNN
classification (Figure~\ref{fig:down} and Figure~\ref{fig:qual}). For duplicate detection, the exact
self-join of the 1.28M ImageNet~\cite{deng2009imagenet} training images takes 1.5
to 2.5 seconds, which is 22$\times$ to 37$\times$ faster than an exhaustive fp32
join, whereas an exact 10-NN graph contains only 24.6\% of its 4.0M pairs at
$\tau=0.90$ (Figure~\ref{fig:down}(a)). Similarly,
SemDeDup~\cite{abbas2023semdedup} avoids the quadratic join by comparing only the
pairs inside each k-means cluster. Measured against the exact self-join of
\certex, this restriction misses up to 26\% of the pairs of the worst ImageNet
classes and up to 66\% of all pairs with 50{,}000 clusters (Table~\ref{A-tab:semdedup}).
At web scale, the exact self-join finds all 31.4 billion pairs among 1.97M of the
12.79M DataComp images~\cite{gadre2023datacomp} at $\tau=0.95$, whereas the
cluster restriction cannot detect the duplicates of 41{,}875 to 56{,}416 of these
images. Similarly, at $\tau=0.98$, the exact self-join finds all 40 ImageNet images
whose copies k-means places in other clusters, whereas SemDeDup with one cluster
per class finds none (Figure~\ref{fig:qual}(a)).

For the kNN classifier of DINO~\cite{caron2021dino}, \certex\ returns the exact
neighbors of all 50{,}000 validation images in 0.40 seconds, which is 6.4$\times$
faster than the fastest cuVS fp32 configuration (Figure~\ref{fig:down}(b)). In
contrast, the approximate neighbors of IVF-RaBitQ~\cite{gao2024rabitq} lower the
top-1 accuracy of 82.07\% by up to 0.42 points, so that a reported kNN accuracy
also measures the index. Specifically, IVF-RaBitQ turns 463 of the 50{,}000
correct predictions wrong and only 259 wrong ones correct. For
instance, the exact neighbors of a king penguin are king penguins, whereas its approximate ones are killer whales (Figure~\ref{fig:qual}(b)).

\subsection{Integer Retina}
\label{sec:int8}

The certificate of Proposition~\ref{prop:cert} needs only an upper bound on the
rounding error of each lifted vector, so that it extends to the 8-bit integer
tensor cores, which multiply integer codes and accumulate their products exactly
in 32-bit integers. Specifically, \certex\ stores the principal coordinates
$\mathbf{z}(\mathbf{x})\coloneqq\mathbf{V}(\mathbf{x}-\mathbf{c})$ of each database
vector as an integer code $\mathbf{w}_{\mathbf{x}}\in\{-127,\dots,127\}^m$ with one
scale $s_{\mathbf{x}}\coloneqq\|\mathbf{z}(\mathbf{x})\|_\infty/127$ per vector, so
that $\mathbf{z}(\mathbf{x})=s_{\mathbf{x}}(\mathbf{w}_{\mathbf{x}}+\mathbf{d}_{\mathbf{x}})$
with $\|\mathbf{d}_{\mathbf{x}}\|_\infty\le1/2$. The residual coordinate
$\rho(\mathbf{x})$ stays in fp32, because it is often the largest coordinate and
would otherwise coarsen the scale of all others. A query is encoded in the same
way. Splitting $\langle\mathbf{z}(\mathbf{q}),\mathbf{z}(\mathbf{x})\rangle$ into the
product of the codes and the rounding terms and applying the Cauchy--Schwarz
inequality to the rounding terms, as in Eq.~\eqref{eq:cert}, gives
\begin{equation}
\begin{split}
\langle\mathbf{z}(\mathbf{q}),\mathbf{z}(\mathbf{x})\rangle&\le s_{\mathbf{q}}s_{\mathbf{x}}\beta(\mathbf{q},\mathbf{x}),\quad\text{where}\\
\beta(\mathbf{q},\mathbf{x})&\coloneqq\langle\mathbf{w}_{\mathbf{q}},\mathbf{w}_{\mathbf{x}}\rangle
+\|\mathbf{d}_{\mathbf{q}}\|\|\mathbf{w}_{\mathbf{x}}\|\\
&\quad+\|\mathbf{d}_{\mathbf{x}}\|(\|\mathbf{w}_{\mathbf{q}}\|+\|\mathbf{d}_{\mathbf{q}}\|).
\end{split}
\label{eq:int8}
\end{equation}

Since $\langle\lift(\mathbf{q}),\lift(\mathbf{x})\rangle=\langle\mathbf{z}(\mathbf{q}),\mathbf{z}(\mathbf{x})\rangle+\rho(\mathbf{q})\rho(\mathbf{x})$,
Eq.~\eqref{eq:inner} and Eq.~\eqref{eq:int8} bound $\|\lift(\mathbf{q})-\lift(\mathbf{x})\|^2$
from below by
$B(\mathbf{q},\mathbf{x})\coloneqq a(\mathbf{q})+a(\mathbf{x})-2(s_{\mathbf{q}}s_{\mathbf{x}}\beta(\mathbf{q},\mathbf{x})+\rho(\mathbf{q})\rho(\mathbf{x}))$.
The integer retina evaluates $B$ in fp32 and keeps $\mathbf{x}$ if and only if the
computed value $\widehat{B}(\mathbf{q},\mathbf{x})$ satisfies
\begin{equation}
\begin{split}
\widehat{B}(\mathbf{q},\mathbf{x})&\le U^2(1+10^{-4})+\omega(\mathbf{q}),\quad\text{where}\\
\omega(\mathbf{q})&\coloneqq64u\bigl(\sqrt{a(\mathbf{q})}+\sqrt{M_a}\bigr)^2,
\end{split}
\label{eq:int8test}
\end{equation}
where $u=2^{-24}$ and $M_a$ denotes the database maximum of $a(\mathbf{x})$.

\begin{proposition}[No false dismissal with integer codes]
\label{prop:int8}
Assume $127^2m<2^{24}$. If $\|\lift(\mathbf{q})-\lift(\mathbf{x})\|\le U$, the integer
retina keeps $\mathbf{x}$.
\end{proposition}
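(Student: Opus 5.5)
The argument mirrors the proof of Proposition~\ref{prop:cert}. We first show that the exact value $B(\mathbf{q},\mathbf{x})$ is a lower bound on $\|\lift(\mathbf{q})-\lift(\mathbf{x})\|^2$. We then show that the fp32 evaluation $\widehat{B}$ exceeds $B$ by at most $\omega(\mathbf{q})$. Finally, we combine both bounds with the premise.

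\emph{Step 1 (integer product is exact).} Every code coordinate satisfies $|w_i|\le127$. Hence $|\langle\mathbf{w}_{\mathbf{q}},\mathbf{w}_{\mathbf{x}}\rangle|\le127^2m<2^{24}$, and every partial sum of the int32 accumulation obeys the same bound. So no overflow occurs, and the integer product is exactly representable in fp32. The tensor-core output therefore introduces no error. This is where the assumption $127^2m<2^{24}$ enters.

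\emph{Step 2 (exact lower bound).} We write $\mathbf{z}=s(\mathbf{w}+\mathbf{d})$ and expand
\[
\langle\mathbf{w}_{\mathbf{q}}+\mathbf{d}_{\mathbf{q}},\mathbf{w}_{\mathbf{x}}+\mathbf{d}_{\mathbf{x}}\rangle
=\langle\mathbf{w}_{\mathbf{q}},\mathbf{w}_{\mathbf{x}}\rangle+\langle\mathbf{d}_{\mathbf{q}},\mathbf{w}_{\mathbf{x}}\rangle+\langle\mathbf{w}_{\mathbf{q}}+\mathbf{d}_{\mathbf{q}},\mathbf{d}_{\mathbf{x}}\rangle .
\]
Cauchy--Schwarz and the triangle inequality applied to the last two terms give Eq.~\eqref{eq:int8}, using $s_{\mathbf{q}},s_{\mathbf{x}}\ge0$. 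Adding $\rho(\mathbf{q})\rho(\mathbf{x})$ and substituting into Eq.~\eqref{eq:inner} gives $B(\mathbf{q},\mathbf{x})\le\|\lift(\mathbf{q})-\lift(\mathbf{x})\|^2\le U^2$ by the premise. One caveat: the norms $\|\mathbf{d}\|$, $\|\mathbf{w}\|$ and $a(\cdot)$ that enter $\beta$ and $B$ are themselves computed in fp32. Their errors must either be rounded up when stored, as with $\operatorname{up}$ in Section~\ref{sec:retina}, or be counted in the evaluation error of Step 3. We take the second route.

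\emph{Step 3 (evaluation error; the main obstacle).} We bound $|\widehat{B}-B|\le\omega(\mathbf{q})$ by standard floating-point error analysis~\cite{higham2002accuracy} over the fixed, short expression tree of $B$. This tree consists of the three terms of $\beta$, the product with $s_{\mathbf{q}}s_{\mathbf{x}}$, the addition of $\rho(\mathbf{q})\rho(\mathbf{x})$, and the final sum with $a(\mathbf{q})+a(\mathbf{x})$. Each operation contributes a relative error of at most $u$, so the total is at most $cu$ times the sum of the absolute values of the terms, for a small constant $c$ (about a dozen roundings along the longest path). We then bound that sum of magnitudes by a quantity of the form $(\sqrt{a(\mathbf{q})}+\sqrt{a(\mathbf{x})})^2$, in four moves:
\begin{itemize}
\item $a(\mathbf{q})+a(\mathbf{x})\le(\sqrt{a(\mathbf{q})}+\sqrt{a(\mathbf{x})})^2$.
\item $\rho(\mathbf{q})\rho(\mathbf{x})\le\sqrt{a(\mathbf{q})a(\mathbf{x})}$.
\item $s\|\mathbf{w}\|\le\|\mathbf{z}\|+s\|\mathbf{d}\|$.
\item $s\|\mathbf{d}\|\le s\sqrt{m}/2\le\sqrt{m}\,\|\mathbf{z}\|/254$, since $s=\|\mathbf{z}\|_\infty/127$. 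This makes $s_{\mathbf{q}}s_{\mathbf{x}}\beta$ in absolute value at most a constant close to $1$ times $\|\mathbf{z}(\mathbf{q})\|\|\mathbf{z}(\mathbf{x})\|\le\sqrt{a(\mathbf{q})a(\mathbf{x})}$.
\end{itemize}
The delicate part is checking that the constant $c$, together with these slack factors, stays below $64$; I expect this constant bookkeeping to take most of the effort. Replacing $a(\mathbf{x})$ by $M_a$ then gives $\widehat{B}\le B+\omega(\mathbf{q})$.

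\emph{Step 4 (combine).} Steps 2 and 3 give $\widehat{B}\le U^2+\omega(\mathbf{q})\le U^2(1+10^{-4})+\omega(\mathbf{q})$, which is Eq.~\eqref{eq:int8test}. Step 3 already covers the fp32 rounding of the right-hand side, or we can argue that it is computed in fp64 and rounded up, as for $\theta(\mathbf{q})$. Hence the integer retina keeps $\mathbf{x}$.
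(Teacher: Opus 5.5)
Your proposal follows the paper's proof essentially step for step: the exact bound $B\le\|\ell(\mathbf{q})-\ell(\mathbf{x})\|^2\le U^2$, the exact int32 product that converts to fp32 without error because $127^2m<2^{24}$, and a bound on the fp32 evaluation error by a small multiple of $u$ times the sum of the magnitudes of the summands, controlled through $s(\|\mathbf{w}\|+\|\mathbf{d}\|)\le(1+\sqrt{m}/127)\|\mathbf{z}\|$. The bookkeeping you defer closes as in the paper once you note that the hypothesis is used a second time: it gives $m\le1040$, so the constant in your fourth move is $(1+\sqrt{m}/127)^2<1.6$ rather than close to 1. The sum of magnitudes is then at most $a(\mathbf{q})+a(\mathbf{x})+5.2\sqrt{a(\mathbf{q})a(\mathbf{x})}\le2.6\bigl(\sqrt{a(\mathbf{q})}+\sqrt{a(\mathbf{x})}\bigr)^2$, and at most 16 roundings per summand give $42u\bigl(\sqrt{a(\mathbf{q})}+\sqrt{a(\mathbf{x})}\bigr)^2\le\omega(\mathbf{q})$.
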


\begin{proof}
We bound the exact value, then the product of the codes, and finally the rounding
of the remaining arithmetic in three steps.

Step 1 bounds the exact value. By Eq.~\eqref{eq:int8} and the premise,
$B(\mathbf{q},\mathbf{x})\le\|\lift(\mathbf{q})-\lift(\mathbf{x})\|^2\le U^2$.

Step 2 concerns the product of the codes. The integer tensor cores return
$\langle\mathbf{w}_{\mathbf{q}},\mathbf{w}_{\mathbf{x}}\rangle$ exactly. Because
the magnitude of this product is at most $127^2m<2^{24}$, the conversion of the
product to fp32 is exact as well.

Step 3 bounds the rounding of the remaining arithmetic. The other scalars are
stored in fp32 and combined by fp32 multiplications and additions, so that each of
the four summands $a(\mathbf{q})$, $a(\mathbf{x})$,
$2s_{\mathbf{q}}s_{\mathbf{x}}\beta$, and $2\rho(\mathbf{q})\rho(\mathbf{x})$ of $B$
passes through at most 16 roundings. Hence
$|\widehat{B}-B|\le\gamma_{16}S$, where $\gamma_{16}\coloneqq16u/(1-16u)$ and $S$ is
the sum of the magnitudes of the four summands. Since
$s_{\mathbf{q}}\|\mathbf{w}_{\mathbf{q}}\|\le\|\mathbf{z}(\mathbf{q})\|+s_{\mathbf{q}}\|\mathbf{d}_{\mathbf{q}}\|$
and $\|\mathbf{d}_{\mathbf{q}}\|\le\sqrt{m}/2$, the scale satisfies
$s_{\mathbf{q}}(\|\mathbf{w}_{\mathbf{q}}\|+\|\mathbf{d}_{\mathbf{q}}\|)\le(1+\sqrt{m}/127)\|\mathbf{z}(\mathbf{q})\|$,
and likewise for $\mathbf{x}$. Because
$|\beta|\le(\|\mathbf{w}_{\mathbf{q}}\|+\|\mathbf{d}_{\mathbf{q}}\|)(\|\mathbf{w}_{\mathbf{x}}\|+\|\mathbf{d}_{\mathbf{x}}\|)$,
the third summand is therefore at most
$2(1+\sqrt{m}/127)^2\sqrt{a(\mathbf{q})a(\mathbf{x})}$, where the factor
$(1+\sqrt{m}/127)^2$ stays below 1.6 under the assumption. The fourth summand is at
most $2\sqrt{a(\mathbf{q})a(\mathbf{x})}$, so that
$S\le a(\mathbf{q})+a(\mathbf{x})+5.2\sqrt{a(\mathbf{q})a(\mathbf{x})}\le2.6\bigl(\sqrt{a(\mathbf{q})}+\sqrt{a(\mathbf{x})}\bigr)^2$.
Consequently, $|\widehat{B}-B|\le42u\bigl(\sqrt{a(\mathbf{q})}+\sqrt{a(\mathbf{x})}\bigr)^2\le\omega(\mathbf{q})$.
Combined with Step 1, this bound gives $\widehat{B}\le U^2+\omega(\mathbf{q})$, so
that the test of Eq.~\eqref{eq:int8test} keeps $\mathbf{x}$.
\end{proof}

The integer retina runs as one Triton~\cite{tillet2019triton} kernel, which
multiplies the codes on the integer tensor cores and evaluates
Eq.~\eqref{eq:int8test} on the accumulated products. Each database vector occupies
its $m$ codes, padded as the half-precision retina vectors, and five fp32
scalars, namely $a(\mathbf{x})$, $s_{\mathbf{x}}$, $\|\mathbf{d}_{\mathbf{x}}\|$,
$\|\mathbf{w}_{\mathbf{x}}\|$, and $\rho(\mathbf{x})$. At the same width, the
integer retina therefore needs about half of the $2(m+5)$ bytes of the
half-precision retina. As in the half-precision pass, the threshold $U$ comes from
the exact distances of the anchors, and the fovea verifies every survivor in fp64.

\begin{figure}[t]
\centering
\includegraphics[width=\linewidth]{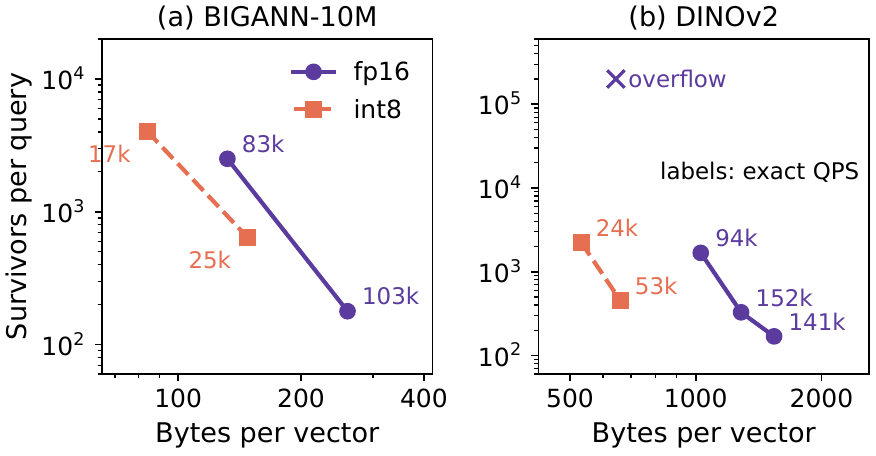}
\caption{Survivors of the fp16 and int8 retinas.}
\label{fig:s-int8}
\end{figure}

\begin{figure*}[t]
\centering
\includegraphics[width=\linewidth]{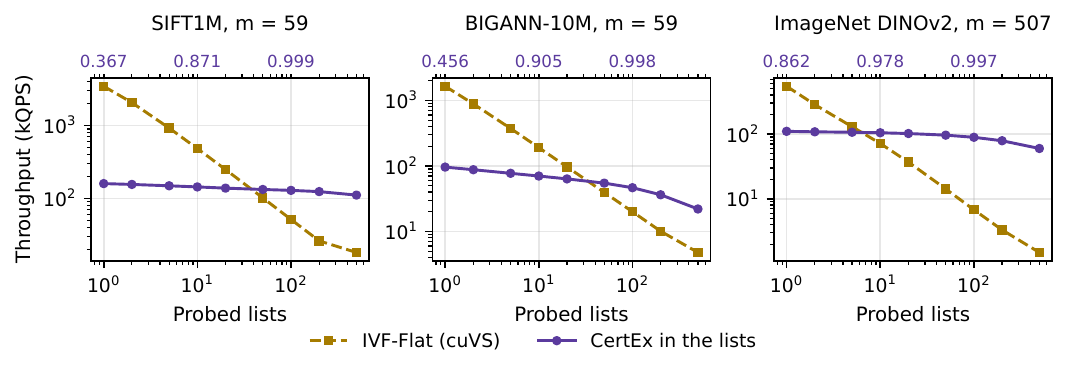}
\caption{Throughput of IVF-Flat and of \certex\ in the same probed lists, with the recall@10 of \certex\ on top.}
\label{fig:ivf}
\end{figure*}

Figure~\ref{fig:s-int8} compares the two retinas on BIGANN-10M and DINOv2 with an
anchor stride of 16 and 16 anchors, which is the configuration of the fastest
DINOv2 run. Every query of every run is exact. In addition, the validation of one million
pairs per run at their own lower bound finds no violation. At the same width, the integer
retina halves the device memory at the cost of more survivors, which rise from
327 to 452 per query on DINOv2 at $m=635$ and from 178 to 641 on BIGANN-10M at
$m=123$. At the same memory, the integer retina keeps far fewer survivors. On
BIGANN-10M, for example, 148 bytes per vector at $m=123$ keep 641 survivors per
query, whereas the half-precision retina keeps 2,507 at $m=59$ with 132 bytes. On
DINOv2, the half-precision retina with 644 bytes per vector at $m=315$ overflows the
survivor list, whereas the integer retina with 660 bytes per vector at $m=635$ is
exact at 53.5k queries per second, which is 2.7$\times$ the throughput of the
fastest cuVS search.

The integer retina is still slower than the half-precision retina at the same
width, with 53.5k against 152k queries per second on DINOv2 at $m=635$. Its kernel
is neither persistent nor tuned, whereas the integer tensor cores of the H100 offer
twice the fp16 rate per coordinate. The integer retina therefore suits datasets
that would otherwise exceed device memory, since it stores about twice as many
vectors in the same memory while its certificate keeps every answer exact.

\subsection{Certified Screening in Inverted Lists}
\label{sec:ivf}

Inverted-file indexes restrict every query to the lists nearest to it and compute the distance to every vector of these lists in full precision, so that their cost grows linearly with the number of probed lists. Certified exclusion applies inside the probed lists without changing the candidates of a query. Specifically, screening the retina rows of the probed lists under a threshold from their anchors, followed by ordered verification, returns the exact $k$ nearest neighbors among the vectors of the probed lists. The answer of \certex\ therefore equals the answer of an exhaustive scan of the same lists, and both reach the same recall. We evaluate this combination against IVF-Flat~\cite{sivic2003videogoogle} of cuVS~\cite{nvidia2025cuvs} on SIFT1M, BIGANN-10M, and ImageNet DINOv2 with $k=10$, 1,024 lists, and 1 to 500 probed lists on one H100.

The two methods share the centers of the lists, which cuVS trains, and every vector joins the list of its nearest center. \certex\ stores the vectors in list order with one global projection, so that a routed kernel screens the queries that probe a list against the rows of that list only. Every list keeps 128 evenly spaced anchors. The threshold of a query is the $k$-th exact distance among the 128 anchors of its probed lists whose retina products are largest. Ordered verification then computes the exact distances of the 32 survivors of smallest certified bound and drops every survivor whose bound exceeds the resulting $k$-th distance. Exhaustive search over the probed lists in double precision confirms that \certex\ is exact within the probed lists for 200 queries at every setting. The answers of the two methods coincide on every query of SIFT1M and DINOv2, whereas on the uint8 vectors of BIGANN-10M the lists of cuVS place some vectors differently, so that the two answers differ at few probed lists while their recall stays within 0.02.

Figure~\ref{fig:ivf} reports the throughput of both methods against the number of probed lists, with the recall@10 of \certex\ at three settings on the top axis. The throughput of IVF-Flat falls in proportion to the probed lists, because it computes the distance to every vector of these lists. In contrast, \certex\ verifies an almost constant number of vectors per query (100 to 244 on SIFT1M, 155 to 258 on BIGANN-10M, and 129 to 147 on DINOv2), so that its throughput falls slowly. \certex\ therefore overtakes IVF-Flat at 50 probed lists on SIFT1M and BIGANN-10M, where both reach recall 0.99, and at 10 probed lists on DINOv2, where both reach recall 0.978. At 200 probed lists, where the recall of \certex\ exceeds 0.999, \certex\ is 4.7, 3.6, and 23.7 times faster than IVF-Flat on the three datasets, and on DINOv2 it is 40.1 times faster at 500 probed lists.

\begin{table*}[t]
\centering
\caption{Exact search on one A100 ($k=10$): fastest exact \certex, fastest cuVS fp32, and speedups on both GPUs.}
\label{tab:s-a100}
\small
\setlength{\tabcolsep}{4pt}
\begin{tabular}{lrrrrrrrrr}
\toprule
Data & $N$ & $D$ & $m$ & $s$ & Surv. & \certex & cuVS fp32 & A100 & H100 \\
\midrule
SIFT1M & 1M & 128 & 123 & 8 & 91 & 312.1k & 34.7k & 9.0$\times$ & 11.0$\times$ \\
GIST1M & 1M & 960 & 379 & 8 & 380 & 69.1k & 7.44k & 9.3$\times$ & 7.7$\times$ \\
GloVe & 1.18M & 100 & 99 & 8 & 88 & 272.9k & 31.1k & 8.8$\times$ & 11.0$\times$ \\
BIGANN-10M & 10M & 128 & 59 & 8 & 1531 & 40.8k & 3.46k & 11.8$\times$ & 15.5$\times$ \\
\bottomrule
\end{tabular}
\end{table*}

At one to five probed lists, IVF-Flat is faster than \certex, because it reads only a few thousand rows per query, whereas the anchor stage of \certex\ costs 51 to 63~ms per 10,000 queries regardless of the probed lists. Certified exclusion therefore turns the exhaustive scan of the probed lists into a screen whose cost hardly depends on the number of probed lists, which matters at the high recall that applications sensitive to missed neighbors require. The retina of DINOv2 needs a wide projection, because the 768-dimensional embeddings spread their variance over many directions. For instance, at $m=255$ the survivors grow to 45,800 per query at 200 probed lists, and \certex\ only matches IVF-Flat, whereas $m=507$ keeps them at about 1,000.

\subsection{Exact Search on an A100}
\label{sec:a100}

The certificate of \certex\ assumes only half-precision products with fp32
accumulation~\cite{blanchard2020tensorcore}, which the tensor cores of other GPUs
provide as well. We therefore repeat the exact search of SIFT1M, GIST1M, GloVe,
and BIGANN-10M with the same code on an NVIDIA A100 80GB PCIe GPU
(Table~\ref{tab:s-a100}). The A100 runs as one MIG instance of the 7g.80gb
profile, which provides 98 of its 108 streaming multiprocessors and its full
memory. On this instance, \certex\ searches the anchor strides 8, 16, and 32 at the
widths around its H100 configuration. The baseline is the fastest configuration
of the same cuVS~\cite{nvidia2025cuvs} fp32 grid on the same instance.

On all four datasets, \certex\ is exact for every query at 8.8$\times$ to
11.8$\times$ the throughput of the fastest cuVS fp32 configuration, against
7.7$\times$ to 15.5$\times$ on the H100. In addition, the boundary validation of
Section~\ref{sec:valid} keeps every pair in all 24 runs. As on the H100, the
largest speedup belongs to BIGANN-10M~\cite{jegou2011bigann}, the largest of the
four datasets. Moving from the H100 to the A100, \certex\ loses 2.3$\times$ to
2.5$\times$ of its throughput, whereas exhaustive search loses 1.9$\times$ on
three datasets and 2.75$\times$ on the 960 dimensions of GIST1M. The speedup on
GIST1M therefore rises from 7.7$\times$ to 9.3$\times$. Moreover, the fastest
width changes on two datasets: GIST1M prefers $m=379$ instead of 507, and
BIGANN-10M prefers $m=59$ instead of 123. Because the width affects only the cost,
every width remains exact on both GPUs.

\section{Conclusion}
\label{sec:conclusion}

Exact kNN search for image retrieval, deduplication, and the memory of LLM agents
has so far required either a full-precision distance to every vector or an index
that gives up exactness. \certex\ shows that exact search needs
neither: a lifted lower bound with a certificate for its rounding error excludes almost
every vector in one half-precision tensor-core pass without a false dismissal. On
sixteen datasets of up to 100M vectors, \certex\ is exact for every query at up to 18.0$\times$ the throughput of
the fastest exhaustive GPU search. Exact search thus becomes a practical default
for visual retrieval, deduplication, and representation evaluation.

\clearpage
\onecolumn
\appendix

\section{Implementation}
\label{A-sec:impl}

\begin{algorithm}[!htb]
\caption{\certex\ search for a batch of queries.}
\label{A-alg:search}
\begin{algorithmic}[1]
\Require retina vectors $\{\hat{\mathbf{x}}:\mathbf{x}\in\mathcal{X}\}$, rotation $(\mathbf{c},\mathbf{V})$, queries $\mathcal{Q}$
\Ensure exact $\mathcal{N}_k(\mathbf{q})$ for each $\mathbf{q}\in\mathcal{Q}$
\State lift, round, and augment each query to $\hat{\mathbf{q}}$
\State score every $s$-th retina vector and keep the best as anchors $\mathcal{A}(\mathbf{q})$
\State $U(\mathbf{q})\gets$ $k$-th exact distance over $\mathcal{A}(\mathbf{q})$
\State $\theta(\mathbf{q})\gets$ threshold of Eq.~\eqref{eq:test}
\State $\mathcal{S}\gets\{(\mathbf{q},\mathbf{x}):\langle\hat{\mathbf{q}},\hat{\mathbf{x}}\rangle\ge\theta(\mathbf{q})\}$ in one tensor-core pass
\State compute exact distances for all pairs in $\mathcal{S}$
\State \Return the $k$ smallest distances for each query
\end{algorithmic}
\end{algorithm}

Algorithm~\ref{A-alg:search} summarizes the search of a batch of queries, which
runs as three Triton~\cite{tillet2019triton} kernels and one selection. The first
kernel is the retina pass, which runs as a persistent tensor-core kernel: its
units of parallel work stay resident and process tiles until the pass ends.
Triton calls such a unit a program. In the retina pass, each program keeps one
tile of query retina vectors and their thresholds in registers and streams every
$G$-th tile of database retina vectors from device memory, so that a query tile is
loaded once per pass instead of once per database tile. After a tile of products
is accumulated, the kernel compares every product with its threshold and appends
the surviving pairs to a global list, which we call the survivor list. Because
this list has a fixed capacity, the kernel checks the capacity and reports an
overflow instead of truncating the answer. The tile shapes are tuned once per
vector length, database size, and batch size. For every tile shape, the kernel
reduces over the vector length in one sequential loop without splitting it across
programs, so that the error model of Proposition~\ref{prop:cert} holds for every
tile configuration.

The second kernel performs the selection of the anchors, which we call the anchor
stage. This kernel scores every $s$-th retina vector by the same computed product
$\widehat{P}$ as the retina pass. After a tile of products is accumulated, the
kernel keeps only the best vector of each group of consecutive sampled vectors
and therefore never writes a score matrix either. The anchors are then the best
vectors of the best groups, whose exact distances set $U$. The third kernel
implements the fovea: it gathers the survivors and computes their distances by
direct differences with an fp64 accumulation. A stable sort of the flat survivor
list, first by distance and then by query, finally selects the $k$ smallest per
query. As for memory, each retina vector holds the $m$ principal coordinates, the
coordinate $\rho(\mathbf{x})$, and the four certificate coordinates, padded to 64,
128, or a multiple of 128 so that every tile of the tensor-core product is full.
The retina therefore occupies $2(m+5)$ bytes per vector plus one float for
$a(\mathbf{x})$. In addition, the anchor sample occupies $2(m+1)/s$ bytes per
vector.

\FloatBarrier
\section{Experimental Details}
\label{A-sec:baselines}

\begin{table}[t]
\centering
\caption{Evaluated datasets ($k=10$).}
\label{A-tab:datasets}
\footnotesize
\setlength{\tabcolsep}{3pt}
\begin{tabular}{lrrrl}
\toprule
Dataset & Vectors & Dim. & Queries & Feature \\
\midrule
SIFT1M~\cite{jegou2011pq} & 1.0M & 128 & 10k & SIFT~\cite{lowe2004sift} \\
GIST1M~\cite{jegou2011pq} & 1.0M & 960 & 1k & GIST~\cite{oliva2001gist} \\
GloVe~\cite{aumuller2020annbench} & 1.18M & 100 & 10k & word~\cite{pennington2014glove} \\
Deep-image~\cite{aumuller2020annbench} & 9.99M & 96 & 10k & CNN~\cite{babenko2016deep1b} \\
BIGANN-10M~\cite{jegou2011bigann} & 10M & 128 & 10k & SIFT \\
BIGANN-100M~\cite{jegou2011bigann} & 100M & 128 & 10k & SIFT \\
ImageNet~\cite{deng2009imagenet} & 1.28M & 512 & 10k & CLIP B/16 \\
ImageNet & 1.28M & 768 & 10k & DINOv2 B/14 \\
ImageNet & 1.28M & 1024 & 10k & DINOv3 L/16 \\
ImageNet & 1.28M & 768 & 10k & CLIP L/14 \\
ImageNet & 1.28M & 1024 & 10k & DINOv2 L/14 \\
ImageNet & 1.28M & 1152 & 10k & SigLIP so400m \\
ImageNet & 1.28M & 4096 & 10k & DINOv3 7B/16 \\
ImageNet & 1.28M & 512 & 10k & CLIP B/16 text \\
DataComp~\cite{gadre2023datacomp} & 12.79M & 768 & 10k & CLIP L/14 image \\
DataComp & 12.79M & 768 & 10k & CLIP L/14 text \\
\bottomrule
\end{tabular}
\end{table}
\begin{table}[t]
\centering
\caption{Variation over five timed repetitions.}
\label{A-tab:repeat}
\footnotesize
\setlength{\tabcolsep}{3pt}
\begin{tabular}{lrrr}
\toprule
Data & \certex\ QPS & cuVS fp32 QPS & Speedup \\
\midrule
SIFT1M & 719.9k $\pm$ 1.3\% & 65.8k $\pm$ 0.0\% & 10.94 $\pm$ 0.14 \\
GIST1M & 158.8k $\pm$ 0.8\% & 20.6k $\pm$ 1.6\% & 7.72 $\pm$ 0.09 \\
GloVe & 657.5k $\pm$ 0.5\% & 56.5k $\pm$ 0.2\% & 11.63 $\pm$ 0.05 \\
BIGANN-10M & 102.2k $\pm$ 0.4\% & 6.7k $\pm$ 0.0\% & 15.23 $\pm$ 0.07 \\
Deep-image & 104.2k $\pm$ 0.9\% & 7.2k $\pm$ 0.0\% & 14.37 $\pm$ 0.12 \\
CLIP-B & 190.0k $\pm$ 0.6\% & 25.7k $\pm$ 0.0\% & 7.40 $\pm$ 0.04 \\
DINOv2-B & 150.2k $\pm$ 0.4\% & 19.7k $\pm$ 0.0\% & 7.63 $\pm$ 0.03 \\
DINOv3-L & 131.4k $\pm$ 0.3\% & 15.5k $\pm$ 0.0\% & 8.46 $\pm$ 0.03 \\
\bottomrule
\end{tabular}
\end{table}
\begin{table}[t]
\centering
\caption{Methods and device bytes per SIFT1M vector.}
\label{A-tab:baselines}
\footnotesize
\setlength{\tabcolsep}{3pt}
\begin{tabular}{lllll}
\toprule
Method & Result & Screening & Index & Bytes \\
\midrule
cuVS exhaustive & exact & fp32 & none & 512 \\
fp16 + rerank & empirical & fp16, top-$k'$ & none & 768 \\
IVF-Flat & approx. & fp32 lists & k-means & 512 \\
IVF-PQ + rerank & approx. & PQ codes & k-means & 576 \\
RaBitQ + rerank & approx. & 3- or 8-bit & k-means & 526, 606 \\
CAGRA & approx. & fp32 walk & graph & 768 \\
\certex & certified & fp16 retina & rotation & 772 \\
\bottomrule
\end{tabular}
\end{table}

Table~\ref{A-tab:datasets} lists the evaluated datasets. Except for Section~\ref{sec:a100}, the experiments run on one
NVIDIA H100 GPU with 100 GB (93.6 GiB) of device memory. The GPU is connected by PCIe 5.0 x16 to a host with 48 virtual cores of
an Intel Xeon Platinum 8468 and 240 GB of memory. On the software side, the stack
consists of CUDA 12.6 with driver 560.35, PyTorch~\cite{paszke2019pytorch} 2.14,
Triton~\cite{tillet2019triton} 3.8, and cuVS~\cite{nvidia2025cuvs} 26.10. Every
method runs alone on the GPU. In addition, the timings exclude the transfer of the
dataset to the device. For the image embeddings, every image is resized to a
shorter side of 256 pixels and center-cropped to $224\times224$ before encoding in
half precision. The 10{,}000 cross-modal text prompts combine ten templates such
as \textit{a photo of a} with the 1{,}000 class names and are encoded by the text
encoder of CLIP ViT-B/16~\cite{radford2021clip}. The timings are stable: five repetitions of the headline
configurations vary by at most 1.3\% for \certex\ and 1.6\% for cuVS
(Table~\ref{A-tab:repeat}). These repetitions run in a separate session, so that their means differ from the
single headline runs of the other tables by up to 3\%.

The true neighbors of every floating-point dataset come from an fp64 search,
whose 128 best candidates per query are reranked by direct differences in fp64.
In contrast, the integer-valued BIGANN~\cite{jegou2011bigann} vectors make every
fp32 distance exact, so that their published ground truth serves directly. In both
cases, the comparison uses distances rather than identifiers, because several
datasets contain exact duplicates with equal distances. On BIGANN-10M and
BIGANN-100M, the returned identifiers therefore differ from the shipped ones for
a few queries, while every returned distance equals the true distance.

Table~\ref{A-tab:baselines} summarizes the baselines, each of which isolates a
different source of the speed of \certex. Specifically, the exhaustive search of
NVIDIA cuVS~\cite{nvidia2025cuvs} in fp32 represents full-precision exhaustive
search. For this baseline, cuVS returns the 10 nearest vectors 1.2$\times$ to
4.1$\times$ faster when it first selects more than 64 candidates. Every
comparison therefore uses the fastest cuVS configuration, which selects between 65
and 100 candidates and keeps the 10 nearest (Section~\ref{A-sec:results}). The second
baseline runs the same search in half precision and selects the $k'$ best
candidates for $k'$ between 10 and 256, which the exact-distance kernel of the
fovea of \certex\ then reranks. This baseline thus isolates the contribution of
half precision alone.

The indexes cover quantized and graph-based approximate search with the grids of
their cuVS 26.10 implementations. Specifically, the grid of
IVF-RaBitQ~\cite{gao2024rabitq} covers 3-bit and 8-bit codes, the number of probed
lists, and reranking ratios from 1 to 10, with 1024 lists for the datasets of
about one million vectors and 4096 lists for BIGANN-10M and Deep-image. The grid
of CAGRA~\cite{ootomo2024cagra} uses graph degree 64, internal candidate lists of
32 to 1024 entries, and search widths of 1, 4, and 8. Finally,
IVF-Flat~\cite{sivic2003videogoogle} and IVF-PQ~\cite{jegou2011pq} use 1024
k-means lists and probe 1 to 1024 of them. For every quantized configuration, the
candidates are reranked by the same exact-distance kernel as the fovea of \certex.
This kernel is about two orders of magnitude faster than the refinement routine
of cuVS on these batches and therefore favors the baselines.

\FloatBarrier
\section{Certificate and Configurations}
\label{A-sec:valid-supp}

\begin{table}[t]
\centering
\caption{Measured accumulation error against the bound.}
\label{A-tab:acc}
\footnotesize
\setlength{\tabcolsep}{3pt}
\begin{tabular}{lrrrrrr}
\toprule
Data & MP & Pairs & Tiles & Error ratio & $c$ & $16\,\mathrm{MP}$ \\
\midrule
SIFT1M & 128 & 101M & 9 & 0.67\% & 14 & 2048 \\
GIST1M & 512 & 49M & 14 & 0.76\% & 63 & 8192 \\
GloVe & 128 & 101M & 9 & 0.42\% & 9 & 2048 \\
DINOv2-B & 640 & 101M & 14 & 0.47\% & 48 & 10240 \\
DINOv3-7B & 3584 & 101M & 14 & 0.56\% & 323 & 57344 \\
BIGANN-10M & 128 & 101M & 9 & 0.75\% & 15 & 2048 \\
\bottomrule
\end{tabular}
\end{table}

Table~\ref{A-tab:acc} measures the error model of the tensor-core product with a
kernel of the same tile and loop structure as the retina pass. The measurement
covers every tile configuration, because the retina pass may use any of them
depending on the tuned tile shapes. Specifically, for 49M to 101M pairs per dataset,
the largest error of the computed product is 0.42\% to 0.76\% of the certified
bound. Equivalently, the measured constant $c$ in
$|\widehat{P}-P|\le c\,u\sum_i|\hat{q}_i\hat{x}_i|$ never exceeds $0.13\,L$,
whereas the certificate uses $16\,L$, which leaves a margin of more than two
orders of magnitude.

Table~\ref{A-tab:s-config} lists the configuration of every headline run, which is
the fastest exact run over the evaluated widths and anchor strides, together with
the violations in the boundary test of Section~\ref{sec:valid}, of which there
are none. As the table shows, the selected width follows the variance of each
dataset: it keeps 123 of the 128 SIFT coordinates and 99 of the 100 GloVe
coordinates, whereas 507 of the 960 GIST coordinates, 507 to 891 coordinates of
the 512- to 1152-dimensional ImageNet embeddings, and 3579 of the 4096
DINOv3-7B~\cite{simeoni2025dinov3} coordinates suffice. Similarly, the anchor
stride ranges from 8 to 32. Sixteen anchors per query suffice, because they
provide a threshold with as few survivors as 128 anchors.

\begin{table*}[t]
\centering
\caption{Retina configuration and measurements per dataset.}
\label{A-tab:s-config}
\small
\setlength{\tabcolsep}{4pt}
\begin{tabular}{lrrrrrrrr}
\toprule
Data & $m$ & $L$ & Stride & Anchors & Survivors & Build (s) & Violations & QPS \\
\midrule
SIFT1M & 123 & 128 & 8 & 16 & 91 & 0.03 & 0 & 719.6k \\
GIST1M & 507 & 512 & 8 & 16 & 172 & 0.17 & 0 & 157.6k \\
GloVe & 99 & 128 & 8 & 16 & 88 & 0.35 & 0 & 642.8k \\
BIGANN-10M & 123 & 128 & 16 & 16 & 178 & 0.54 & 0 & 103.3k \\
BIGANN-100M & 59 & 64 & 16 & 16 & 4847 & 1.40 & 0 & 12.0k \\
Deep-image & 95 & 128 & 32 & 16 & 338 & 0.16 & 0 & 104.1k \\
ImageNet-CLIP-B & 507 & 512 & 16 & 16 & 167 & 0.13 & 0 & 195.1k \\
ImageNet-CLIP-L & 507 & 512 & 8 & 16 & 139 & 0.18 & 0 & 179.6k \\
ImageNet-SigLIP & 635 & 640 & 16 & 16 & 212 & 0.30 & 0 & 157.7k \\
ImageNet-DINOv2-B & 635 & 640 & 16 & 16 & 327 & 0.19 & 0 & 152.2k \\
ImageNet-DINOv2-L & 891 & 896 & 16 & 16 & 281 & 0.33 & 0 & 121.0k \\
ImageNet-DINOv3-L & 763 & 768 & 16 & 16 & 326 & 0.30 & 0 & 131.3k \\
ImageNet-DINOv3-7B & 3579 & 3584 & 16 & 16 & 257 & 2.47 & 0 & 36.2k \\
Text-to-image & 507 & 512 & 16 & 16 & 185 & 0.13 & 0 & 192.5k \\
DataComp images & 507 & 512 & 16 & 16 & 5193 & 1.14 & 0 & 14.1k \\
DataComp captions & 635 & 640 & 32 & 16 & 711 & 1.69 & 0 & 18.1k \\
\bottomrule
\end{tabular}
\end{table*}

\FloatBarrier
\section{Exhaustive Search and Scaling}
\label{A-sec:results}

\begin{table*}[t]
\centering
\caption{cuVS fp32 exhaustive search, QPS by selected candidates, 10 nearest kept ($^*$: not all queries exact).}
\label{A-tab:s-fp32}
\small
\setlength{\tabcolsep}{4pt}
\begin{tabular}{lrrrrrrrr}
\toprule
Data & $k=10$ & $k=32$ & $k=64$ & $k=65$ & $k=100$ & $k=128$ & $k=256$ & Best/$k{=}10$ \\
\midrule
SIFT1M & 44.9k & 44.9k & 44.3k & 65.5k & 65.6k & 65.0k & 59.4k & 1.46$\times$ \\
GIST1M & 6.1k$^*$ & 6.1k$^*$ & 6.0k$^*$ & 20.5k & 20.3k & 20.4k & 19.6k & 3.36$\times$ \\
GloVe & 43.1k$^*$ & 43.1k$^*$ & 42.3k$^*$ & 56.5k$^*$ & 58.2k$^*$ & 56.1k$^*$ & 52.5k$^*$ & 1.35$\times$ \\
BIGANN-10M & 4.5k & 4.5k & 4.5k & 6.7k & 6.7k & 6.6k & 6.1k & 1.48$\times$ \\
BIGANN-100M & 447 & 444 & 447 & 669 & 668 & 663 & 609 & 1.49$\times$ \\
Deep-image & 5.9k$^*$ & 5.9k$^*$ & 5.9k$^*$ & 7.3k$^*$ & 7.3k$^*$ & 7.2k$^*$ & 6.6k$^*$ & 1.24$\times$ \\
ImageNet-CLIP-B & 9.2k$^*$ & 9.2k$^*$ & 9.1k$^*$ & 25.8k$^*$ & 26.1k$^*$ & 25.7k$^*$ & 24.9k$^*$ & 2.85$\times$ \\
ImageNet-CLIP-L & 6.1k$^*$ & 6.1k$^*$ & 6.0k$^*$ & 19.6k$^*$ & 19.7k$^*$ & 19.5k$^*$ & 19.0k$^*$ & 3.24$\times$ \\
ImageNet-SigLIP & 4.1k$^*$ & 4.1k$^*$ & 4.0k$^*$ & 14.3k$^*$ & 14.4k$^*$ & 14.3k$^*$ & 14.0k$^*$ & 3.56$\times$ \\
ImageNet-DINOv2-B & 6.1k & 6.1k & 6.0k & 19.5k & 19.7k & 19.5k & 19.0k & 3.24$\times$ \\
ImageNet-DINOv2-L & 4.5k$^*$ & 4.6k$^*$ & 4.5k$^*$ & 15.5k$^*$ & 15.6k$^*$ & 15.5k$^*$ & 15.2k$^*$ & 3.44$\times$ \\
ImageNet-DINOv3-L & 4.5k$^*$ & 4.6k$^*$ & 4.5k$^*$ & 15.5k$^*$ & 15.7k$^*$ & 15.5k$^*$ & 15.2k$^*$ & 3.44$\times$ \\
ImageNet-DINOv3-7B & 1.1k$^*$ & 1.1k$^*$ & 1.1k$^*$ & 4.6k$^*$ & 4.6k$^*$ & 4.6k$^*$ & 4.6k$^*$ & 4.13$\times$ \\
Text-to-image & 9.1k & 9.2k & 8.9k & 25.7k & 26.0k & 25.6k & 24.7k & 2.84$\times$ \\
DataComp images & 610$^*$ & 611$^*$ & 597$^*$ & 2.0k$^*$ & 2.0k$^*$ & 2.0k$^*$ & 1.9k$^*$ & 3.26$\times$ \\
DataComp captions & 610$^*$ & 611$^*$ & 600$^*$ & 2.0k$^*$ & 2.0k & 2.0k$^*$ & 1.9k$^*$ & 3.25$\times$ \\
\bottomrule
\end{tabular}
\end{table*}

Table~\ref{A-tab:s-fp32} reports the full-precision exhaustive search of
cuVS~\cite{nvidia2025cuvs} for several numbers $k$ of selected candidates, of
which the 10 nearest form the answer. Section~\ref{sec:throughput} compares \certex\ with the
fastest entry of each row, which is a stronger baseline than the default $k=10$.
In terms of exactness, the integer-valued BIGANN prefixes and SIFT1M are exact in
every entry. In contrast, on the other datasets, the expanded fp32 distance leaves
0.01\% to 1.1\% of the queries inexact in the fastest configuration, whereas
\certex\ is exact everywhere. Among these datasets, the largest fraction occurs on
the near-duplicate images of DataComp~\cite{gadre2023datacomp}.

Table~\ref{A-tab:s-scaling} lists the fastest configuration of each width on the
BIGANN prefixes~\cite{jegou2011bigann}. On these prefixes, the width $m=123$ is
fastest up to 20M vectors, where its tight bound leaves about $ks$ survivors.
However, the anchor stage scores $N/s$ sampled vectors, so that its cost grows
with $N$ whereas the cost of the fovea does not. The best stride of the width 123
therefore grows from 8 to 32. Beyond 20M vectors, the retina pass itself dominates
the search. The width 59 therefore becomes the fastest configuration. Although the
width 59 admits more survivors, the fovea verifies them in a fraction of the time
that the shorter retina vectors save.

\begin{table*}[t]
\centering
\caption{BIGANN prefixes: best QPS (stride, survivors per query) per width, and the fastest exact cuVS fp32.}
\label{A-tab:s-scaling}
\small
\setlength{\tabcolsep}{4pt}
\begin{tabular}{lrrrr}
\toprule
$N$ & $m=59$ & $m=123$ & cuVS fp32 & Speedup \\
\midrule
1M & 466.3k (8, 816) & 722.3k (8, 89) & 65.7k & 10.99$\times$ \\
2M & 302.7k (8, 986) & 435.8k (8, 89) & 33.0k & 13.19$\times$ \\
5M & 158.7k (8, 1261) & 198.0k (16, 178) & 13.3k & 14.86$\times$ \\
10M & 93.2k (8, 1528) & 103.3k (16, 178) & 6.7k & 15.47$\times$ \\
20M & 49.1k (16, 3034) & 53.1k (32, 353) & 3.4k & 15.80$\times$ \\
50M & 23.1k (16, 3949) & 21.5k (32, 356) & 1.3k & 17.26$\times$ \\
100M & 12.0k (16, 4847) & 10.9k (64, 711) & 669 & 17.97$\times$ \\
\bottomrule
\end{tabular}
\end{table*}

Table~\ref{A-tab:s-fp16} reports the half-precision exhaustive baseline for every
evaluated $k'$. As for fp32, the cuVS selection of the $k'$ best candidates
becomes 2.1$\times$ to 25$\times$ faster when $k'$ exceeds 64. The best
configuration of this baseline therefore uses $k'=100$ or $k'=128$, even though
smaller values already recover the exact neighbors. Even this configuration
remains 1.8$\times$ to 12.9$\times$ slower than \certex.

\begin{table*}[t]
\centering
\caption{Half-precision exhaustive search with exact reranking, QPS by $k'$.}
\label{A-tab:s-fp16}
\small
\setlength{\tabcolsep}{4pt}
\begin{tabular}{lrrrrrrr}
\toprule
Data & $k'=10$ & $k'=32$ & $k'=64$ & $k'=100$ & $k'=128$ & $k'=256$ & Retina \\
\midrule
SIFT1M & 33.7k & 33.6k & 32.6k & 90.0k & 89.3k & 78.7k & 719.6k \\
GIST1M & 4.7k (99.3\%) & 4.7k & 4.5k & 64.4k & 64.9k & 57.3k & 157.6k \\
GloVe & 33.1k (99.3\%) & 33.1k & 32.2k & 74.1k & 71.1k & 65.2k & 642.8k \\
BIGANN-10M & 3.4k & 3.4k & 3.3k & 9.3k & 9.2k & 8.2k & 103.3k \\
BIGANN-100M & 336 & 337 & 329 & 934 & 923 & 825 & 12.0k \\
Deep-image & 4.6k (99.3\%) & 4.6k & 4.5k & 9.3k & 9.2k & 8.2k & 104.1k \\
ImageNet-CLIP-B & 6.9k (99.5\%) & 6.9k & 6.7k & 56.8k & 55.0k & 51.0k & 195.1k \\
ImageNet-CLIP-L & 4.6k (99.7\%) & 4.6k & 4.5k & 51.1k & 49.6k & 46.1k & 179.6k \\
ImageNet-SigLIP & 3.1k (99.7\%) & 3.1k & 3.0k & 43.7k & 42.5k & 39.7k & 157.7k \\
ImageNet-DINOv2-B & 4.6k (99.8\%) & 4.6k & 4.5k & 51.1k & 49.6k & 46.1k & 152.2k \\
ImageNet-DINOv2-L & 3.5k (99.7\%) & 3.5k & 3.4k & 46.0k & 44.7k & 41.8k & 121.0k \\
ImageNet-DINOv3-L & 3.5k (99.9\%) & 3.5k & 3.4k & 45.9k & 44.7k & 41.8k & 131.3k \\
ImageNet-DINOv3-7B & 862 (99.9\%) & 860 & 809 & 20.4k & 20.1k & 19.3k & 36.2k \\
Text-to-image & 6.9k (94.1\%) & 6.9k & 6.7k & 57.0k & 55.0k & 51.1k & 192.5k \\
DataComp images & 459 (98.5\%) & 459 (99.9\%) & 442 (99.9\%) & 6.1k (99.9\%) & 6.1k (99.9\%) & 5.5k (100.0\%) & 14.1k \\
DataComp captions & 458 (91.6\%) & 459 (100.0\%) & 449 (100.0\%) & 6.1k (100.0\%) & 6.1k & 5.5k & 18.1k \\
\bottomrule
\end{tabular}
\end{table*}

\FloatBarrier
\section{Comparison with the Indexes}
\label{A-sec:indexes}

Table~\ref{A-tab:s-rabitq} reports the best IVF-RaBitQ~\cite{gao2024rabitq}
configuration at each threshold from a grid of up to 154 configurations per
dataset. Against these configurations, \certex\ is faster at every threshold from
recall 0.9995 upward on all four descriptor and word datasets. For instance, at
recall 0.9995, \certex\ is faster by 12.6$\times$ on GloVe and 2.4$\times$ on
BIGANN-10M. On SIFT1M and GIST1M, \certex\ reaches this recall where IVF-RaBitQ
does not. When every query must be exact, \certex\ is faster by 46.2$\times$ on
SIFT1M and 4.4$\times$ on GIST1M. Because the error bound of RaBitQ is
probabilistic and its probing skips lists, its exact points hold for the
evaluated queries only, whereas the exactness of \certex\ holds for every query.

In Tables~\ref{A-tab:s-rabitq} to~\ref{A-tab:s-ivf}, recall counts the returned identifiers, whereas a query is exact
when its returned distances are (Section~\ref{A-sec:baselines}). Duplicate vectors with equal distances make the two
criteria differ. On SIFT1M, for instance, CAGRA is exact for 99.99\% of the queries but stays below recall 0.9995,
because it returns other identifiers among tied duplicates.

Table~\ref{A-tab:s-cagra} reports the best CAGRA~\cite{ootomo2024cagra}
configuration at each threshold from a grid of 18 configurations per dataset,
with the build time of each graph. By design, CAGRA computes full-precision
distances along its traversal, so that its errors come only from the vectors that
the traversal never visits. Enlarging the candidate list therefore raises recall
without guaranteeing exactness. Within the grid, CAGRA is exact for all queries only on SIFT1M among the
thirteen datasets. Finally, Table~\ref{A-tab:s-ivf} reports the
inverted-file indexes IVF-Flat~\cite{sivic2003videogoogle} and
IVF-PQ~\cite{jegou2011pq}, whose exact configurations probe almost every list and
thus approach exhaustive search over the lists.

\begin{table*}[t]
\centering
\caption{IVF-RaBitQ best throughput per threshold versus the exact retina.}
\label{A-tab:s-rabitq}
\small
\setlength{\tabcolsep}{4pt}
\begin{tabular}{llrlrr}
\toprule
Data & Threshold & RaBitQ QPS & bits, lists/probes, rerank & Retina QPS & Retina speedup \\
\midrule
SIFT1M & recall $\ge$ 0.9995 & not reached & -- & 719.6k & only exact method \\
 & exact queries $\ge$ 99.9\% & 210.2k & 3b, 1024/200, r6 & 719.6k & 3.42$\times$ \\
 & exact queries $\ge$ 99.99\% & 15.6k & 3b, 1024/300, r10 & 719.6k & 46.21$\times$ \\
 & all queries exact & 15.6k & 3b, 1024/300, r10 & 719.6k & 46.21$\times$ \\
\midrule
GIST1M & recall $\ge$ 0.9995 & not reached & -- & 157.6k & only exact method \\
 & exact queries $\ge$ 99.9\% & 46.1k & 8b, 1024/300, r2 & 157.6k & 3.42$\times$ \\
 & exact queries $\ge$ 99.99\% & 36.0k & 8b, 1024/500, r2 & 157.6k & 4.38$\times$ \\
 & all queries exact & 36.0k & 8b, 1024/500, r2 & 157.6k & 4.38$\times$ \\
\midrule
GloVe & recall $\ge$ 0.9995 & 51.1k & 8b, 1024/700, r4 & 642.8k & 12.59$\times$ \\
 & exact queries $\ge$ 99.9\% & 39.5k & 8b, 1024/1024, r4 & 642.8k & 16.29$\times$ \\
 & exact queries $\ge$ 99.99\% & not reached & -- & 642.8k & only exact method \\
 & all queries exact & not reached & -- & 642.8k & only exact method \\
\midrule
BIGANN-10M & recall $\ge$ 0.9995 & 42.7k & 8b, 4096/500, r3 & 103.3k & 2.42$\times$ \\
 & exact queries $\ge$ 99.9\% & not reached & -- & 103.3k & only exact method \\
 & exact queries $\ge$ 99.99\% & not reached & -- & 103.3k & only exact method \\
 & all queries exact & not reached & -- & 103.3k & only exact method \\
\midrule
Deep-image & recall $\ge$ 0.9995 & not reached & -- & 104.1k & only exact method \\
 & exact queries $\ge$ 99.9\% & not reached & -- & 104.1k & only exact method \\
 & exact queries $\ge$ 99.99\% & not reached & -- & 104.1k & only exact method \\
 & all queries exact & not reached & -- & 104.1k & only exact method \\
\midrule
ImageNet-CLIP-B & recall $\ge$ 0.9995 & 142.6k & 8b, 1024/150, r2 & 195.1k & 1.37$\times$ \\
 & exact queries $\ge$ 99.9\% & 84.3k & 8b, 1024/300, r3 & 195.1k & 2.32$\times$ \\
 & exact queries $\ge$ 99.99\% & 44.6k & 3b, 1024/700, r4 & 195.1k & 4.37$\times$ \\
 & all queries exact & 44.6k & 3b, 1024/700, r4 & 195.1k & 4.37$\times$ \\
\midrule
ImageNet-CLIP-L & recall $\ge$ 0.9995 & 86.6k & 3b, 1024/200, r3 & 179.6k & 2.07$\times$ \\
 & exact queries $\ge$ 99.9\% & 64.7k & 3b, 1024/300, r3 & 179.6k & 2.78$\times$ \\
 & exact queries $\ge$ 99.99\% & 44.1k & 3b, 1024/500, r3 & 179.6k & 4.07$\times$ \\
 & all queries exact & 44.1k & 3b, 1024/500, r3 & 179.6k & 4.07$\times$ \\
\midrule
ImageNet-SigLIP & recall $\ge$ 0.9995 & 48.2k & 3b, 1024/300, r2 & 157.7k & 3.27$\times$ \\
 & exact queries $\ge$ 99.9\% & 46.0k & 3b, 1024/300, r3 & 157.7k & 3.43$\times$ \\
 & exact queries $\ge$ 99.99\% & 17.4k & 3b, 1024/1024, r3 & 157.7k & 9.06$\times$ \\
 & all queries exact & 17.4k & 3b, 1024/1024, r3 & 157.7k & 9.06$\times$ \\
\midrule
ImageNet-DINOv2-B & recall $\ge$ 0.9995 & 39.0k & 3b, 1024/500, r2 & 152.2k & 3.90$\times$ \\
 & exact queries $\ge$ 99.9\% & 31.8k & 8b, 1024/500, r3 & 152.2k & 4.79$\times$ \\
 & exact queries $\ge$ 99.99\% & 21.7k & 3b, 1024/1024, r3 & 152.2k & 7.02$\times$ \\
 & all queries exact & 21.7k & 3b, 1024/1024, r3 & 152.2k & 7.02$\times$ \\
\midrule
ImageNet-DINOv2-L & recall $\ge$ 0.9995 & 23.0k & 3b, 1024/500, r3 & 121.0k & 5.27$\times$ \\
 & exact queries $\ge$ 99.9\% & 18.4k & 3b, 1024/700, r3 & 121.0k & 6.58$\times$ \\
 & exact queries $\ge$ 99.99\% & 14.2k & 3b, 1024/1024, r3 & 121.0k & 8.52$\times$ \\
 & all queries exact & 14.2k & 3b, 1024/1024, r3 & 121.0k & 8.52$\times$ \\
\midrule
ImageNet-DINOv3-L & recall $\ge$ 0.9995 & 17.9k & 3b, 1024/700, r4 & 131.3k & 7.35$\times$ \\
 & exact queries $\ge$ 99.9\% & 15.4k & 3b, 1024/1024, r2 & 131.3k & 8.53$\times$ \\
 & exact queries $\ge$ 99.99\% & 14.6k & 3b, 1024/1024, r3 & 131.3k & 9.02$\times$ \\
 & all queries exact & 14.6k & 3b, 1024/1024, r3 & 131.3k & 9.02$\times$ \\
\midrule
Text-to-image & recall $\ge$ 0.9995 & not reached & -- & 192.5k & only exact method \\
 & exact queries $\ge$ 99.9\% & not reached & -- & 192.5k & only exact method \\
 & exact queries $\ge$ 99.99\% & not reached & -- & 192.5k & only exact method \\
 & all queries exact & not reached & -- & 192.5k & only exact method \\
\bottomrule
\end{tabular}
\end{table*}
\begin{table*}[t]
\centering
\caption{CAGRA with graph degree 64 (build time in parentheses): best QPS per threshold and the speedup of \certex.}
\label{A-tab:s-cagra}
\small
\setlength{\tabcolsep}{4pt}
\begin{tabular}{llrlr}
\toprule
Data (build) & Threshold & CAGRA QPS & itopk, width & \certex\ speedup \\
\midrule
SIFT1M (2.2\,s) & recall $\ge$ 0.9995 & not reached & -- & only exact method \\
 & exact $\ge$ 99.9\% & 158.1k & 512, 4 & 4.55$\times$ \\
 & exact $\ge$ 99.99\% & 86.1k & 1024, 8 & 8.36$\times$ \\
 & all exact & 86.1k & 1024, 8 & 8.36$\times$ \\
\midrule
GIST1M (4.0\,s) & recall $\ge$ 0.9995 & not reached & -- & only exact method \\
 & exact $\ge$ 99.9\% & not reached & -- & only exact method \\
 & exact $\ge$ 99.99\% & not reached & -- & only exact method \\
 & all exact & not reached & -- & only exact method \\
\midrule
GloVe (2.4\,s) & recall $\ge$ 0.9995 & not reached & -- & only exact method \\
 & exact $\ge$ 99.9\% & not reached & -- & only exact method \\
 & exact $\ge$ 99.99\% & not reached & -- & only exact method \\
 & all exact & not reached & -- & only exact method \\
\midrule
BIGANN-10M (29.7\,s) & recall $\ge$ 0.9995 & 69.3k & 1024, 1 & 1.49$\times$ \\
 & exact $\ge$ 99.9\% & 69.3k & 1024, 1 & 1.49$\times$ \\
 & exact $\ge$ 99.99\% & not reached & -- & only exact method \\
 & all exact & not reached & -- & only exact method \\
\midrule
Deep-image (19.8\,s) & recall $\ge$ 0.9995 & 91.2k & 1024, 4 & 1.14$\times$ \\
 & exact $\ge$ 99.9\% & not reached & -- & only exact method \\
 & exact $\ge$ 99.99\% & not reached & -- & only exact method \\
 & all exact & not reached & -- & only exact method \\
\midrule
ImageNet-CLIP-B (3.4\,s) & recall $\ge$ 0.9995 & 54.0k & 512, 8 & 3.62$\times$ \\
 & exact $\ge$ 99.9\% & 21.5k & 1024, 8 & 9.07$\times$ \\
 & exact $\ge$ 99.99\% & not reached & -- & only exact method \\
 & all exact & not reached & -- & only exact method \\
\midrule
ImageNet-CLIP-L (4.1\,s) & recall $\ge$ 0.9995 & 14.5k & 1024, 4 & 12.38$\times$ \\
 & exact $\ge$ 99.9\% & not reached & -- & only exact method \\
 & exact $\ge$ 99.99\% & not reached & -- & only exact method \\
 & all exact & not reached & -- & only exact method \\
\midrule
ImageNet-SigLIP (6.3\,s) & recall $\ge$ 0.9995 & not reached & -- & only exact method \\
 & exact $\ge$ 99.9\% & not reached & -- & only exact method \\
 & exact $\ge$ 99.99\% & not reached & -- & only exact method \\
 & all exact & not reached & -- & only exact method \\
\midrule
ImageNet-DINOv2-B (5.3\,s) & recall $\ge$ 0.9995 & 16.3k & 1024, 1 & 9.36$\times$ \\
 & exact $\ge$ 99.9\% & 16.3k & 1024, 1 & 9.36$\times$ \\
 & exact $\ge$ 99.99\% & not reached & -- & only exact method \\
 & all exact & not reached & -- & only exact method \\
\midrule
ImageNet-DINOv2-L (6.9\,s) & recall $\ge$ 0.9995 & 12.1k & 1024, 4 & 10.01$\times$ \\
 & exact $\ge$ 99.9\% & 12.1k & 1024, 4 & 10.01$\times$ \\
 & exact $\ge$ 99.99\% & not reached & -- & only exact method \\
 & all exact & not reached & -- & only exact method \\
\midrule
ImageNet-DINOv3-L (6.9\,s) & recall $\ge$ 0.9995 & not reached & -- & only exact method \\
 & exact $\ge$ 99.9\% & not reached & -- & only exact method \\
 & exact $\ge$ 99.99\% & not reached & -- & only exact method \\
 & all exact & not reached & -- & only exact method \\
\midrule
Text-to-image (3.4\,s) & recall $\ge$ 0.9995 & not reached & -- & only exact method \\
 & exact $\ge$ 99.9\% & not reached & -- & only exact method \\
 & exact $\ge$ 99.99\% & not reached & -- & only exact method \\
 & all exact & not reached & -- & only exact method \\
\midrule
ImageNet-DINOv3-7B (25.9\,s) & recall $\ge$ 0.9995 & not reached & -- & only exact method \\
 & exact $\ge$ 99.9\% & not reached & -- & only exact method \\
 & exact $\ge$ 99.99\% & not reached & -- & only exact method \\
 & all exact & not reached & -- & only exact method \\
\midrule
\bottomrule
\end{tabular}
\end{table*}
\begin{table*}[t]
\centering
\caption{Speedup of \certex\ over IVF-Flat and IVF-PQ with exact reranking at three thresholds (n.r.: not reached).}
\label{A-tab:s-ivf}
\small
\setlength{\tabcolsep}{4pt}
\begin{tabular}{lrrrrrrrr}
\toprule
Data & Flat (s) & PQ (s) & \multicolumn{3}{c}{IVF-Flat: R 0.9995, E 99.9, E 100} & \multicolumn{3}{c}{IVF-PQ: R 0.9995, E 99.9, E 100} \\
\midrule
SIFT1M & 0.2 & 0.7 & n.r. & 27.6$\times$ & 30.9$\times$ & n.r. & 10.4$\times$ & 24.5$\times$ \\
GIST1M & 0.7 & 2.5 & n.r. & 55.4$\times$ & 55.4$\times$ & n.r. & n.r. & n.r. \\
GloVe & 0.3 & 0.6 & 27.2$\times$ & 27.2$\times$ & n.r. & 41.5$\times$ & 41.5$\times$ & 41.5$\times$ \\
BIGANN-10M & 1.2 & 2.1 & 41.9$\times$ & 41.9$\times$ & 41.9$\times$ & 3.7$\times$ & 3.7$\times$ & 8.4$\times$ \\
Deep-image & 1.1 & 1.7 & 31.9$\times$ & 32.3$\times$ & 32.3$\times$ & 3.0$\times$ & 7.0$\times$ & n.r. \\
ImageNet-CLIP-B & 0.5 & 1.5 & 41.0$\times$ & 41.0$\times$ & 41.0$\times$ & 7.8$\times$ & 18.7$\times$ & n.r. \\
ImageNet-CLIP-L & 0.7 & 2.1 & 56.9$\times$ & 56.9$\times$ & 56.9$\times$ & 23.6$\times$ & 23.6$\times$ & n.r. \\
ImageNet-SigLIP & 0.9 & 2.7 & 74.4$\times$ & 74.4$\times$ & 74.4$\times$ & 52.2$\times$ & 55.0$\times$ & n.r. \\
ImageNet-DINOv2-B & 0.7 & 2.1 & 48.3$\times$ & 48.3$\times$ & 48.3$\times$ & 20.2$\times$ & 40.4$\times$ & 40.4$\times$ \\
ImageNet-DINOv2-L & 0.9 & 2.8 & 50.9$\times$ & 50.9$\times$ & 50.9$\times$ & 32.9$\times$ & 65.7$\times$ & n.r. \\
ImageNet-DINOv3-L & 0.9 & 2.7 & 55.1$\times$ & 55.1$\times$ & n.r. & 75.8$\times$ & 79.5$\times$ & 79.5$\times$ \\
Text-to-image & 0.5 & 1.5 & 40.5$\times$ & 40.5$\times$ & 40.5$\times$ & n.r. & n.r. & n.r. \\
ImageNet-DINOv3-7B & 2.9 & 11.4 & 84.9$\times$ & 84.9$\times$ & n.r. & 124.1$\times$ & 124.1$\times$ & n.r. \\
DataComp images & 9.1 & 14.0 & n.r. & n.r. & n.r. & n.r. & n.r. & n.r. \\
DataComp captions & 5.3 & 9.8 & n.r. & 55.1$\times$ & n.r. & n.r. & n.r. & n.r. \\
\bottomrule
\end{tabular}
\end{table*}

\section{Embeddings and Sensitivity}
\label{A-sec:sens}

\begin{table*}[t]
\centering
\caption{Exact search on ImageNet-1K embeddings, 1.28M vectors and $10^4$ queries (QPS and speedup of \certex).}
\label{A-tab:imagenet}
\small
\setlength{\tabcolsep}{4pt}
\begin{tabular}{lrrrrrr}
\toprule
Encoder & $D$ & $m$ & Surv. & \certex & cuVS fp32 & fp16+rerank \\
\midrule
CLIP ViT-B/16 & 512 & 507 & 167 & 195.1k & 26.1k (7.5$\times$) & 56.8k (3.44$\times$) \\
CLIP ViT-L/14 & 768 & 507 & 139 & 179.6k & 19.7k (9.1$\times$) & 51.1k (3.51$\times$) \\
SigLIP so400m & 1152 & 635 & 212 & 157.7k & 14.4k (10.9$\times$) & 43.7k (3.61$\times$) \\
DINOv2 ViT-B/14 & 768 & 635 & 327 & 152.2k & 19.7k (7.7$\times$) & 51.1k (2.98$\times$) \\
DINOv2 ViT-L/14 & 1024 & 891 & 281 & 121.0k & 15.6k (7.7$\times$) & 46.0k (2.63$\times$) \\
DINOv3 ViT-L/16 & 1024 & 763 & 326 & 131.3k & 15.7k (8.4$\times$) & 45.9k (2.86$\times$) \\
DINOv3 ViT-7B/16 & 4096 & 3579 & 257 & 36.2k & 4.6k (7.9$\times$) & 20.4k (1.77$\times$) \\
CLIP B/16 text queries & 512 & 507 & 185 & 192.5k & 26.0k (7.4$\times$) & 57.0k (3.38$\times$) \\
\bottomrule
\end{tabular}
\end{table*}

Table~\ref{A-tab:imagenet} lists the measurements behind
Figure~\ref{fig:enc}(a). On the seven encoders, \certex\ is 1.8$\times$ to
3.6$\times$ faster than half-precision exhaustive search with exact reranking.
\certex\ is also faster than IVF-RaBitQ~\cite{gao2024rabitq}, which is the best
index on every encoder: by 2.3$\times$ and 2.8$\times$ on the
CLIP~\cite{radford2021clip} encoders, 3.4$\times$ to 6.6$\times$ on
SigLIP~\cite{zhai2023siglip} and DINOv2~\cite{oquab2024dinov2}, 8.5$\times$ on
DINOv3-L, and 10.7$\times$ on DINOv3-7B~\cite{simeoni2025dinov3}. The table also
shows a limit of the width: a width that is too narrow for a dataset overflows
the survivor list, as for DINOv2 with $m=251$ at stride 64. \certex\ detects this
case and reports it instead of returning an incomplete answer, so that the width
can be set safely by measurement.

Table~\ref{A-tab:s-sens} varies the anchor stride and the number of anchors on
SIFT1M and DINOv2. On both datasets, the number of anchors changes the throughput
by less than 10\% beyond ten anchors, because the threshold is the $k$-th exact
distance among the anchors, which the best few sampled vectors already determine. The stride is therefore the only anchor parameter that
matters: small strides suit these 1M-vector datasets, whereas larger strides suit
larger databases (Table~\ref{A-tab:s-scaling}).

\begin{table*}[t]
\centering
\caption{QPS (survivors per query) by anchor stride $s$ and number of anchors; every configuration is exact.}
\label{A-tab:s-sens}
\small
\setlength{\tabcolsep}{4pt}
\begin{tabular}{lrrrrrr}
\toprule
Data & $s$ & 10 & 16 & 32 & 64 & 128 \\
\midrule
SIFT1M & 8 & 716.5k (91) & 713.2k (91) & 714.0k (91) & 710.9k (91) & 704.8k (91) \\
 & 16 & 620.5k (182) & 625.6k (181) & 624.9k (181) & 622.0k (181) & 615.6k (181) \\
 & 64 & 351.1k (724) & 349.3k (721) & 350.6k (721) & 350.4k (721) & 355.1k (721) \\
 & 256 & 116.4k (3055) & 116.7k (3047) & 116.9k (3047) & 120.7k (2907) & 128.1k (2837) \\
\midrule
ImageNet-DINOv2 & 8 & 152.2k (191) & 140.9k (182) & 140.9k (181) & 147.0k (181) & 150.8k (181) \\
 & 16 & 149.7k (344) & 152.4k (327) & 149.9k (327) & 149.1k (327) & 149.4k (327) \\
 & 64 & 91.6k (1288) & 98.1k (1153) & 98.3k (1148) & 98.0k (1147) & 97.5k (1147) \\
 & 256 & 32.0k (10151) & 39.8k (7245) & 40.1k (7132) & 40.1k (7129) & 41.2k (6784) \\
\bottomrule
\end{tabular}
\end{table*}

Table~\ref{A-tab:s-breakdown} lists every timing behind
Figure~\ref{fig:breakdown}(a) and Figure~\ref{fig:breakdown}(b) for one batch
of 2048 queries. In addition to the bars of Figure~\ref{fig:breakdown}(a), the
table reports the GEMM followed by a separate threshold test and the retina pass
in two variants. Specifically, the separate test writes all products to memory
before it compares them with the threshold, whereas the variants with TF32 and
with the full width $m=D$ give the factors of Figure~\ref{fig:breakdown}(b).

\begin{table*}[t]
\centering
\caption{Time in ms per batch of 2048 queries for every pass and search stage.}
\label{A-tab:s-breakdown}
\small
\setlength{\tabcolsep}{4pt}
\begin{tabular}{lrrrrrrrrr}
\toprule
& \multicolumn{3}{c}{exhaustive pass} & \multicolumn{3}{c}{retina pass} & \multicolumn{3}{c}{\certex\ search} \\ \cmidrule(lr){2-4}\cmidrule(lr){5-7}\cmidrule(lr){8-10}
Data & GEMM + top-$k$ & GEMM + test & GEMM & TF32 & $m=D$ & \certex & anchors & fovea & total \\
\midrule
SIFT1M & 37.2 & 21.3 & 4.1 & 2.8 & 3.5 & 2.3 & 1.4 & 0.8 & 5.0 \\
ImageNet-DINOv2 & 60.4 & 40.1 & 18.2 & 16.1 & 12.3 & 10.8 & 1.7 & 1.6 & 14.5 \\
BIGANN-100M & 3600.6 & 2008.1 & 289.1 & 171.4 & 314.5 & 148.3 & 15.8 & 5.7 & 170.5 \\
\bottomrule
\end{tabular}
\end{table*}

\begin{figure}[!htb]
\centering
\includegraphics[width=\linewidth]{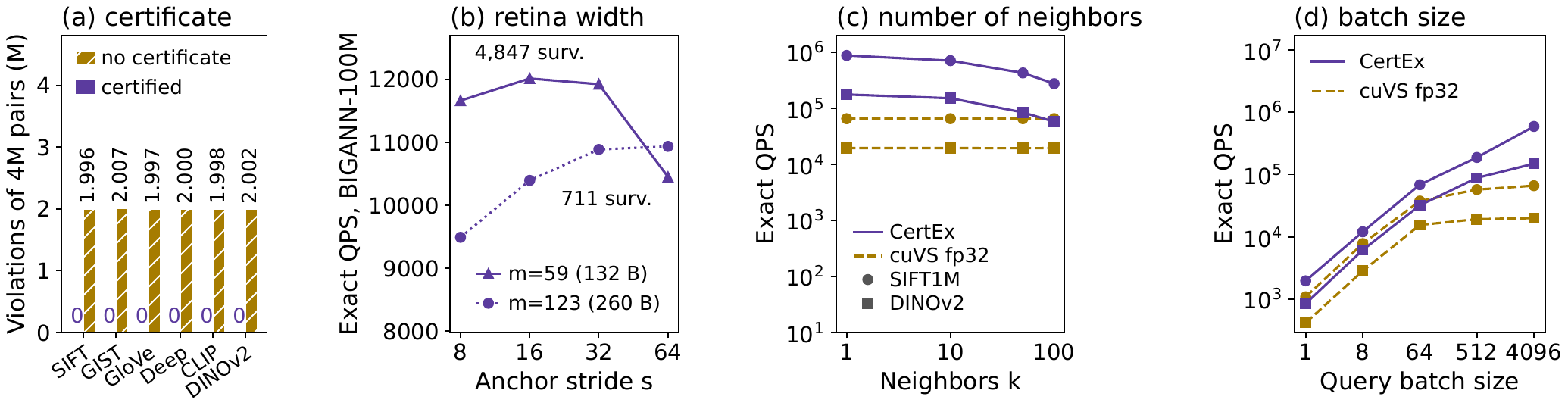}
\caption{Certificate, retina width, number of neighbors, and query batch size.}
\label{A-fig:ablation}
\end{figure}

Figure~\ref{A-fig:ablation} complements Figure~\ref{fig:breakdown}(c) with the
three parameters that affect only the cost of \certex. For the width, Figure~\ref{A-fig:ablation}(b)
shows that on BIGANN-100M the width 59 reaches 12.0k queries per second at strides
16 and 32, where its 4.8k to 7.8k survivors per query cost the fovea little. In
contrast, the width 123 leaves only 92 to 711 survivors and reaches 10.9k queries
per second at stride 64. The two curves cross between strides 32 and 64, because a
larger stride shortens the anchor stage of both widths but multiplies the
survivors of the narrow width, whose bound is looser.

For the number of neighbors, Figure~\ref{A-fig:ablation}(c) shows that with twice as many anchors as
neighbors, \certex\ remains exact for every $k$ from 1 to 100. The throughput of
\certex\ decreases from 885k to 278k queries per second on SIFT1M and from 178k to
58k on DINOv2, so that the speedup over exhaustive search ranges from
13.5$\times$ and 9.0$\times$ for $k=1$ to 4.2$\times$ and 3.0$\times$ for
$k=100$. For the batch size, Figure~\ref{A-fig:ablation}(d) covers single queries and small batches.
For batches of up to 64 queries, \certex\ replays the whole search as one CUDA
graph, whose shapes depend only on the batch size and a fixed capacity of the
survivor list. A batch that exceeds this capacity falls back to the general path.
A single query thus takes 0.51 ms on SIFT1M and 1.18 ms on DINOv2, which is
1.8$\times$ and 2.0$\times$ faster than exhaustive search. For larger batches, the
advantage grows to 8.9$\times$ and 7.5$\times$ at 4096 queries per batch.

\FloatBarrier
\FloatBarrier
\section{Duplicate Detection}
\label{A-sec:join-supp}

\begin{table}[t]
\centering
\caption{Duplicate pairs and join time (s) on DINOv2.}
\label{A-tab:join}
\footnotesize
\setlength{\tabcolsep}{3pt}
\begin{tabular}{lrrrrr}
\toprule
$\tau$ & Pairs & \certex & fp32 join & in 10-NN & in RaBitQ 10-NN \\
\midrule
0.90 & 3,997,988 & 2.5 & 56.9 & 984,288 & 984,189 \\
0.95 & 123,602 & 1.9 & 56.9 & 81,222 & 81,219 \\
0.98 & 7,783 & 1.5 & 56.9 & 7,783 & 7,783 \\
\bottomrule
\end{tabular}
\end{table}

Table~\ref{A-tab:join} lists the exact self-join of the
DINOv2~\cite{oquab2024dinov2} embeddings. At every threshold, the self-join
returns exactly the pairs of an exhaustive fp32 join in 1.5 to 2.5 seconds. The
table also lists the pairs that an exact 10-NN graph and an
IVF-RaBitQ~\cite{gao2024rabitq} 10-NN graph contain. In contrast to these graphs,
the exact self-join recovers at $\tau=0.90$ the many semantic duplicates beyond
the tenth neighbor. The self-join also finishes before IVF-RaBitQ has built its
graph, which takes 9.5 seconds.

Table~\ref{A-tab:semdedup} compares the exact self-join with the cluster restriction
of SemDeDup~\cite{abbas2023semdedup}, which compares only the pairs inside each
k-means cluster to avoid the quadratic join. With one cluster per ImageNet class,
the restriction misses only 0.11\% to 0.30\% of all pairs. However, these missed
pairs concentrate in a few classes: the 10\% of classes with the most misses hold
85\% to 100\% of them, so that the worst classes lose up to 26\% of their pairs.
At web scale, the exact self-join finds all 31.4 billion pairs among 1.97M of the
12.79M DataComp images~\cite{gadre2023datacomp} at $\tau=0.95$. These pairs
include the duplicates of 41{,}875 to 56{,}416 images whose partners all lie in
other clusters and, at $\tau=0.98$, 6{,}660 to 8{,}557 copies of the same
photograph.

\begin{table}[t]
\centering
\caption{Duplicates that the clusters of SemDeDup miss, relative to the exact self-join of \certex.}
\label{A-tab:semdedup}
\footnotesize
\setlength{\tabcolsep}{3pt}
\begin{tabular}{lrrrrrrr}
\toprule
& & \multicolumn{3}{c}{Missed pairs (\%)} & \multicolumn{3}{c}{Undetectable images} \\ \cmidrule(lr){3-5}\cmidrule(lr){6-8} Data & Clusters & 0.90 & 0.95 & 0.98 & 0.90 & 0.95 & 0.98 \\
\midrule
ImageNet & 1,000 & 0.17 & 0.11 & 0.30 & 1,733 & 204 & 40 \\
 & 10,000 & 29.98 & 10.90 & 1.02 & 12,008 & 1,983 & 107 \\
 & 50,000 & 66.37 & 33.09 & 1.12 & 21,507 & 3,658 & 82 \\
\midrule
DataComp & 1,500 & -- & 2.45 & 1.06 & -- & 41,875 & 6,660 \\
 & 10,000 & -- & 4.49 & 1.44 & -- & 48,517 & 7,984 \\
 & 50,000 & -- & 4.96 & 1.68 & -- & 56,416 & 8,557 \\
\bottomrule
\end{tabular}
\end{table}

\begin{figure}[!htb]
\centering
\includegraphics[width=\linewidth]{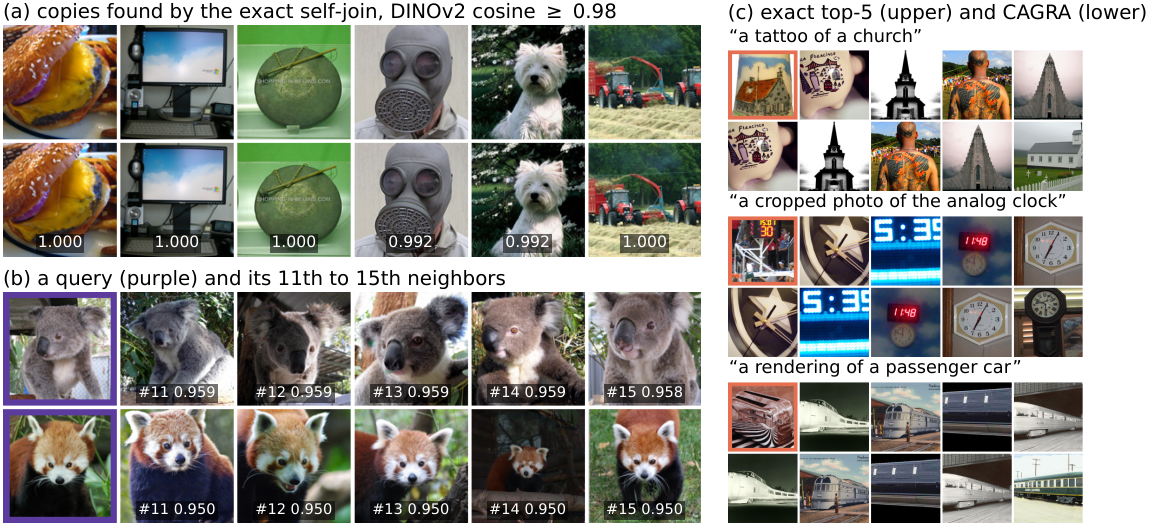}
\caption{Real ImageNet examples: copies, semantic duplicates beyond the tenth neighbor, and text queries.}
\label{A-fig:qual}
\end{figure}

Figure~\ref{A-fig:qual} extends Figure~\ref{fig:qual} with more examples. In Figure~\ref{A-fig:qual}(a),
the pairs at a cosine similarity of 0.98 and above are copies of the same
photograph inside the ImageNet~\cite{deng2009imagenet} training set. Below this
level, the pairs are semantic duplicates. For instance, a query such as the koala
or the red panda of Figure~\ref{A-fig:qual}(b) has hundreds of them, so that its 11th to 15th neighbors
lie beyond any 10-NN graph. For the text queries of Figure~\ref{A-fig:qual}(c), \certex\ returns the exact
neighbors of every prompt, whereas the most accurate CAGRA~\cite{ootomo2024cagra}
configuration loses an exact top-5 neighbor for 108 of the 10{,}000 prompts.

\FloatBarrier
\section{Exhaustive Search in Practice}
\label{A-sec:tools}

\begin{figure}[!htb]
\centering
\includegraphics[width=\linewidth]{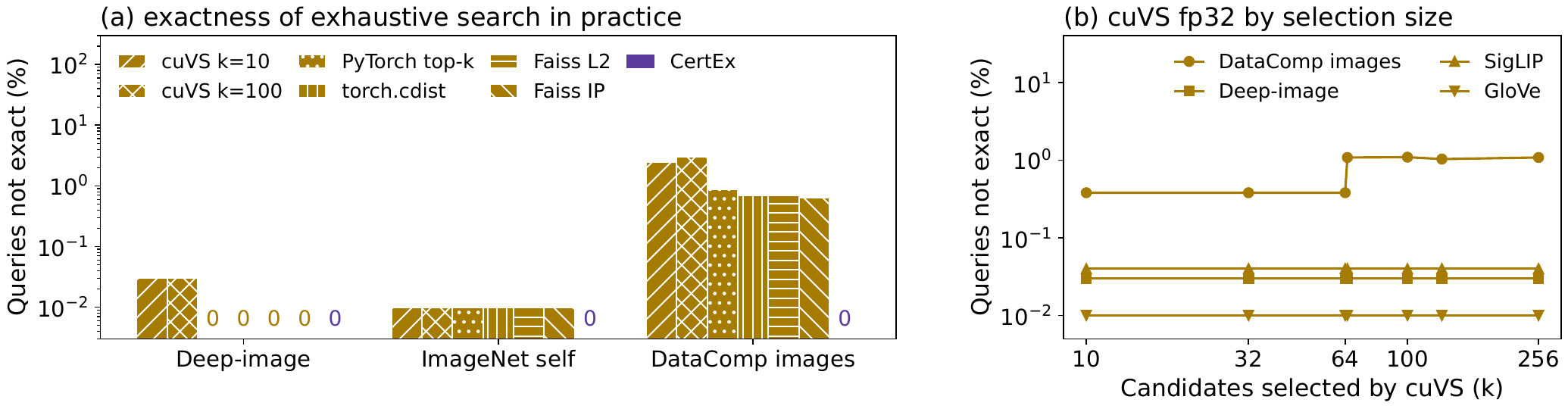}
\caption{Inexact queries of the fp32 tools of practice, and the exactness of cuVS against its selection size.}
\label{A-fig:tools}
\end{figure}

Figure~\ref{A-fig:tools}(a) compares \certex\ with the six fp32 tools that practice
uses to compute exact neighbors, against the fp64 true neighbors. Specifically,
the tools are the brute-force search of cuVS~\cite{nvidia2025cuvs} at $k=10$ and
at its faster selection with $k=100$, an inner-product matrix with top-$k$
selection in PyTorch~\cite{paszke2019pytorch}, \texttt{torch.cdist} with top-$k$
selection, and the flat L2 and inner-product indexes of
Faiss~\cite{johnson2021faiss}. Among them, the PyTorch variant is the evaluation
idiom of CLIP-style code~\cite{radford2021clip}. Of all these methods, only
\certex\ is exact on all three datasets. On the 12.79M DataComp
images~\cite{gadre2023datacomp}, the fp32 tools differ from the exact neighbors
for 0.64\% to 3.0\% of the queries. These differences persist at a relative
tolerance of $10^{-5}$. The differences are therefore not near-ties but neighbors
that the cancellation of the expanded distance displaces among near-duplicate web
images. Smaller differences occur on the other datasets: on Deep-image, cuVS
differs for 0.03\% of the queries. Similarly, on 10{,}000 ImageNet training images
queried against the training set, every fp32 tool differs for one query.

Figure~\ref{A-fig:tools}(b) shows that the fp32 error lies in the distances rather
than in the selection. Specifically, the fraction of inexact queries of cuVS is
flat in $k$ on Deep-image, SigLIP, and GloVe. Only on DataComp does the fraction
rise, from 0.38\% at $k\le64$ to 1.1\% for the faster selection above $k=64$.
Table~\ref{A-tab:s-batch} shows that the fraction also depends on the number of
queries per call. For instance, on DataComp, cuVS answers 2.5\% to 2.7\% of the
queries inexactly in calls of 100 to 2048 queries but only 0.38\% in calls of
10{,}000 queries, because cuVS chooses a different matrix multiplication routine
for each batch shape. The speed comparisons of Section~\ref{sec:eval} use calls of 10{,}000
queries. In contrast, the other datasets do not change with the call size.

\begin{table*}[t]
\centering
\caption{Inexact queries (\%) of cuVS fp32 exhaustive search by the number of queries per call.}
\label{A-tab:s-batch}
\small
\setlength{\tabcolsep}{5pt}
\begin{tabular}{lrrrrrrrr}
\toprule
& \multicolumn{4}{c}{$k=10$, queries per call} & \multicolumn{4}{c}{$k=100$, queries per call} \\ \cmidrule(lr){2-5}\cmidrule(lr){6-9} Data & 100 & 1,000 & 2,048 & 10,000 & 100 & 1,000 & 2,048 & 10,000 \\
\midrule
DataComp images & 2.67 & 2.51 & 2.59 & 0.38 & 2.97 & 3.02 & 3.02 & 1.09 \\
Deep-image & 0.03 & 0.03 & 0.03 & 0.03 & 0.03 & 0.03 & 0.03 & 0.03 \\
ImageNet-SigLIP & 0.04 & 0.04 & 0.04 & 0.04 & 0.04 & 0.04 & 0.04 & 0.04 \\
GloVe & 0.01 & 0.01 & 0.01 & 0.01 & 0.01 & 0.01 & 0.01 & 0.01 \\
\bottomrule
\end{tabular}
\end{table*}

The ground truth that public benchmarks ship is correct at depth 10 for all six
benchmarks that we audit: SIFT1M, GIST1M, GloVe, Deep-image, BIGANN-1M, and
BIGANN-10M~\cite{jegou2011pq,aumuller2020annbench,jegou2011bigann}. At depth 100,
however, one GIST query and three Deep-image queries are wrong. Exact references
therefore matter most for new embedding datasets rather than for these
established benchmarks, because near duplicates are common in new datasets
and the exactness of an fp32 reference depends on how the tool is called. For such
datasets, \certex\ computes the references exactly. At depth 100, \certex\ is
also 3.0$\times$ to 4.2$\times$ faster than the fastest cuVS fp32 configuration on
SIFT1M and DINOv2 (Figure~\ref{A-fig:ablation}(c)).

\FloatBarrier


{\small
\begin{thebibliography}{75}
\providecommand{\natexlab}[1]{#1}
\providecommand{\url}[1]{\texttt{#1}}
\expandafter\ifx\csname urlstyle\endcsname\relax
  \providecommand{\doi}[1]{doi: #1}\else
  \providecommand{\doi}{doi: \begingroup \urlstyle{rm}\Url}\fi

\bibitem[Abbas et~al.(2023)Abbas, Tirumala, Simig, Ganguli, and
  Morcos]{abbas2023semdedup}
Amro Abbas, Kushal Tirumala, Dániel Simig, Surya Ganguli, and Ari~S. Morcos.
\newblock {SemDeDup: Data-Efficient Learning at Web-Scale through Semantic
  Deduplication}.
\newblock In \emph{ICLR 2023 Workshop on Multimodal Representation Learning
  (MRL)}, 2023.
\newblock \href{https://iclr.cc/virtual/2023/13806}{[URL]}.

\bibitem[Abid and Timofte(2026)]{abid2026gazemo}
Mian Muhammad~Naeem Abid and Radu Timofte.
\newblock Gazemo: Mimicking human saccades via foveal-peripheral feature
  modeling for lightweight semantic segmentation.
\newblock In \emph{Proceedings of the IEEE/CVF Conference on Computer Vision
  and Pattern Recognition (CVPR) Findings}, pages 7293--7303, 2026.
\newblock \href{https://doi.org/10.1109/CVPRF71909.2026.00700}{[URL]}.

\bibitem[Arandjelovic et~al.(2016)Arandjelovic, Gronat, Torii, Pajdla, and
  Sivic]{arandjelovic2016netvlad}
Relja Arandjelovic, Petr Gronat, Akihiko Torii, Tomas Pajdla, and Josef Sivic.
\newblock {NetVLAD: CNN Architecture for Weakly Supervised Place Recognition}.
\newblock In \emph{2016 IEEE Conference on Computer Vision and Pattern
  Recognition (CVPR)}, pages 5297--5307, 2016.
\newblock \href{https://doi.org/10.1109/cvpr.2016.572}{[URL]}.

\bibitem[Asai et~al.(2024)Asai, Wu, Wang, Sil, and Hajishirzi]{asai2024selfrag}
Akari Asai, Zeqiu Wu, Yizhong Wang, Avirup Sil, and Hannaneh Hajishirzi.
\newblock {Self-RAG: Learning to Retrieve, Generate, and Critique through
  Self-Reflection}.
\newblock In \emph{International Conference on Learning Representations
  (ICLR)}, 2024.
\newblock \href{https://openreview.net/forum?id=hSyW5go0v8}{[URL]}.

\bibitem[Aumüller et~al.(2020)Aumüller, Bernhardsson, and
  Faithfull]{aumuller2020annbench}
Martin Aumüller, Erik Bernhardsson, and Alexander Faithfull.
\newblock {ANN-Benchmarks: A benchmarking tool for approximate nearest neighbor
  algorithms}.
\newblock \emph{Information Systems}, 87:\penalty0 101374, 2020.
\newblock \href{https://doi.org/10.1016/j.is.2019.02.006}{[URL]}.

\bibitem[Babenko and Lempitsky(2012)]{babenko2012imi}
A. Babenko and V. Lempitsky.
\newblock {The inverted multi-index}.
\newblock In \emph{2012 IEEE Conference on Computer Vision and Pattern
  Recognition}, pages 3069--3076, 2012.
\newblock \href{https://doi.org/10.1109/CVPR.2012.6248038}{[URL]}.

\bibitem[Bentley(1975)]{bentley1975kdtree}
Jon~Louis Bentley.
\newblock {Multidimensional binary search trees used for associative
  searching}.
\newblock \emph{Communications of the ACM}, 18\penalty0 (9):\penalty0 509--517,
  1975.
\newblock \href{https://doi.org/10.1145/361002.361007}{[URL]}.

\bibitem[Beyer et~al.(1999)Beyer, Goldstein, Ramakrishnan, and
  Shaft]{beyer1999meaningful}
Kevin Beyer, Jonathan Goldstein, Raghu Ramakrishnan, and Uri Shaft.
\newblock {When Is ``Nearest Neighbor'' Meaningful?}
\newblock In \emph{Lecture Notes in Computer Science}, pages 217--235, 1999.
\newblock \href{https://doi.org/10.1007/3-540-49257-7_15}{[URL]}.

\bibitem[Beygelzimer et~al.(2006)Beygelzimer, Kakade, and
  Langford]{beygelzimer2006cover}
Alina Beygelzimer, Sham Kakade, and John Langford.
\newblock {Cover trees for nearest neighbor}.
\newblock In \emph{Proceedings of the 23rd international conference on Machine
  learning}, pages 97--104, 2006.
\newblock \href{https://doi.org/10.1145/1143844.1143857}{[URL]}.

\bibitem[Blanchard et~al.(2020)Blanchard, Higham, Lopez, Mary, and
  Pranesh]{blanchard2020tensorcore}
Pierre Blanchard, Nicholas~J. Higham, Florent Lopez, Theo Mary, and Srikara
  Pranesh.
\newblock {Mixed Precision Block Fused Multiply-Add: Error Analysis and
  Application to GPU Tensor Cores}.
\newblock \emph{SIAM Journal on Scientific Computing}, 42\penalty0
  (3):\penalty0 C124--C141, 2020.
\newblock \href{https://doi.org/10.1137/19m1289546}{[URL]}.

\bibitem[Borgeaud et~al.(2022)Borgeaud, Mensch, Hoffmann, Cai, Rutherford,
  Millican, van~den Driessche, Lespiau, Damoc, Clark, de~Las~Casas, Guy,
  Menick, Ring, Hennigan, Huang, Maggiore, Jones, Cassirer, Brock, Paganini,
  Irving, Vinyals, Osindero, Simonyan, Rae, Elsen, and
  Sifre]{borgeaud2022retro}
Sebastian Borgeaud, Arthur Mensch, Jordan Hoffmann, Trevor Cai, Eliza
  Rutherford, Katie Millican, George van~den Driessche, Jean-Baptiste Lespiau,
  Bogdan Damoc, Aidan Clark, Diego de Las~Casas, Aurelia Guy, Jacob Menick,
  Roman Ring, Tom Hennigan, Saffron Huang, Loren Maggiore, Chris Jones, Albin
  Cassirer, Andy Brock, Michela Paganini, Geoffrey Irving, Oriol Vinyals, Simon
  Osindero, Karen Simonyan, Jack~W. Rae, Erich Elsen, and Laurent Sifre.
\newblock {Improving Language Models by Retrieving from Trillions of Tokens}.
\newblock In \emph{International Conference on Machine Learning (ICML)}, 2022.
\newblock \href{https://proceedings.mlr.press/v162/borgeaud22a.html}{[URL]}.

\bibitem[Caron et~al.(2021)Caron, Touvron, Misra, Jégou, Mairal, Bojanowski,
  and Joulin]{caron2021dino}
Mathilde Caron, Hugo Touvron, Ishan Misra, Hervé Jégou, Julien Mairal, Piotr
  Bojanowski, and Armand Joulin.
\newblock {Emerging Properties in Self-Supervised Vision Transformers}.
\newblock In \emph{IEEE/CVF International Conference on Computer Vision
  (ICCV)}, 2021.
\newblock \href{https://doi.org/10.1109/ICCV48922.2021.00951}{[URL]}.

\bibitem[Curcio et~al.(1990)Curcio, Sloan, Kalina, and
  Hendrickson]{curcio1990photoreceptor}
Christine~A. Curcio, Kenneth~R. Sloan, Robert~E. Kalina, and Anita~E.
  Hendrickson.
\newblock {Human photoreceptor topography}.
\newblock \emph{Journal of Comparative Neurology}, 292\penalty0 (4):\penalty0
  497--523, 1990.
\newblock \href{https://doi.org/10.1002/cne.902920402}{[URL]}.

\bibitem[Datar et~al.(2004)Datar, Immorlica, Indyk, and
  Mirrokni]{datar2004pstable}
Mayur Datar, Nicole Immorlica, Piotr Indyk, and Vahab~S. Mirrokni.
\newblock {Locality-sensitive hashing scheme based on p-stable distributions}.
\newblock In \emph{Proceedings of the twentieth annual symposium on
  Computational geometry}, pages 253--262, 2004.
\newblock \href{https://doi.org/10.1145/997817.997857}{[URL]}.

\bibitem[Deng et~al.(2009)Deng, Dong, Socher, Li, Li, and
  Fei-Fei]{deng2009imagenet}
Jia Deng, Wei Dong, Richard Socher, Li-Jia Li, Kai Li, and Li Fei-Fei.
\newblock {ImageNet}: A large-scale hierarchical image database.
\newblock In \emph{IEEE Conference on Computer Vision and Pattern Recognition
  (CVPR)}, pages 248--255, 2009.
\newblock \href{https://doi.org/10.1109/CVPR.2009.5206848}{[URL]}.

\bibitem[Dosovitskiy et~al.(2021)Dosovitskiy, Beyer, Kolesnikov, Weissenborn,
  Zhai, Unterthiner, Dehghani, Minderer, Heigold, Gelly, Uszkoreit, and
  Houlsby]{dosovitskiy2021vit}
Alexey Dosovitskiy, Lucas Beyer, Alexander Kolesnikov, Dirk Weissenborn,
  Xiaohua Zhai, Thomas Unterthiner, Mostafa Dehghani, Matthias Minderer, Georg
  Heigold, Sylvain Gelly, Jakob Uszkoreit, and Neil Houlsby.
\newblock {An Image is Worth 16x16 Words: Transformers for Image Recognition at
  Scale}.
\newblock In \emph{International Conference on Learning Representations
  (ICLR)}, 2021.
\newblock \href{https://iclr.cc/virtual/2021/oral/3458}{[URL]}.

\bibitem[Douze et~al.(2021)Douze, Tolias, Pizzi, Papakipos, Chanussot,
  Radenovic, Jenicek, Maximov, Leal-Taixé, Elezi, Chum, and
  Ferrer]{douze2021isc}
Matthijs Douze, Giorgos Tolias, Ed Pizzi, Zoë Papakipos, Lowik Chanussot,
  Filip Radenovic, Tomas Jenicek, Maxim Maximov, Laura Leal-Taixé, Ismail
  Elezi, Ondřej Chum, and Cristian~Canton Ferrer.
\newblock {The 2021 Image Similarity Dataset and Challenge}.
\newblock \emph{arXiv preprint arXiv:2106.09672}, 2021.
\newblock \href{https://arxiv.org/abs/2106.09672}{[URL]}.

\bibitem[Douze et~al.(2026)Douze, Guzhva, Deng, Johnson, Szilvasy, Mazaré,
  Lomeli, Hosseini, and Jégou]{douze2024faisslib}
Matthijs Douze, Alexandr Guzhva, Chengqi Deng, Jeff Johnson, Gergely Szilvasy,
  Pierre-Emmanuel Mazaré, Maria Lomeli, Lucas Hosseini, and Hervé Jégou.
\newblock {The Faiss Library}.
\newblock \emph{IEEE Transactions on Big Data}, 12\penalty0 (2):\penalty0
  346--361, 2026.
\newblock \href{https://doi.org/10.1109/TBDATA.2025.3618474}{[URL]}.

\bibitem[Faloutsos et~al.(1994)Faloutsos, Ranganathan, and
  Manolopoulos]{faloutsos1994gemini}
Christos Faloutsos, M. Ranganathan, and Yannis Manolopoulos.
\newblock {Fast subsequence matching in time-series databases}.
\newblock In \emph{Proceedings of the 1994 ACM SIGMOD international conference
  on Management of data}, pages 419--429, 1994.
\newblock \href{https://doi.org/10.1145/191839.191925}{[URL]}.

\bibitem[Fortune and Van~Wyk(1993)]{fortune1993exact}
Steven Fortune and Christopher~J. Van~Wyk.
\newblock Efficient exact arithmetic for computational geometry.
\newblock In \emph{Proceedings of the Ninth Annual Symposium on Computational
  Geometry (SoCG)}, pages 163--172, 1993.
\newblock \href{https://doi.org/10.1145/160985.161015}{[URL]}.

\bibitem[Gadre et~al.(2023)Gadre, Ilharco, Fang, Hayase, Smyrnis, Nguyen,
  Marten, Wortsman, Ghosh, Zhang, Orgad, Entezari, Daras, Pratt, Ramanujan,
  Bitton, Marathe, Mussmann, Vencu, Cherti, Krishna, Koh, Saukh, Ratner, Song,
  Hajishirzi, Farhadi, Beaumont, Oh, Dimakis, Jitsev, Carmon, Shankar, and
  Schmidt]{gadre2023datacomp}
Samir~Yitzhak Gadre, Gabriel Ilharco, Alex Fang, Jonathan Hayase, Georgios
  Smyrnis, Thao Nguyen, Ryan Marten, Mitchell Wortsman, Dhruba Ghosh, Jieyu
  Zhang, Eyal Orgad, Rahim Entezari, Giannis Daras, Sarah Pratt, Vivek
  Ramanujan, Yonatan Bitton, Kalyani Marathe, Stephen Mussmann, Richard Vencu,
  Mehdi Cherti, Ranjay Krishna, Pang~Wei Koh, Olga Saukh, Alexander Ratner,
  Shuran Song, Hannaneh Hajishirzi, Ali Farhadi, Romain Beaumont, Sewoong Oh,
  Alex Dimakis, Jenia Jitsev, Yair Carmon, Vaishaal Shankar, and Ludwig
  Schmidt.
\newblock {DataComp: In search of the next generation of multimodal datasets}.
\newblock In \emph{Advances in Neural Information Processing Systems
  (NeurIPS)}, 2023.
\newblock \href{https://doi.org/10.52202/075280-1179}{[URL]}.

\bibitem[Gao and Long(2024)]{gao2024rabitq}
Jianyang Gao and Cheng Long.
\newblock {RaBitQ: Quantizing High-Dimensional Vectors with a Theoretical Error
  Bound for Approximate Nearest Neighbor Search}.
\newblock \emph{Proceedings of the ACM on Management of Data (SIGMOD)}, 2024.
\newblock \href{https://doi.org/10.1145/3654970}{[URL]}.

\bibitem[Gao et~al.(2025)Gao, Gou, Xu, Yang, Long, and Wong]{gao2025extrabitq}
Jianyang Gao, Yutong Gou, Yuexuan Xu, Yongyi Yang, Cheng Long, and Raymond
  Chi-Wing Wong.
\newblock {Practical and Asymptotically Optimal Quantization of
  High-Dimensional Vectors in Euclidean Space for Approximate Nearest Neighbor
  Search}.
\newblock \emph{Proceedings of the ACM on Management of Data}, 2025.
\newblock \href{https://doi.org/10.1145/3725413}{[URL]}.

\bibitem[Garcia et~al.(2008)Garcia, Debreuve, and Barlaud]{garcia2008gpuknn}
Vincent Garcia, Eric Debreuve, and Michel Barlaud.
\newblock {Fast k nearest neighbor search using GPU}.
\newblock In \emph{2008 IEEE Computer Society Conference on Computer Vision and
  Pattern Recognition Workshops}, pages 1--6, 2008.
\newblock \href{https://doi.org/10.1109/cvprw.2008.4563100}{[URL]}.

\bibitem[Ge et~al.(2013)Ge, He, Ke, and Sun]{ge2013opq}
Tiezheng Ge, Kaiming He, Qifa Ke, and Jian Sun.
\newblock {Optimized Product Quantization for Approximate Nearest Neighbor
  Search}.
\newblock In \emph{IEEE Conference on Computer Vision and Pattern Recognition
  (CVPR)}, pages 2946--2953, 2013.
\newblock \href{https://doi.org/10.1109/CVPR.2013.379}{[URL]}.

\bibitem[Groh et~al.(2023)Groh, Ruppert, Wieschollek, and Lensch]{groh2023ggnn}
Fabian Groh, Lukas Ruppert, Patrick Wieschollek, and Hendrik P.~A. Lensch.
\newblock {GGNN: Graph-Based GPU Nearest Neighbor Search}.
\newblock \emph{IEEE Transactions on Big Data}, 9\penalty0 (1):\penalty0
  267--279, 2023.
\newblock \href{https://doi.org/10.1109/tbdata.2022.3161156}{[URL]}.

\bibitem[Guenter et~al.(2012)Guenter, Finch, Drucker, Tan, and
  Snyder]{guenter2012foveated}
Brian Guenter, Mark Finch, Steven Drucker, Desney Tan, and John Snyder.
\newblock {Foveated 3D graphics}.
\newblock \emph{ACM Transactions on Graphics}, 31\penalty0 (6):\penalty0 1--10,
  2012.
\newblock \href{https://doi.org/10.1145/2366145.2366183}{[URL]}.

\bibitem[Guo et~al.(2020)Guo, Sun, Lindgren, Geng, Simcha, Chern, and
  Kumar]{guo2020scann}
Ruiqi Guo, Philip Sun, Erik Lindgren, Quan Geng, David Simcha, Felix Chern, and
  Sanjiv Kumar.
\newblock {Accelerating Large-Scale Inference with Anisotropic Vector
  Quantization}.
\newblock In \emph{International Conference on Machine Learning (ICML)}, 2020.
\newblock \href{https://proceedings.mlr.press/v119/guo20h.html}{[URL]}.

\bibitem[Higham(2002)]{higham2002accuracy}
Nicholas~J. Higham.
\newblock \emph{{Accuracy and Stability of Numerical Algorithms}}.
\newblock Society for Industrial and Applied Mathematics, 2002.
\newblock \href{https://doi.org/10.1137/1.9780898718027}{[URL]}.

\bibitem[Indyk and Motwani(1998)]{indyk1998lsh}
Piotr Indyk and Rajeev Motwani.
\newblock {Approximate Nearest Neighbors: Towards Removing the Curse of
  Dimensionality}.
\newblock In \emph{Proceedings of the thirtieth annual ACM symposium on Theory
  of computing - STOC '98}, pages 604--613, 1998.
\newblock \href{https://doi.org/10.1145/276698.276876}{[URL]}.

\bibitem[Itti et~al.(1998)Itti, Koch, and Niebur]{itti1998saliency}
L. Itti, C. Koch, and E. Niebur.
\newblock {A model of saliency-based visual attention for rapid scene
  analysis}.
\newblock \emph{IEEE Transactions on Pattern Analysis and Machine
  Intelligence}, 20\penalty0 (11):\penalty0 1254--1259, 1998.
\newblock \href{https://doi.org/10.1109/34.730558}{[URL]}.

\bibitem[Jegou et~al.(2008)Jegou, Douze, and Schmid]{jegou2008hamming}
Herve Jegou, Matthijs Douze, and Cordelia Schmid.
\newblock {Hamming Embedding and Weak Geometric Consistency for Large Scale
  Image Search}.
\newblock In \emph{European Conference on Computer Vision (ECCV)}, pages
  304--317, 2008.
\newblock \href{https://doi.org/10.1007/978-3-540-88682-2_24}{[URL]}.

\bibitem[Jegou et~al.(2011)Jegou, Tavenard, Douze, and
  Amsaleg]{jegou2011bigann}
Herve Jegou, Romain Tavenard, Matthijs Douze, and Laurent Amsaleg.
\newblock {Searching in one billion vectors: Re-rank with source coding}.
\newblock In \emph{2011 IEEE International Conference on Acoustics, Speech and
  Signal Processing (ICASSP)}, pages 861--864, 2011.
\newblock \href{https://doi.org/10.1109/ICASSP.2011.5946540}{[URL]}.

\bibitem[Jim{\'e}nez~Guti{\'e}rrez et~al.(2024)Jim{\'e}nez~Guti{\'e}rrez, Shu,
  Gu, Yasunaga, and Su]{gutierrez2024hipporag}
Bernal Jim{\'e}nez~Guti{\'e}rrez, Yiheng Shu, Yu Gu, Michihiro Yasunaga, and Yu
  Su.
\newblock {HippoRAG: Neurobiologically Inspired Long-Term Memory for Large
  Language Models}.
\newblock In \emph{Advances in Neural Information Processing Systems
  (NeurIPS)}, 2024.
\newblock \href{https://openreview.net/forum?id=hkujvAPVsg}{[URL]}.

\bibitem[Johnson et~al.(2021)Johnson, Douze, and Jegou]{johnson2021faiss}
Jeff Johnson, Matthijs Douze, and Herve Jegou.
\newblock {Billion-Scale Similarity Search with GPUs}.
\newblock \emph{IEEE Transactions on Big Data}, 7\penalty0 (3):\penalty0
  535--547, 2021.
\newblock \href{https://doi.org/10.1109/TBDATA.2019.2921572}{[URL]}.

\bibitem[Johnson and Lindenstrauss(1984)]{johnson1984jl}
William~B. Johnson and Joram Lindenstrauss.
\newblock {Extensions of Lipschitz mappings into a Hilbert space}.
\newblock \emph{Contemporary Mathematics}, pages 189--206, 1984.
\newblock \href{https://doi.org/10.1090/conm/026/737400}{[URL]}.

\bibitem[Jégou et~al.(2011)Jégou, Douze, and Schmid]{jegou2011pq}
Hervé Jégou, Matthijs Douze, and Cordelia Schmid.
\newblock {Product Quantization for Nearest Neighbor Search}.
\newblock \emph{IEEE Transactions on Pattern Analysis and Machine
  Intelligence}, 33\penalty0 (1):\penalty0 117--128, 2011.
\newblock \href{https://doi.org/10.1109/TPAMI.2010.57}{[URL]}.

\bibitem[Khandelwal et~al.(2020)Khandelwal, Levy, Jurafsky, Zettlemoyer, and
  Lewis]{khandelwal2020knnlm}
Urvashi Khandelwal, Omer Levy, Dan Jurafsky, Luke Zettlemoyer, and Mike Lewis.
\newblock {Generalization through Memorization: Nearest Neighbor Language
  Models}.
\newblock In \emph{International Conference on Learning Representations
  (ICLR)}, 2020.
\newblock \href{https://iclr.cc/virtual/2020/poster/1815}{[URL]}.

\bibitem[Lewis et~al.(2020)Lewis, Perez, Piktus, Petroni, Karpukhin, Goyal,
  K{\"u}ttler, Lewis, Yih, Rockt{\"a}schel, Riedel, and Kiela]{lewis2020rag}
Patrick Lewis, Ethan Perez, Aleksandra Piktus, Fabio Petroni, Vladimir
  Karpukhin, Naman Goyal, Heinrich K{\"u}ttler, Mike Lewis, Wen-tau Yih, Tim
  Rockt{\"a}schel, Sebastian Riedel, and Douwe Kiela.
\newblock {Retrieval-Augmented Generation for Knowledge-Intensive NLP Tasks}.
\newblock In \emph{Advances in Neural Information Processing Systems
  (NeurIPS)}, 2020.
\newblock
  \href{https://proceedings.neurips.cc/paper/2020/hash/6b493230205f780e1bc26945df7481e5-Abstract.html}{[URL]}.

\bibitem[Liang et~al.(2022)Liang, Zhang, Kwon, Yeung, and
  Zou]{liang2022mindgap}
Weixin Liang, Yuhui Zhang, Yongchan Kwon, Serena Yeung, and James Zou.
\newblock {Mind the Gap: Understanding the Modality Gap in Multi-modal
  Contrastive Representation Learning}.
\newblock In \emph{Advances in Neural Information Processing Systems
  (NeurIPS)}, 2022.
\newblock
  \href{https://proceedings.neurips.cc/paper_files/paper/2022/hash/702f4db7543a7432431df588d57bc7c9-Abstract-Conference.html}{[URL]}.

\bibitem[Long et~al.(2022)Long, Yin, Ajanthan, Nguyen, Purkait, Garg, Blair,
  Shen, and van~den Hengel]{long2022rac}
Alexander Long, Wei Yin, Thalaiyasingam Ajanthan, Vu Nguyen, Pulak Purkait,
  Ravi Garg, Alan Blair, Chunhua Shen, and Anton van~den Hengel.
\newblock {Retrieval Augmented Classification for Long-Tail Visual
  Recognition}.
\newblock In \emph{2022 IEEE/CVF Conference on Computer Vision and Pattern
  Recognition (CVPR)}, pages 6949--6959, 2022.
\newblock \href{https://doi.org/10.1109/cvpr52688.2022.00683}{[URL]}.

\bibitem[Lowe(2004)]{lowe2004sift}
David~G. Lowe.
\newblock {Distinctive Image Features from Scale-Invariant Keypoints}.
\newblock \emph{International Journal of Computer Vision}, 60\penalty0
  (2):\penalty0 91--110, 2004.
\newblock \href{https://doi.org/10.1023/B:VISI.0000029664.99615.94}{[URL]}.

\bibitem[Malkov and Yashunin(2020)]{malkov2020hnsw}
Yu~A. Malkov and D.~A. Yashunin.
\newblock {Efficient and Robust Approximate Nearest Neighbor Search Using
  Hierarchical Navigable Small World Graphs}.
\newblock \emph{IEEE Transactions on Pattern Analysis and Machine
  Intelligence}, 42\penalty0 (4):\penalty0 824--836, 2020.
\newblock \href{https://doi.org/10.1109/TPAMI.2018.2889473}{[URL]}.

\bibitem[Micikevicius et~al.(2018)Micikevicius, Narang, Alben, Diamos, Elsen,
  Garcia, Ginsburg, Houston, Kuchaiev, Venkatesh, and
  Wu]{micikevicius2018mixed}
Paulius Micikevicius, Sharan Narang, Jonah Alben, Gregory Diamos, Erich Elsen,
  David Garcia, Boris Ginsburg, Michael Houston, Oleksii Kuchaiev, Ganesh
  Venkatesh, and Hao Wu.
\newblock {Mixed Precision Training}.
\newblock In \emph{International Conference on Learning Representations
  (ICLR)}, 2018.
\newblock \href{https://iclr.cc/virtual/2018/poster/288}{[URL]}.

\bibitem[Mnih et~al.(2014)Mnih, Heess, Graves, and Kavukcuoglu]{mnih2014ram}
Volodymyr Mnih, Nicolas Heess, Alex Graves, and Koray Kavukcuoglu.
\newblock {Recurrent Models of Visual Attention}.
\newblock In \emph{Advances in Neural Information Processing Systems
  (NeurIPS)}, 2014.
\newblock
  \href{https://papers.nips.cc/paper/5542-recurrent-models-of-visual-attention}{[URL]}.

\bibitem[Muja and Lowe(2014)]{muja2014flann}
Marius Muja and David~G. Lowe.
\newblock {Scalable Nearest Neighbor Algorithms for High Dimensional Data}.
\newblock \emph{IEEE Transactions on Pattern Analysis and Machine
  Intelligence}, 36\penalty0 (11):\penalty0 2227--2240, 2014.
\newblock \href{https://doi.org/10.1109/tpami.2014.2321376}{[URL]}.

\bibitem[{NVIDIA}(2026)]{nvidia2025cuvs}
{NVIDIA}.
\newblock {cuVS}: Vector search and clustering on the {GPU}.
\newblock \url{https://github.com/NVIDIA/cuvs}, 2026.
\newblock \href{https://github.com/NVIDIA/cuvs}{[URL]}.
\newblock Version 26.10.

\bibitem[Oliva and Torralba(2001)]{oliva2001gist}
Aude Oliva and Antonio Torralba.
\newblock {Modeling the Shape of the Scene: A Holistic Representation of the
  Spatial Envelope}.
\newblock \emph{International Journal of Computer Vision}, 42\penalty0
  (3):\penalty0 145--175, 2001.
\newblock \href{https://doi.org/10.1023/A:1011139631724}{[URL]}.

\bibitem[Ootomo et~al.(2024)Ootomo, Naruse, Nolet, Wang, Feher, and
  Wang]{ootomo2024cagra}
Hiroyuki Ootomo, Akira Naruse, Corey Nolet, Ray Wang, Tamas Feher, and Yong
  Wang.
\newblock {CAGRA: Highly Parallel Graph Construction and Approximate Nearest
  Neighbor Search for GPUs}.
\newblock In \emph{IEEE International Conference on Data Engineering (ICDE)},
  2024.
\newblock \href{https://doi.org/10.1109/ICDE60146.2024.00323}{[URL]}.

\bibitem[Oquab et~al.(2024)Oquab, Darcet, Moutakanni, Vo, Szafraniec, Khalidov,
  Fernandez, Haziza, Massa, El-Nouby, Assran, Ballas, Galuba, Howes, Huang, Li,
  Misra, Rabbat, Sharma, Synnaeve, Xu, Jegou, Mairal, Labatut, Joulin, and
  Bojanowski]{oquab2024dinov2}
Maxime Oquab, Timothée Darcet, Théo Moutakanni, Huy Vo, Marc Szafraniec,
  Vasil Khalidov, Pierre Fernandez, Daniel Haziza, Francisco Massa, Alaaeldin
  El-Nouby, Mahmoud Assran, Nicolas Ballas, Wojciech Galuba, Russell Howes,
  Po-Yao Huang, Shang-Wen Li, Ishan Misra, Michael Rabbat, Vasu Sharma, Gabriel
  Synnaeve, Hu Xu, Hervé Jegou, Julien Mairal, Patrick Labatut, Armand Joulin,
  and Piotr Bojanowski.
\newblock {DINOv2: Learning Robust Visual Features without Supervision}.
\newblock \emph{Transactions on Machine Learning Research}, 2024.
\newblock \href{https://openreview.net/forum?id=a68SUt6zFt}{[URL]}.

\bibitem[Paszke et~al.(2019)Paszke, Gross, Massa, Lerer, Bradbury, Chanan,
  Killeen, Lin, Gimelshein, Antiga, Desmaison, Köpf, Yang, DeVito, Raison,
  Tejani, Chilamkurthy, Steiner, Fang, Bai, and Chintala]{paszke2019pytorch}
Adam Paszke, Sam Gross, Francisco Massa, Adam Lerer, James Bradbury, Gregory
  Chanan, Trevor Killeen, Zeming Lin, Natalia Gimelshein, Luca Antiga, Alban
  Desmaison, Andreas Köpf, Edward Yang, Zach DeVito, Martin Raison, Alykhan
  Tejani, Sasank Chilamkurthy, Benoit Steiner, Lu Fang, Junjie Bai, and Soumith
  Chintala.
\newblock {PyTorch: An Imperative Style, High-Performance Deep Learning
  Library}.
\newblock In \emph{Advances in Neural Information Processing Systems
  (NeurIPS)}, 2019.
\newblock
  \href{https://proceedings.neurips.cc/paper/2019/hash/bdbca288fee7f92f2bfa9f7012727740-Abstract.html}{[URL]}.

\bibitem[Pennington et~al.(2014)Pennington, Socher, and
  Manning]{pennington2014glove}
Jeffrey Pennington, Richard Socher, and Christopher~D. Manning.
\newblock {GloVe: Global Vectors for Word Representation}.
\newblock In \emph{Proceedings of the Conference on Empirical Methods in
  Natural Language Processing (EMNLP)}, pages 1532--1543, 2014.
\newblock \href{https://aclanthology.org/D14-1162/}{[URL]}.

\bibitem[Philbin et~al.(2007)Philbin, Chum, Isard, Sivic, and
  Zisserman]{philbin2007oxford}
James Philbin, Ondrej Chum, Michael Isard, Josef Sivic, and Andrew Zisserman.
\newblock {Object retrieval with large vocabularies and fast spatial matching}.
\newblock In \emph{2007 IEEE Conference on Computer Vision and Pattern
  Recognition}, pages 1--8, 2007.
\newblock \href{https://doi.org/10.1109/cvpr.2007.383172}{[URL]}.

\bibitem[Pizzi et~al.(2022)Pizzi, Roy, Ravindra, Goyal, and
  Douze]{pizzi2022sscd}
Ed Pizzi, Sreya~Dutta Roy, Sugosh~Nagavara Ravindra, Priya Goyal, and Matthijs
  Douze.
\newblock {A Self-Supervised Descriptor for Image Copy Detection}.
\newblock In \emph{IEEE/CVF Conference on Computer Vision and Pattern
  Recognition (CVPR)}, 2022.
\newblock \href{https://doi.org/10.1109/CVPR52688.2022.01413}{[URL]}.

\bibitem[Radenovic et~al.(2018)Radenovic, Iscen, Tolias, Avrithis, and
  Chum]{radenovic2018roxford}
Filip Radenovic, Ahmet Iscen, Giorgos Tolias, Yannis Avrithis, and Ondrej Chum.
\newblock {Revisiting Oxford and Paris: Large-Scale Image Retrieval
  Benchmarking}.
\newblock In \emph{2018 IEEE/CVF Conference on Computer Vision and Pattern
  Recognition}, pages 5706--5715, 2018.
\newblock \href{https://doi.org/10.1109/cvpr.2018.00598}{[URL]}.

\bibitem[Radford et~al.(2021)Radford, Kim, Hallacy, Ramesh, Goh, Agarwal,
  Sastry, Askell, Mishkin, Clark, Krueger, and Sutskever]{radford2021clip}
Alec Radford, Jong~Wook Kim, Chris Hallacy, Aditya Ramesh, Gabriel Goh,
  Sandhini Agarwal, Girish Sastry, Amanda Askell, Pamela Mishkin, Jack Clark,
  Gretchen Krueger, and Ilya Sutskever.
\newblock {Learning Transferable Visual Models From Natural Language
  Supervision}.
\newblock In \emph{International Conference on Machine Learning (ICML)}, 2021.
\newblock \href{https://proceedings.mlr.press/v139/radford21a.html}{[URL]}.

\bibitem[Russakovsky et~al.(2015)Russakovsky, Deng, Su, Krause, Satheesh, Ma,
  Huang, Karpathy, Khosla, Bernstein, Berg, and Fei-Fei]{russakovsky2015ilsvrc}
Olga Russakovsky, Jia Deng, Hao Su, Jonathan Krause, Sanjeev Satheesh, Sean Ma,
  Zhiheng Huang, Andrej Karpathy, Aditya Khosla, Michael Bernstein,
  Alexander~C. Berg, and Li Fei-Fei.
\newblock {ImageNet Large Scale Visual Recognition Challenge}.
\newblock \emph{International Journal of Computer Vision}, 115\penalty0
  (3):\penalty0 211--252, 2015.
\newblock \href{https://doi.org/10.1007/s11263-015-0816-y}{[URL]}.

\bibitem[Schuhmann et~al.(2022)Schuhmann, Beaumont, Vencu, Gordon, Wightman,
  Cherti, Coombes, Katta, Mullis, Wortsman, Schramowski, Kundurthy, Crowson,
  Schmidt, Kaczmarczyk, and Jitsev]{schuhmann2022laion}
Christoph Schuhmann, Romain Beaumont, Richard Vencu, Cade Gordon, Ross
  Wightman, Mehdi Cherti, Theo Coombes, Aarush Katta, Clayton Mullis, Mitchell
  Wortsman, Patrick Schramowski, Srivatsa Kundurthy, Katherine Crowson, Ludwig
  Schmidt, Robert Kaczmarczyk, and Jenia Jitsev.
\newblock {LAION-5B: An open large-scale dataset for training next generation
  image-text models}.
\newblock In \emph{Advances in Neural Information Processing Systems (NeurIPS)
  Datasets and Benchmarks Track}, 2022.
\newblock
  \href{https://proceedings.neurips.cc/paper_files/paper/2022/hash/a1859debfb3b59d094f3504d5ebb6c25-Abstract-Datasets_and_Benchmarks.html}{[URL]}.

\bibitem[Shewchuk(1997)]{shewchuk1997adaptive}
Jonathan~Richard Shewchuk.
\newblock Adaptive precision floating-point arithmetic and fast robust
  geometric predicates.
\newblock \emph{Discrete \& Computational Geometry}, 18\penalty0 (3):\penalty0
  305--363, 1997.
\newblock \href{https://doi.org/10.1007/PL00009321}{[URL]}.

\bibitem[Simhadri et~al.(2022)Simhadri, Williams, Aumüller, Douze, Babenko,
  Baranchuk, Chen, Hosseini, Krishnaswamy, Srinivasa, Subramanya, and
  Wang]{simhadri2022bigann}
Harsha~Vardhan Simhadri, George Williams, Martin Aumüller, Matthijs Douze,
  Artem Babenko, Dmitry Baranchuk, Qi Chen, Lucas Hosseini, Ravishankar
  Krishnaswamy, Gopal Srinivasa, Suhas~Jayaram Subramanya, and Jingdong Wang.
\newblock {Results of the NeurIPS'21 Challenge on Billion-Scale Approximate
  Nearest Neighbor Search}.
\newblock In \emph{Proceedings of the NeurIPS 2021 Competitions and
  Demonstrations Track}, pages 177--189, 2022.
\newblock \href{https://proceedings.mlr.press/v176/simhadri22a.html}{[URL]}.

\bibitem[Siméoni et~al.(2025)Siméoni, Vo, Seitzer, Baldassarre, Oquab, Jose,
  Khalidov, Szafraniec, Yi, Ramamonjisoa, Massa, Haziza, Wehrstedt, Wang,
  Darcet, Moutakanni, Sentana, Roberts, Vedaldi, Tolan, Brandt, Couprie,
  Mairal, Jégou, Labatut, and Bojanowski]{simeoni2025dinov3}
Oriane Siméoni, Huy~V. Vo, Maximilian Seitzer, Federico Baldassarre, Maxime
  Oquab, Cijo Jose, Vasil Khalidov, Marc Szafraniec, Seungeun Yi, Michaël
  Ramamonjisoa, Francisco Massa, Daniel Haziza, Luca Wehrstedt, Jianyuan Wang,
  Timothée Darcet, Théo Moutakanni, Leonel Sentana, Claire Roberts, Andrea
  Vedaldi, Jamie Tolan, John Brandt, Camille Couprie, Julien Mairal, Hervé
  Jégou, Patrick Labatut, and Piotr Bojanowski.
\newblock {DINOv3}.
\newblock \emph{arXiv preprint arXiv:2508.10104}, 2025.
\newblock \href{https://arxiv.org/abs/2508.10104}{[URL]}.

\bibitem[Sivic and Zisserman(2003)]{sivic2003videogoogle}
Josef Sivic and Andrew Zisserman.
\newblock {Video Google: A Text Retrieval Approach to Object Matching in
  Videos}.
\newblock In \emph{Proceedings Ninth IEEE International Conference on Computer
  Vision}, pages 1470--1477 vol.2, 2003.
\newblock \href{https://doi.org/10.1109/iccv.2003.1238663}{[URL]}.

\bibitem[Subramanya et~al.(2019)Subramanya, Devvrit, Kadekodi, Krishnaswamy,
  and Simhadri]{subramanya2019diskann}
Suhas~Jayaram Subramanya, Devvrit, Rohan Kadekodi, Ravishankar Krishnaswamy,
  and Harsha~Vardhan Simhadri.
\newblock {DiskANN: Fast Accurate Billion-point Nearest Neighbor Search on a
  Single Node}.
\newblock In \emph{Advances in Neural Information Processing Systems
  (NeurIPS)}, 2019.
\newblock
  \href{https://papers.nips.cc/paper_files/paper/2019/hash/09853c7fb1d3f8ee67a61b6bf4a7f8e6-Abstract.html}{[URL]}.

\bibitem[Tillet et~al.(2019)Tillet, Kung, and Cox]{tillet2019triton}
Philippe Tillet, H.~T. Kung, and David Cox.
\newblock {Triton: an intermediate language and compiler for tiled neural
  network computations}.
\newblock In \emph{Proceedings of the 3rd ACM SIGPLAN International Workshop on
  Machine Learning and Programming Languages}, pages 10--19, 2019.
\newblock \href{https://doi.org/10.1145/3315508.3329973}{[URL]}.

\bibitem[Treisman and Gelade(1980)]{treisman1980feature}
Anne~M. Treisman and Garry Gelade.
\newblock {A feature-integration theory of attention}.
\newblock \emph{Cognitive Psychology}, 12\penalty0 (1):\penalty0 97--136, 1980.
\newblock \href{https://doi.org/10.1016/0010-0285(80)90005-5}{[URL]}.

\bibitem[Vidal~Ruiz(1986)]{vidal1986aesa}
Enrique Vidal~Ruiz.
\newblock {An algorithm for finding nearest neighbours in (approximately)
  constant average time}.
\newblock \emph{Pattern Recognition Letters}, 4\penalty0 (3):\penalty0
  145--157, 1986.
\newblock \href{https://doi.org/10.1016/0167-8655(86)90013-9}{[URL]}.

\bibitem[Wang et~al.(2023)Wang, Dong, Cheng, Liu, Yan, Gao, and
  Wei]{wang2023longmem}
Weizhi Wang, Li Dong, Hao Cheng, Xiaodong Liu, Xifeng Yan, Jianfeng Gao, and
  Furu Wei.
\newblock {Augmenting Language Models with Long-Term Memory}.
\newblock In \emph{Advances in Neural Information Processing Systems
  (NeurIPS)}, 2023.
\newblock \href{https://openreview.net/forum?id=BryMFPQ4L6}{[URL]}.

\bibitem[Weber et~al.(1998)Weber, Schek, and Blott]{weber1998vafile}
Roger Weber, Hans-J{\"o}rg Schek, and Stephen Blott.
\newblock {A Quantitative Analysis and Performance Study for Similarity-Search
  Methods in High-Dimensional Spaces}.
\newblock In \emph{Proceedings of the International Conference on Very Large
  Data Bases (VLDB)}, pages 194--205, 1998.
\newblock \href{https://www.vldb.org/conf/1998/p194.pdf}{[URL]}.

\bibitem[Wu et~al.(2022)Wu, Rabe, Hutchins, and Szegedy]{wu2022memorizing}
Yuhuai Wu, Markus~N. Rabe, DeLesley Hutchins, and Christian Szegedy.
\newblock {Memorizing Transformers}.
\newblock In \emph{International Conference on Learning Representations
  (ICLR)}, 2022.
\newblock \href{https://iclr.cc/virtual/2022/poster/6064}{[URL]}.

\bibitem[Wu et~al.(2018)Wu, Xiong, Yu, and Lin]{wu2018instance}
Zhirong Wu, Yuanjun Xiong, Stella Yu, and Dahua Lin.
\newblock {Unsupervised Feature Learning via Non-Parametric Instance-level
  Discrimination}.
\newblock In \emph{IEEE/CVF Conference on Computer Vision and Pattern
  Recognition (CVPR)}, 2018.
\newblock \href{https://doi.org/10.1109/CVPR.2018.00393}{[URL]}.

\bibitem[Xu et~al.(2024)Xu, Ping, Wu, McAfee, Zhu, Liu, Subramanian,
  Bakhturina, Shoeybi, and Catanzaro]{xu2024retrievallong}
Peng Xu, Wei Ping, Xianchao Wu, Lawrence McAfee, Chen Zhu, Zihan Liu, Sandeep
  Subramanian, Evelina Bakhturina, Mohammad Shoeybi, and Bryan Catanzaro.
\newblock {Retrieval Meets Long Context Large Language Models}.
\newblock In \emph{International Conference on Learning Representations
  (ICLR)}, 2024.
\newblock \href{https://openreview.net/forum?id=xw5nxFWMlo}{[URL]}.

\bibitem[Yandex and Lempitsky(2016)]{babenko2016deep1b}
Artem~Babenko Yandex and Victor Lempitsky.
\newblock {Efficient Indexing of Billion-Scale Datasets of Deep Descriptors}.
\newblock In \emph{2016 IEEE Conference on Computer Vision and Pattern
  Recognition (CVPR)}, pages 2055--2063, 2016.
\newblock \href{https://doi.org/10.1109/cvpr.2016.226}{[URL]}.

\bibitem[Zhai et~al.(2023)Zhai, Mustafa, Kolesnikov, and Beyer]{zhai2023siglip}
Xiaohua Zhai, Basil Mustafa, Alexander Kolesnikov, and Lucas Beyer.
\newblock {Sigmoid Loss for Language Image Pre-Training}.
\newblock In \emph{2023 IEEE/CVF International Conference on Computer Vision
  (ICCV)}, pages 11941--11952, 2023.
\newblock \href{https://doi.org/10.1109/iccv51070.2023.01100}{[URL]}.

\bibitem[Zhao(2026)]{zhao2026sivf}
Dongfang Zhao.
\newblock {SIVF: GPU-Resident IVF Index for Streaming Vector Analytics}.
\newblock In \emph{ACM International Symposium on High-Performance Parallel and
  Distributed Computing (HPDC)}, pages 31--44, 2026.
\newblock \href{https://doi.org/10.1145/3806645.3807575}{[URL]}.

\bibitem[Zhao et~al.(2020)Zhao, Tan, and Li]{zhao2020song}
Weijie Zhao, Shulong Tan, and Ping Li.
\newblock {SONG: Approximate Nearest Neighbor Search on GPU}.
\newblock In \emph{2020 IEEE 36th International Conference on Data Engineering
  (ICDE)}, pages 1033--1044, 2020.
\newblock \href{https://doi.org/10.1109/icde48307.2020.00094}{[URL]}.

\end{thebibliography}
}
\end{document}